\theoremstyle{plain}
\newtheorem{theorem}{Theorem}[section]
\newtheorem{lemma}[theorem]{Lemma}
\theoremstyle{definition}
\newtheorem{definition}[theorem]{Definition}
\theoremstyle{remark}
\def\E{\mathbf{E}}
\def\H{\mathbf{H}}
\def\I{\mathbf{I}}
\def\M{\mathbf{M}}
\def\P{\mathbf{P}}
\def\R{\mathbf{R}}
\def\S{\mathbf{S}}
\def\U{\mathbf{U}}
\def\V{\mathbf{V}}
\def\W{\mathbf{W}}
\def\Y{\mathbf{Y}}
\def\Z{\mathbf{Z}}
\def\b{\mathbf{b}}
\def\h{\mathbf{h}}
\def\w{\mathbf{w}}
\def\x{\mathbf{x}}
\def\y{\mathbf{y}}
\def\bTheta{\boldsymbol{\Theta}}
\def\bSigma{\boldsymbol{\Sigma}}
\def\btheta{\boldsymbol{\theta}}
\def\bpsi{\boldsymbol{\psi}}
\def\tr{\textrm{Tr}} %
\def\argmin#1{\underset{#1}{\textrm{argmin}}}
\def\minim#1{\underset{#1}{\textrm{min}}}
\def\Exp{\mathbb{E}}
\def\Expp#1{\Exp\left[#1\right]}
\def\0{\mathbf{0}}
\def\1{\mathbf{1}}
\newcommand{\tomt}[1]{\textcolor{black}{#1}}
\newcommand{\tomtm}[1]{\textcolor{black}{#1}}
\newcommand{\tomtr}[1]{\textcolor{black}{#1}}
\newcommand{\tomr}[1]{\textcolor{black}{#1}}
\icmltitlerunning{Extended Unconstrained Features Model for Exploring Deep Neural Collapse}
\begin{document}

\twocolumn[
\icmltitle{Extended Unconstrained Features Model for Exploring Deep Neural Collapse}

\icmlsetsymbol{equal}{*}

\begin{icmlauthorlist}
\icmlauthor{Tom Tirer}{nyu}
\icmlauthor{Joan Bruna}{nyu,nyu2}
\end{icmlauthorlist}

\icmlaffiliation{nyu}{Center for Data Science, New York University,
New York}
\icmlaffiliation{nyu2}{Courant Institute of Mathematical Sciences, New York University, New York}

\icmlcorrespondingauthor{Tom Tirer}{tirer.tom@gmail.com}
\icmlkeywords{Machine Learning, ICML}

\vskip 0.3in
]

\printAffiliationsAndNotice{}  %

\begin{abstract}
The modern strategy for training deep neural networks for classification tasks includes optimizing the network's weights even after the training error vanishes to further push the training loss toward zero. Recently, a phenomenon termed ``neural collapse" (NC) has been empirically observed in this training procedure. Specifically, it has been shown that the learned features (the output of the penultimate layer) of within-class samples converge to their mean, and the means of different classes exhibit a certain tight frame structure, which is also aligned with the last layer's  %
weights. Recent papers have shown that minimizers with this structure emerge when optimizing a simplified ``unconstrained features model" (UFM) with a regularized cross-entropy loss. In this paper, we further analyze and extend the UFM. First, we study the UFM for the regularized MSE loss, and show that the minimizers' features can %
\tomtr{have a more delicate structure} 
than in the cross-entropy case. This affects also the structure of the weights. Then, we extend the UFM by adding another layer of weights as well as ReLU nonlinearity to the model and generalize our previous results. Finally, we empirically demonstrate the usefulness of our nonlinear extended UFM in modeling the %
NC phenomenon that occurs with practical networks.
\end{abstract}

\section{Introduction}
\label{sec:intro}

Deep neural networks (DNNs) have led to a major improvement in classification tasks \citep{krizhevsky2012imagenet,simonyan2014very,he2016deep,huang2017densely}. 
The modern strategy for training %
these networks
includes optimizing the network's weights even after the training error vanishes to further push the training loss toward zero \citep{hoffer2017train,ma2018power,belkin2019does}.

Recently, a phenomenon termed ``neural collapse" (NC) has been empirically observed 
by \citet{papyan2020prevalence}
for such training with cross-entropy loss. Specifically, via experiments on popular network architectures and datasets, %
\citet{papyan2020prevalence} 
showed four components of the NC:
(NC1) The learned features (the output of the penultimate layer) of within-class samples converge to their mean (i.e., the intraclass variance vanishes);
(NC2) After centering by their global mean, the limiting means of different classes exhibit a simplex equiangular tight frame (ETF) structure (see Definition~\ref{def:simplex_etf}); (NC3) The last layer's (classifier) weights are aligned with this
simplex ETF; 
(NC4) As a result, after such a collapse, the classification is based on the nearest class center in feature space.

The empirical work in \citep{papyan2020prevalence} has been followed by papers that theoretically examined the emergence of collapse to simplex ETFs in simplified mathematical frameworks.
Starting from \citep{mixon2020neural},
most of these papers (e.g., \citep{lu2020neural,wojtowytsch2020emergence, fang2021layer,zhu2021geometric}) consider the ``unconstrained features model" (UFM), where the features of the training data after the penultimate layer are treated as free optimization variables (disconnected from the samples).
The rationale behind this model is that modern deep networks are extremely overparameterized and expressive such that their feature mapping can be adapted to any training data (e.g., even to noise \citep{zhang2021understanding}).

While most existing papers consider cross-entropy loss, in this paper we focus on the mean squared error (MSE) loss, which has been recently shown to be powerful also for classification tasks \citep{hui2020evaluation}.
(We note that the occurrence of neural collapse when training practical DNNs with MSE loss, and its positive effects on their performance, have been shown {\em empirically} in a %
very recent paper
\citep{han2021neural}). 
We start with analyzing the (plain) UFM, showing
that 
for the regularized MSE loss the collapsed features can %
\tomtr{have a more delicate structure} 
than in the cross-entropy case (e.g., they may possess also orthogonality), which affects also the structure of the weights.
Then, we extend the UFM by adding another layer of weights as well as ReLU nonlinearity to the model and generalize our previous results. 

\tomtr{Our contributions can be summarized as follows:}
\vspace{-3.5mm}
\begin{itemize}
\itemsep -0.15em 
  \item \tomtr{
  We analyze the minima of the UFM with regularized MSE
loss and show the effect of the bias term on the minimizers’
structured collapse.} 
  \item \tomtr{
  We analyze the minima of a linear extended
UFM and show its limitation in modeling (practical)
depthwise NC behavior.} 
  \item \tomtr{
  We analyze the minima of a
ReLU-based nonlinear extended UFM and show 
the structured collapse of the deepest features.
}   
  \item \tomtr{
We present an asymptotic analysis
of the case where the features are a fixed perturbation
around the structured collapse.} 
  \item \tomtr{
We empirically demonstrate the usefulness of our nonlinear extended UFM in modeling the %
NC phenomenon that occurs in the training of practical networks.}
\end{itemize}

\section{Background and Related Work}
\label{sec:prelim}

In this section, we provide more details on the empirical NC phenomenon and its analysis via the unconstrained features model.

Consider a classification task with $K$ classes and $n$ training samples per class, i.e., overall $N:=Kn$ samples. 
Let us denote by $\y_{k} \in \mathbb{R}^K$ the one-hot vector with 1 in its $k$-th entry %
and by $\x_{k,i} \in \mathbb{R}^D$ the $i$-th training sample of the $k$-th class.
Most DNN-based classifiers can be modeled as 
$$
\bpsi_{\bTheta}(\x) = \W \h_{\btheta}(\x) + \b,
$$
where $\h_{\btheta}(\cdot):\mathbb{R}^D \xrightarrow{} \mathbb{R}^d$ is the feature mapping ($d\geq K$), and $\W = [\w_{1}, \ldots, \w_{K} ]^\top \in \mathbb{R}^{K \times d}$ ($\w_{k}^\top$ denotes the $k$-th row of $\W$) and $\b \in \mathbb{R}^K$ are the last layer’s classifier matrix and bias, respectively.
$\bTheta=\{\W,\b,\btheta\}$ is the set of the trainable network parameters, which includes the parameters $\btheta$ of a nonlinear compositional feature mapping (e.g., %
$\h_{\btheta}(\x)=\sigma(\W_L(\ldots \sigma(\W_2 \sigma(\W_1\x))\ldots)$ where $\sigma(\cdot)$ is an element-wise nonlinear function).

The network parameters are obtained by %
minimizing an empirical risk of the form
\begin{align}
\label{Eq_prob_main_practice}
    \minim{\bTheta} \,\,  &\frac{1}{Kn} \sum_{k=1}^K \sum_{i=1}^n \mathcal{L} \left ( \W \h_{\btheta}(\x_{k,i}) + \b , \y_k \right ) 
    + \mathcal{R} \left ( \bTheta \right ),
\end{align}
where $\mathcal{L}(\cdot,\cdot)$ is a loss function (e.g., cross-entropy or MSE) and $\mathcal{R}(\cdot)$ is a regularization term (e.g., squared $L_2$-norm).
Let us denote the feature vector of the $i$-th training sample of the $k$-th class by $\h_{k,i}$ (i.e., $\h_{k,i}=\h_{\btheta}(\x_{k,i})$),

We now define the notions of (within-class/intraclass) feature collapse and the simplex ETF. We use $\I_K$ to denote the $K \times K$ identity matrix, $\1_K$ to denote the all-ones vector of size $K \times 1$, and $[K]$ to denote the set $\{1,2,...,K\}$.

\begin{definition}[Collapse]
\label{def:collapse}
We
say that the training phase exhibits a (within-class) collapse if all the feature vectors of each class are mapped to a single point, i.e.,
$$
\h_{k,i_1} = \h_{k,i_2}
$$
for all $k \in [K]$ and $i_1,i_2$ training samples of the $k$-th class.
\end{definition}

\begin{definition}[Simplex ETF]
\label{def:simplex_etf}
The standard simplex equiangular tight frame (ETF) is a collection of points in $\mathbb{R}^K$ specified by the columns of
$$
\M = \sqrt{\frac{K}{K-1}} \left ( \I_K - \frac{1}{K} \1_K \1_K^\top  \right ).
$$
Consequently, the standard simplex ETF obeys
$$
\M^\top \M = \M \M^\top  = \frac{K}{K-1} \left ( \I_K - \frac{1}{K} \1_K \1_K^\top  \right ).
$$
In this paper, we consider a (general) simplex ETF as a collection of points in $\mathbb{R}^d$ ($d \geq K$) specified %
by the columns of $\tilde{\M} \propto \sqrt{\frac{K}{K-1}} \P \left ( \I_K - \frac{1}{K} \1_K \1_K^\top  \right )$, where $\P \in \mathbb{R}^{d \times K}$ is an orthonormal matrix. %
Consequently, $\tilde{\M}^\top \tilde{\M} \propto \frac{K}{K-1} \left ( \I_K - \frac{1}{K} \1_K \1_K^\top \right )$.
\end{definition}

\citet{papyan2020prevalence} %
empirically showed that
 training networks after reaching zero training error %
leads to collapse of the features: %
they converge to 
$K$ inter-class means 
that 
form a simplex ETF. Moreover, the last layer's weights $\{ \w_k \}$ are also aligned (i.e., equal up to a scalar factor) to the same simplex ETF, and as a result, the classification turns to be based on the nearest class center in feature space.
This ``neural collapse” (NC) behavior has led to many follow-up papers \citep{mixon2020neural,lu2020neural,wojtowytsch2020emergence,fang2021layer,zhu2021geometric,graf2021dissecting,ergen2021revealing,zarka2020separation,ji2021unconstrained}. Some of them include practical implications of the NC phenomenon, such as %
designing layers (multiplication by tight frames followed by soft-thresholding) 
that concentrate within-class features %
\citep{zarka2020separation} or fixing the last layer's weights to be a simplex ETF \citep{zhu2021geometric}.

To mathematically show the emergence of a collapse to simplex ETF, most follow-up papers  %
have considered a simplified framework --- the “unconstrained features model” (UFM), where the features $\{ \h_{k,i} \}$  are treated as free optimization variables %
\begin{align}
\label{Eq_prob_main_uncons}
    \minim{\W,\b,\{\h_{k,i}\}} \,\,  & \frac{1}{Kn} \sum_{k=1}^K \sum_{i=1}^n \mathcal{L} \left ( \W \h_{k,i} + \b , \y_k \right )  \\ \nonumber
    &\hspace{30mm}
    + \mathcal{R} \left ( \W,\b,\{\h_{k,i}\} \right ).
\end{align}
The rationale for considering this model is that modern overparameterizd deep networks can adapt their feature mapping to almost any training data. 
Specifically, \citep{mixon2020neural} considered the unregularized case (no regularization $\mathcal{R}$) where $\mathcal{L}$ is the MSE loss. It is shown there that a %
simplex EFT is (only) {\em a} global minimizer. However, without penalizing the optimization variables it is easy to see that there are infinitely many global minimizers of different structures (which are not necessarily collapses). In fact, experiments with unregularized MSE loss and %
randomly initialized gradient descent typically %
convergence to non-collapse global minimizers.
(See the dependency on the initialization in the experiments in \citep{mixon2020neural}). 
Other works considered \eqref{Eq_prob_main_uncons} under $L_2$-norm regularized (or constrained) cross-entropy loss with or without the bias term \citep{lu2020neural,%
fang2021layer,zhu2021geometric}.
They showed that, in this case, {\em any} global minimizer has the simplex EFT structure. 

In the following section, we first close the gap for the UFM with regularized MSE loss (this loss has been shown to be powerful for classification tasks \citep{hui2020evaluation}). We show that in this case the collapsed features can be more %
structured than in the cross-entropy case.
Then, we turn to mitigate a limitation of the plain UFM, namely, its inability to capture any behavior that happens across depth as it considers only one level of features. 
To tackle this, we extend the UFM by adding another layer of weights as well as nonlinearity to the model and generalize our previous results. %
\tomtr{We note that there is a concurrent work \cite{zhou2022optimization} that also studies the UFM with regularized MSE loss. Yet, they consider only the plain UFM (with no extensions).}

\section{NC for Unconstrained Features Model with Regularized MSE Loss}
\label{sec:plain_ufm}

In this %
section,
we study the optimization of the %
UFM 
with regularized MSE loss. 
Let $\H = \left [ \h_{1,1}, \ldots, \h_{1,n}, \h_{2,1}, \ldots , \h_{K,n}  \right ] \in \mathbb{R}^{d \times Kn}$ be the (organized) unconstrained features matrix, associated with the one-hot vectors matrix
$\Y  = \I_K \otimes \1_{n}^\top \in \mathbb{R}^{K \times Kn}$, where $\otimes$ denotes the Kronecker product.
We consider the optimization problem 
\begin{align}
\label{Eq_prob_main}
    &\minim{\W,\H, \b} \,\, \frac{1}{2Kn} \sum_{k=1}^{K} \sum_{i=1}^{n} \| \W \h_{k,i} +\b - \y_k \|_2^2 \\ \nonumber 
&\hspace{5mm}    + 
    \frac{\lambda_W}{2} \sum_{k=1}^{K} \|\w_{k}\|_2^2 + \frac{\lambda_H}{2} \sum_{k=1}^{K} \sum_{i=1}^{n} \|\h_{k,i}\|_2^2 + \frac{\lambda_b}{2} \|\b\|_2^2 \nonumber \\  
    &=\minim{\W,\H, \b} \,\,  \frac{1}{2Kn} \| \W \H + \b \1_N^\top - \Y \|_F^2 \\ \nonumber 
&\hspace{5mm}
+ \frac{\lambda_W}{2} \|\W\|_F^2 + \frac{\lambda_H}{2} \|\H\|_F^2 + \frac{\lambda_b}{2} \|\b\|_2^2,
\end{align}
where $\lambda_W$, $\lambda_H$, and $\lambda_b$ are positive regularization hyper-parameters and $\|\cdot\|_F$ denotes the Frobenius norm.

We provide complete characterizations of the minimizers for two settings: (i) the {\em bias-free case}, where $\b=\0$ is fixed (equivalently, $\lambda_b \xrightarrow{} \infty$), and (ii) the {\em unregularized-bias case}, where $\lambda_b=0$ and $\b$ can be optimized.
From these results, %
several 
conclusions are deduced also for the %
case 
where $\lambda_b>0$ and $\b$ is optimizable. %

In the following subsections, we show that while in the {\em unregularized-bias case} the features and weights of any global minimizer are aligned in a simplex ETF structure (similarly to the results obtained for the cross-entropy loss both with and without bias),
in the {\em bias-free case} the features and weights of any global minimizer are aligned in an orthogonal frame (OF) structure.
Since any orthogonal frame can trivially be turned into a simplex ETF by reducing its global mean, in a sense, this collapse is more structured than a simplex ETF collapse.
Giving a precise characterization for the minimizers of the bias-free model is important, %
as later, based on these results, we will study an extension of the bias-free UFM, which has another layer of weights and nonlinearity.

{\bf Remark on the optimization procedure.}
Despite the fact that \eqref{Eq_prob_main} is a non-convex problem (due to the multiplication of $\W$ and $\H$), its global minimizers are easily obtained by simple optimization algorithms, such as plain gradient descent. 
This phenomenon follows from the fact that the optimization landscape of matrix factorization with two factors includes only global minima (no local minima) and strict saddle points (roughly speaking, 
such saddle points can be easily escaped from by gradient-based algorithms)
\citep{kawaguchi2016deep,freeman2017topology}.
\subsection{The Bias-Free Case}

We first consider the optimization problem
\begin{align}
\label{Eq_prob}
    \minim{\W,\H} \,\, %
\frac{1}{2Kn} \| \W \H - \Y \|_F^2 + \frac{\lambda_W}{2} \|\W\|_F^2 + \frac{\lambda_H}{2} \|\H\|_F^2, %
\end{align}
which is a special case of \eqref{Eq_prob_main} with a fixed $\b=\0$ (or equivalently, $\lambda_b \xrightarrow{} \infty$). 

The following theorem characterizes the global solutions of \eqref{Eq_prob}, showing that they necessarily have an orthogonal frame (OF) structure.

\begin{theorem}
\label{thm_nc_of}
Let $d \geq K$ and define $c := K \sqrt{n \lambda_H \lambda_W}$. If $c \leq 1$, 
then any global minimizer $(\W^*,\H^*)$ of \eqref{Eq_prob} satisfies
\begin{align}
\label{Eq_thm_1}
    &\h_{k,1}^* = \ldots = \h_{k,n}^* =: \h_{k}^*, \,\,\,\,\, \forall k \in [K], \\
\label{Eq_thm_2}    
    &\| \h_{1}^* \|_2^2 = \ldots = \| \h_{K}^* \|_2^2 =: \rho = (1-c) \sqrt{\frac{\lambda_W}{n \lambda_H}}, \\ 
\label{Eq_thm_3}    
    &\left [ \h_{1}^*, \ldots, \h_{K}^* \right ]^\top \left [ \h_{1}^*, \ldots, \h_{K}^* \right ] = \rho \I_K, \\ 
\label{Eq_thm_4}    
    &\w_k^* = \sqrt{n \lambda_H / \lambda_W} \h_{k}^*, \,\,\,\,\, \forall k \in [K].
\end{align}
If $c > 1$, then \eqref{Eq_prob} is minimized by $(\W^*,\H^*)=(\0,\0)$. 
\end{theorem}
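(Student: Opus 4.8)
The plan is to reduce \eqref{Eq_prob} to a convex program in the single product matrix $\M = \W\tilde\H$, where $\tilde\H := [\h_1,\ldots,\h_K]$ collects the per-class features, solve that program in closed form via singular value thresholding, and then read off the structure of the individual factors from the equality conditions. First I would establish the within-class collapse \eqref{Eq_thm_1} directly by a convexity argument: fixing $\W$ and replacing every $\h_{k,i}$ by the class mean $\bar\h_k = \tfrac1n\sum_i \h_{k,i}$ leaves $\W\bar\h_k$ equal to the average of the $\W\h_{k,i}$, so by convexity of $\|\cdot\|_2^2$ the data-fit term does not increase, while $\tfrac{\lambda_H}{2}\sum_i\|\h_{k,i}\|_2^2 \ge \tfrac{\lambda_H}{2}n\|\bar\h_k\|_2^2$ with strict inequality unless all $\h_{k,i}$ coincide. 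Hence every global minimizer is collapsed. Writing $\H = \tilde\H\Y$ and using $\Y\Y^\top = n\I_K$, the objective becomes $\tfrac{1}{2K}\|\W\tilde\H - \I_K\|_F^2 + \tfrac{\lambda_W}{2}\|\W\|_F^2 + \tfrac{n\lambda_H}{2}\|\tilde\H\|_F^2$. Coercivity of this objective guarantees that a minimizer exists.

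Next I would invoke the variational form of the nuclear norm: for a fixed product $\M = \W\tilde\H$ with $d \ge K$, one has $\min\{\tfrac{\lambda_W}{2}\|\W\|_F^2 + \tfrac{n\lambda_H}{2}\|\tilde\H\|_F^2 : \W\tilde\H = \M\} = \sqrt{n\lambda_W\lambda_H}\,\|\M\|_*$. Substituting this reduces the problem to the convex program $\min_{\M}\, \tfrac{1}{2K}\|\M - \I_K\|_F^2 + \sqrt{n\lambda_W\lambda_H}\,\|\M\|_*$. Since $\tfrac12\|\cdot - \I_K\|_F^2$ is strongly convex, this has a unique solution obtained by soft-thresholding the singular values of $\I_K$ at level $c = K\sqrt{n\lambda_H\lambda_W}$ (the factor $K$ coming from the $\tfrac{1}{2K}$ weight). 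As all singular values of $\I_K$ equal $1$, the minimizer is $\M^* = (1-c)_+\,\I_K$, which already splits the two regimes of the statement.

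Finally, to recover \eqref{Eq_thm_2}--\eqref{Eq_thm_4} I would trace the equality conditions for $c \le 1$. Attaining the bound $\sqrt{n\lambda_W\lambda_H}\,\|\M^*\|_*$ forces both the balance $\lambda_W\|\W^*\|_F^2 = n\lambda_H\|\tilde\H^*\|_F^2$ (equality in the arithmetic--geometric mean step) and the alignment of the factorization with the SVD of $\M^* = (1-c)\I_K$, yielding $\W^* = \gamma\sqrt{1-c}\,\Q^\top$ and $\tilde\H^* = \gamma^{-1}\sqrt{1-c}\,\Q$ for some scalar $\gamma > 0$ and some $\Q$ with $\Q^\top\Q = \I_K$ (this is exactly where $d \ge K$ is needed). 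Imposing the balance condition gives $\gamma^2 = \sqrt{n\lambda_H/\lambda_W}$, whence $\tilde\H^{*\top}\tilde\H^* = \rho\,\I_K$ with $\rho = (1-c)\sqrt{\lambda_W/(n\lambda_H)}$ (this is \eqref{Eq_thm_2}--\eqref{Eq_thm_3}) and $\W^* = \sqrt{n\lambda_H/\lambda_W}\,\tilde\H^{*\top}$ (this is \eqref{Eq_thm_4}). For $c > 1$, the unique $\M^* = \0$ together with the strict positivity of the regularizer off the origin forces $(\W^*,\H^*) = (\0,\0)$.

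The main obstacle is this last step rather than the convex reduction: the variational nuclear-norm identity only pins down the optimal value, so the real work is characterizing precisely which factorizations achieve equality simultaneously in the arithmetic--geometric mean inequality and in the factorization bound. This equality analysis is what upgrades the conclusion from the minimal objective value to a complete description of all minimizers and is what produces the orthogonal-frame geometry of \eqref{Eq_thm_3}. I expect the only delicate point there to be verifying that the equality case of the factorization bound indeed restricts the factors to the aligned form above (up to the orthonormal $\Q$ and the scalar $\gamma$), since $\M^*$ has a repeated singular value and hence a non-unique SVD.
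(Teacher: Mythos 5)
Your proposal is correct, but it takes a genuinely different route from the paper's own argument. The paper's primary proof (Appendix~\ref{app:proof1}) never leaves the factored variables: it chains elementary inequalities --- discarding the off-class entries of $\W\h_{k,i}-\y_k$, Jensen's inequality twice, then AM--GM --- and finishes with a scalar lemma in the angle/magnitude variables $(\alpha,\beta)=(\angle(\w_k,\h_k),\|\w_k\|_2\|\h_k\|_2)$, extracting orthogonality from the accumulated equality conditions; its alternative proof (Appendix~\ref{app:proof1_alt}) instead uses the first-order condition to eliminate $\W$, substitutes the SVD of $\overline{\H}$, and exploits separability of the objective over singular values. You share only the first step (within-class collapse via Jensen/convexity) and then lift the bilinear problem to a \emph{convex} program in the product $\M=\W\tilde\H$ via the variational characterization $\min\{\tfrac{\lambda_W}{2}\|\W\|_F^2+\tfrac{n\lambda_H}{2}\|\tilde\H\|_F^2:\W\tilde\H=\M\}=\sqrt{n\lambda_W\lambda_H}\|\M\|_*$, solve it exactly by singular value soft-thresholding to get $\M^*=(1-c)_+\I_K$, and recover the factors from the equality cases. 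Notably, this is the same nuclear-norm device the paper itself deploys only for the \emph{three}-factor model (Lemma~\ref{app3_lemma_nuc} in Appendix~\ref{app:proof3}, quoted from Zhu et al.); you are applying it one theorem earlier. What your route buys: the two regimes $c\le 1$ and $c>1$ fall out of a single soft-threshold, uniqueness of $\M^*$ is free from strong convexity of the data-fit term, and the plain and extended UFM theorems end up under one technique. What it costs: the entire ``any global minimizer'' claim now rests on the equality-case characterization of the factorization bound --- i.e., that every factorization attaining $\sqrt{n\lambda_W\lambda_H}\|\M^*\|_*$ has the balanced, SVD-aligned form $\W^*=\gamma\sqrt{1-c}\,\Q^\top$, $\tilde\H^*=\gamma^{-1}\sqrt{1-c}\,\Q$ with $\Q^\top\Q=\I_K$ --- which you correctly flag as the delicate point. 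That characterization is true (it is exactly the minimizer description in Lemma~\ref{app3_lemma_nuc}: equality in AM--GM forces the balance $\lambda_W\|\W^*\|_F^2=n\lambda_H\|\tilde\H^*\|_F^2$, and equality in $\|\W\tilde\H\|_*\le\|\W\|_F\|\tilde\H\|_F$ forces $\tilde\H^*\propto\W^{*\top}$ up to the polar factor of $\M^*$, which here is the identity), and the repeated singular value of $\M^*\propto\I_K$ is harmless: the resulting family is parametrized by the orthonormal $\Q\in\mathbb{R}^{d\times K}$ exactly as in the paper's own conclusion, yielding \eqref{Eq_thm_2}--\eqref{Eq_thm_4} with $\rho=(1-c)\sqrt{\lambda_W/(n\lambda_H)}$. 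Your $c>1$ argument is also sound: $\M^*=\0$ forces the regularizer to vanish, hence $(\W^*,\H^*)=(\0,\0)$.
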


\begin{proof}
See Appendix~\ref{app:proof1}.
The proof is based on lower bounding the objective by a sequence of inequalities that hold with equality if and only if the stated conditions are satisfied.
\end{proof}

Let us dwell on the implication of this theorem. Denote $\overline{\H}:=\left [ \h_{1}^*, \ldots, \h_{K}^* \right ] \in \mathbb{R}^{d \times K}$. 
In the theorem, \eqref{Eq_thm_1}
implies that the columns of $\H^*$ collapse to the columns of $\overline{\H}$ and 
\eqref{Eq_thm_4} implies
that the rows of $\W^*$ are aligned with the columns of $\overline{\H}$. That is,
\begin{align}
\label{Eq_thm_alt}
    &\H^* = \overline{\H} \otimes \1_n^\top \\ \nonumber
    &\W^* = \sqrt{n \lambda_H / \lambda_W} \overline{\H}^\top.
\end{align}
The consequence of \eqref{Eq_thm_2}, \eqref{Eq_thm_3} and \eqref{Eq_thm_4} is that
\begin{align}
\label{Eq_thm_5}
&\W^*\W^{*\top} = \frac{n \lambda_H}{\lambda_W} \rho \I_K = (1-c) \sqrt{ \frac{n \lambda_H}{\lambda_W} } \I_K, \\ 
\label{Eq_thm_6}
&\W^*\H^* = \sqrt{\frac{n \lambda_H}{\lambda_W}} \rho \I_K \otimes \1_n^\top = (1-c) \I_K \otimes \1_n^\top. 
\end{align}

Note that the collapse of $\W^*$ and $\H^*$ here, in the case of bias-free regularized MSE loss, is to an orthogonal frame (as $\overline{\H}^\top \overline{\H} = \rho \I_K$). %
Yet, by
defining the global feature mean $\h_G^* := \frac{1}{N}\sum_{k=1}^K \sum_{i=1}^n \h_{k,i}^* = \frac{1}{K} \sum_{k=1}^K \h_k^*$, 
trivially, we have that $\overline{\H} - \h_G^* \1_K^\top = \left [ \h_{1}^*-\h_G^*, \ldots, \h_{K}^*-\h_G^* \right ]$ is a simplex ETF. This follows from
\begin{align}
&\left ( \overline{\H} - \h_G^* \1_K^\top \right )^\top \left ( \overline{\H} - \h_G^* \1_K^\top \right ) \\ \nonumber
&= \left ( \I_K - \frac{1}{K} \1_K \1_K^\top \right )^\top \overline{\H}^\top \overline{\H}  \left ( \I_K - \frac{1}{K} \1_K \1_K^\top \right ) \\ \nonumber
&= \rho \left ( \I_K - \frac{1}{K} \1_K \1_K^\top \right )^\top \left ( \I_K - \frac{1}{K} \1_K \1_K^\top \right ) \\ \nonumber
&= \rho \left ( \I_K - \frac{1}{K} \1_K \1_K^\top \right ),
\end{align}
where we used $\h_G^* = \frac{1}{K}\overline{\H}\1_K$.
In that sense, %
$\H^*$ here is more structured 
than in the 
results reported by
previous works that considered the UFM with regularized/constrained cross-entropy loss  \citep{lu2020neural,fang2021layer,zhu2021geometric}, 
where the collapse of $\W^*$ and $\H^*$ is to a simplex ETF. 
\subsection{The Unregularized-Bias Case}

We next turn to consider the optimization problem
\begin{align}
\label{Eq_prob_bias}
    &\minim{\W,\H, \b} \,\, %
\frac{1}{2Kn} \| \W \H + \b \1_N^\top - \Y \|_F^2 \\ \nonumber
& \hspace{10mm}
+ \frac{\lambda_W}{2} \|\W\|_F^2 + \frac{\lambda_H}{2} \|\H\|_F^2, %
\end{align}
which is a special case of \eqref{Eq_prob_main} when %
$\lambda_b=0$. 

The following theorem characterizes the global solutions of \eqref{Eq_prob_bias}, showing that they necessarily have a simplex ETF structure.

\begin{theorem}
\label{thm_nc_simplex_etf}
Let $d \geq K$ and define $c := K \sqrt{n \lambda_H \lambda_W}$. If $c \leq 1$, 
then any global minimizer $(\W^*,\H^*,\b^*)$ of \eqref{Eq_prob_bias} satisfies %
\begin{align}
\label{Eq_thm2_0}
    &\b^*=\frac{1}{K}\1_K, \\
\label{Eq_thm2_1}
    &\h_{k,1}^* = \ldots = \h_{k,n}^* =: \h_{k}^*, \,\,\,\,\, \forall k \in [K], \\
\label{Eq_thm2_1g}
    &\h_G^* := \frac{1}{N}\sum_{k=1}^K \sum_{i=1}^n \h_{k,i}^* = \frac{1}{K} \sum_{k=1}^K \h_k^* = \0, \\
\label{Eq_thm2_2}    
    &\| \h_{1}^* \|_2^2 = \ldots = \| \h_{K}^* \|_2^2 =: \rho = \frac{(1-c)(K-1)}{K} \sqrt{\frac{\lambda_W}{n \lambda_H}}, \\ 
\label{Eq_thm2_3}    
    &\left [ \h_{1}^*, \ldots, \h_{K}^* \right ]^\top \left [ \h_{1}^*, \ldots, \h_{K}^* \right ] = \rho \frac{K}{K-1} \left ( \I_K - \frac{1}{K} \1_K \1_K^\top \right ), \\ 
\label{Eq_thm2_4}    
    &\w_k^* = \sqrt{n \lambda_H / \lambda_W} \h_{k}^*, \,\,\,\,\, \forall k \in [K].
\end{align}
If $c > 1$, then \eqref{Eq_prob_bias} is minimized by $(\W^*,\H^*,\b^*)=(\0,\0,\frac{1}{K}\1_K)$. 
\end{theorem}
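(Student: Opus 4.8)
The strategy parallels the proof of Theorem~\ref{thm_nc_of}, but with the additional bias variable $\b$ treated carefully. The key structural observation is that the bias plays the role of centering the features, turning the orthogonal-frame collapse of the bias-free case into a simplex-ETF collapse after global-mean subtraction. First I would exploit the fact that, for \emph{any} fixed $\W$ and $\H$, the objective \eqref{Eq_prob_bias} is a convex quadratic in $\b$, so the optimal bias can be written in closed form. Setting the gradient with respect to $\b$ to zero gives
\begin{align}
\label{Eq_plan_bias}
\b^* = \frac{1}{N}\left(\Y - \W\H\right)\1_N = \frac{1}{K}\1_K - \W \h_G,
\end{align}
where $h_G:=\frac{1}{N}\H\1_N$ is the global feature mean and I used $\frac{1}{N}\Y\1_N=\frac{1}{K}\1_K$. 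Substituting this back eliminates $\b$ and reduces the problem to an optimization over $(\W,\H)$ with the residual $\W\H+\b^*\1_N^\top-\Y$ now involving the \emph{centered} features $\H_c := \H(\I_N - \frac{1}{N}\1_N\1_N^\top)$.

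The second step is to show that after this substitution the reduced objective becomes, up to the regularization terms, a function of $\W\H_c-\Y_c$ where $\Y_c=\Y-\frac{1}{K}\1_K\1_N^\top = (\I_K-\frac{1}{K}\1_K\1_K^\top)\otimes\1_n^\top$ is the centered one-hot matrix. The crucial point is that the regularizer $\frac{\lambda_H}{2}\|\H\|_F^2$ is \emph{not} centering-invariant: $\|\H\|_F^2 = \|\H_c\|_F^2 + N\|\h_G\|_2^2$. I would argue that at any minimizer the uncentered component $\h_G$ must vanish, giving \eqref{Eq_thm2_1g}. The reason is that the data-fit term, after optimizing $\b$, depends on $\H$ only through $\H_c$ (since $\b^*$ exactly absorbs the mean contribution $\W\h_G$), so any nonzero $\h_G$ only inflates the regularizer without improving the fit; hence $\h_G=\0$ at optimum, and then \eqref{Eq_plan_bias} collapses to $\b^*=\frac{1}{K}\1_K$, which is \eqref{Eq_thm2_0}.

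Once $\h_G=\0$ is established, the problem reduces to minimizing $\frac{1}{2Kn}\|\W\H_c-\Y_c\|_F^2+\frac{\lambda_W}{2}\|\W\|_F^2+\frac{\lambda_H}{2}\|\H_c\|_F^2$ subject to the features being centered. This has exactly the same algebraic form as the bias-free objective in \eqref{Eq_prob}, but with the target $\Y$ replaced by the centered target $\Y_c$. I would therefore invoke the same chain-of-inequalities machinery used for Theorem~\ref{thm_nc_of}: lower-bound the objective via the arithmetic–geometric mean and a trace/eigenvalue argument on $\W\H_c$, matched against the singular structure of $\Y_c$. Since $\Y_c\Y_c^\top = n\frac{K}{K-1}\cdot\frac{K-1}{K}(\I_K-\frac{1}{K}\1_K\1_K^\top)=n(\I_K-\frac{1}{K}\1_K\1_K^\top)$ has rank $K-1$ with all nonzero eigenvalues equal to $n$, the resulting equality conditions force the Gram matrix of the class means to be proportional to $\I_K-\frac{1}{K}\1_K\1_K^\top$, yielding \eqref{Eq_thm2_2} and \eqref{Eq_thm2_3}, while the alignment \eqref{Eq_thm2_4} comes from the same stationarity/equality condition relating $\W$ to $\H_c$ that appeared in Theorem~\ref{thm_nc_of}. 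The constant $\rho$ acquires the extra factor $\frac{K-1}{K}$ precisely because the effective target energy is scaled by the projection $\I_K-\frac{1}{K}\1_K\1_K^\top$ rather than by $\I_K$.

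The main obstacle I anticipate is rigorously justifying that $\h_G=\0$ at every global minimizer rather than merely at some minimizer — one must rule out the possibility that a nonzero mean is compensated by a correlated change in $\W$ that lowers $\|\W\|_F^2$. The clean way around this is to note that after substituting the optimal $\b^*$ from \eqref{Eq_plan_bias}, the mean $\h_G$ enters the data-fit term with coefficient exactly zero (the bias has fully absorbed it), so the objective is strictly increasing in $\|\h_G\|_2$ through the $\lambda_H$ regularizer alone; this forces $\h_G=\0$ unconditionally. The remaining subtlety is the $c>1$ boundary case, where the same argument shows the only stationary point has $\W=\0$, $\H=\0$, and then $\b^*=\frac{1}{K}\1_K$ follows directly from \eqref{Eq_plan_bias}.
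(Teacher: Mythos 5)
Your proposal is correct, and it reaches the result by a genuinely different route than the paper's proof, which is worth comparing. The two arguments share the first step (the closed-form optimal bias $\b^*=\frac{1}{K}\1_K-\W\h_G$), but diverge afterwards. The paper never passes to a centered problem: it keeps $\h_G$ inside a chain of entrywise inequalities (Jensen over samples, then Jensen over the $K^2$ inner products $\w_{k'}^\top(\h_k-\h_G)$ split into diagonal and off-diagonal groups, then AM--GM), forces $\h_G=\0$ by a triangle-inequality argument on $\frac{1}{K}\sum_k\|\h_k\|_2$, and finally pins down the geometry with a two-angle scalar lemma (Lemma~\ref{app2_lemma_aux}), in which the constraint $\cos\tilde{\alpha}\ge-\frac{1}{K-1}$ produces the simplex-ETF angles. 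You instead substitute $\b^*$ to rewrite the residual as $\W\H_c-\Y_c$, split the regularizer orthogonally as $\|\H\|_F^2=\|\H_c\|_F^2+N\|\h_G\|_2^2$, and conclude $\h_G=\0$ immediately because the mean enters only the penalty; this is cleaner than the paper's triangle-inequality step, and it correctly disposes of the compensation worry you raise, since once the bias is re-optimized no choice of $\W$ can make $\h_G$ useful. You then reduce to a bias-free-type problem with centered target $\Y_c$ and argue spectrally from $\Y_c\Y_c^\top=n\left(\I_K-\frac{1}{K}\1_K\1_K^\top\right)$. Your route buys modularity: the theorem becomes the bias-free machinery applied to a rank-$(K-1)$ flat target, and the factor $\frac{K-1}{K}$ in $\rho$ falls out of the diagonal of the projector, exactly as you predict. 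The paper's route buys a self-contained, SVD-free identification of the optimal angles.

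One point in your third step needs care. The paper's primary proof of Theorem~\ref{thm_nc_of} (Appendix~\ref{app:proof1}) does \emph{not} transfer verbatim to the centered problem, because its first inequality discards the off-diagonal entries of the residual, which is tight only when the corresponding target entries are zero; in $\Y_c$ they equal $-\frac{1}{K}$. The machinery that does transfer is the SVD-based alternative proof (Appendix~\ref{app:proof1_alt}): there the objective becomes separable across singular values, all nonzero singular values equalize, and --- after noting that the centering constraint $\H_c\1_N=\0$ is automatically inactive, since any component of the rows of $\H_c$ outside the row space of $\Y_c$ only inflates the penalty without affecting the fit --- the class-mean matrix $\overline{\H}_c\in\mathbb{R}^{d\times K}$ satisfies $\overline{\H}_c^\top\overline{\H}_c\propto\I_K-\frac{1}{K}\1_K\1_K^\top$ with exactly $K-1$ equal nonzero singular values, and the alignment $\W\propto\overline{\H}_c^\top$ gives \eqref{Eq_thm2_4}. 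Since your own wording (``trace/eigenvalue argument, matched against the singular structure of $\Y_c$'') already gestures at this spectral variant, this is a matter of making the right substitution explicit rather than a gap in the plan.
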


\begin{proof}
See Appendix~\ref{app:proof2}.
Similarly to the previous theorem, the proof is based on lower bounding the objective by a sequence of inequalities that hold with equality if and only if the stated conditions are satisfied.
\end{proof}

The consequence of \eqref{Eq_thm2_2}, \eqref{Eq_thm2_3} and \eqref{Eq_thm2_4} is that
\begin{align}
\label{Eq_thm2_5}
&\W^*\W^{*\top} = \frac{n \lambda_H}{\lambda_W} \rho \frac{K}{K-1} \left ( \I_K - \frac{1}{K} \1_K \1_K^\top \right ), \\ 
\label{Eq_thm2_6}
&\W^*\H^* = \sqrt{\frac{n \lambda_H}{\lambda_W}} \rho \frac{K}{K-1} \left ( \I_K - \frac{1}{K} \1_K \1_K^\top \right ) \otimes \1_n^\top. 
\end{align}
Note 
that the results in Theorem~\ref{thm_nc_simplex_etf} (contrary to those in Theorem~\ref{thm_nc_of}) resemble the results that have been obtained for the cross-entropy loss (both with and without bias). However, as far as we know, no such theorem has been reported for the case of MSE loss.

{\bf Remark on the regularized-bias case.} From Theorems~\ref{thm_nc_of} and \ref{thm_nc_simplex_etf}, we get the following facts about the global minimizers.
In the bias-free case ($\lambda_b \xrightarrow{} \infty$), $\H^*$ has an OF structure, and trivially, if we subtract from it the global feature mean $\h_G^*$, we get that $\H^*-\h_G^* \1_K$ has a simplex ETF structure.
In the unregularized-bias case ($\lambda_b = 0$), $\H^*$ has a simplex ETF structure. Trivially, this is also the structure of $\H^*-\h_G^* \1_K$, as the global feature mean $\h_G^*$ equals zero in this case. 
In both cases, $\W^{*\top}$ is aligned with $\H^*$, i.e., it is an OF in the bias-free case and a simplex ETF in the unregularized-bias case.
The consequence of these results\footnotemark 
is that for the fully regularized MSE loss, where $0<\lambda_b< \infty$, the global minimizers may have $\H^*$ and $\W^{*\top}$ that are neither a simplex EFT nor an OF. 
Yet, we empirically observed that still $\W^{*\top}$ is aligned with $\H^*$ and that $\H^*-\h_G^* \1_K$ is a simplex ETF (as may be expected, because these two properties hold in both extreme settings of $\lambda_b$).

\footnotetext{In the UFM, note that the (within-class) collapse of the global minimizers (i.e., $\h_{i,k}^*=\h_k^*$ for all $i\in[n]$) is a consequence of 
the symmetry of the loss and the regularization terms w.r.t.~the sample index, which, in our proofs, is exploited by 
attaining Jensen's inequality %
when averaging 
over $i\in[n]$. %
Thus, it does not depend on whether we regularize the bias term.} 

\section{Extended Unconstrained Features Model}
\label{sec:ext_ufm}

The UFM, which considers only one level of features, cannot capture any behavior that happens across depth. 
Therefore, in this section, we extend %
this model, 
first with another layer of weights, and then with the nonlinear ReLU activation between the two layers of weights.

\subsection{Unconstrained Features Model With an Additional Layer}

Consider the following optimization problem that corresponds to %
an extended UFM with two layers of weights,
\begin{align}
\label{Eq_prob_deeper}
    &\minim{\W_2,\W_1,\H_1} \,\,  \frac{1}{2Kn} \| \W_2 \W_1 \H_1  - \Y \|_F^2 \\ \nonumber 
& \hspace{10mm}
+ \frac{\lambda_{W_2}}{2} \|\W_2\|_F^2 + \frac{\lambda_{W_1}}{2} \|\W_1\|_F^2 + \frac{\lambda_{H_1}}{2} \|\H_1\|_F^2,
\end{align}
where $\lambda_{W_2}$, $\lambda_{W_1}$, and $\lambda_{H_1}$ are regularization hyper-parameters, and
$\W_2 \in \mathbb{R}^{K \times d}$, $\W_1 \in \mathbb{R}^{d \times d}$, $\H_1 \in \mathbb{R}^{d \times N}$. 
Observe the similarity between \eqref{Eq_prob_deeper} and \eqref{Eq_prob}, where $(\W,\H)$ in \eqref{Eq_prob} are replaced by $(\W_2,\W_1\H_1)$ or by $(\W_2\W_1,\H_1)$.
Yet, the similarity is only {\em partial} because, e.g., if we plug $(\W,\H)=(\W_2,\W_1\H_1)$ in \eqref{Eq_prob} we get a regularization term $\| \W_1\H_1 \|_F^2$ rather than separated $\| \W_1 \|_F^2$ and $\| \H_1 \|_F^2$.
To the best of our knowledge, characterization of the minimizers of a multilayer extension of the unconstrained features model
has not been done so far. 

{\bf Remark on the optimization procedure.}
While both \eqref{Eq_prob_deeper} and \eqref{Eq_prob} are non-convex problems, obtaining the global minimizers of \eqref{Eq_prob_deeper} is more challenging in practice
(e.g., requires 
careful initializations). 
This follows from the fact that the optimization landscapes of matrix factorization with three of more factors (or equivalently, non-shallow linear neural networks) include also non-strict saddle points, which entangle gradient-based methods \citep{kawaguchi2016deep}.

The following theorem characterizes the global solutions of \eqref{Eq_prob_deeper}. %
It shows that the orthogonal frame structure of the solutions is maintained despite the %
intermediate 
weight matrix that has been added.
Here ``$\propto$" denotes proportional, i.e., equal up to a positive scalar factor.

\begin{theorem}
\label{thm_nc_of_deeper}
Let $d > K$ and $(\W_2^*,\W_1^*,\H_1^*)$ be a global minimizer of \eqref{Eq_prob_deeper}.
Then, both $\H_1^*$ and $\W_1^*\H_1^*$ collapse to orthogonal $d \times K$ frames. Also, both $(\W_2^*\W_1^*)^\top$ and $\W_2^{*\top}$ are orthogonal %
$d \times K$ 
matrices, where $(\W_2^*\W_1^*)^\top$ is aligned with $\H_1^{*}$ and $\W_2^{*\top}$ is aligned with $\W_1^*\H_1^*$.
Formally, we have that
$\H_1^* = \overline{\H}_1 \otimes \1_n^\top$ for some $\overline{\H}_1 \in \mathbb{R}^{d \times K}$, and
$$
(\W_2^*\W_1^*) \overline{\H}_1 \propto \overline{\H}_1^\top \overline{\H}_1 \propto  (\W_2^*\W_1^*) (\W_2^*\W_1^*)^\top \propto  \I_K.
$$
Similarly, we have that
$\H_2^*:=\W_1^*\H_1^* = \overline{\H}_2 \otimes \1_n^\top$ for some $\overline{\H}_2 \in \mathbb{R}^{d \times K}$, and
$$
\W_2^* \overline{\H}_2 \propto \overline{\H}_2^\top \overline{\H}_2 \propto  \W_2^* \W_2^{*\top} \propto  \I_K.
$$
\end{theorem}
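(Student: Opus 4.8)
The plan is to reduce the three–factor problem to a one–dimensional problem in the singular values of the end-to-end map, following the "sequence of inequalities" template of the proof of Theorem~\ref{thm_nc_of}. First I would establish the within-class collapse $\H_1^* = \overline{\H}_1\otimes\1_n^\top$: for fixed $\W_2,\W_1$ the objective is convex in $\H_1$ and invariant under permutations of the $n$ samples of each class, so Jensen's inequality forces the $n$ columns associated with a given class to coincide (exactly as in the footnote and in Theorem~\ref{thm_nc_of}). Substituting $\H_1=\overline{\H}_1\otimes\1_n^\top$ and $\Y=\I_K\otimes\1_n^\top$ reduces the data term to $\frac{1}{2K}\|\W_2\W_1\overline{\H}_1-\I_K\|_F^2$ and the feature penalty to $\frac{n\lambda_{H_1}}{2}\|\overline{\H}_1\|_F^2$, leaving a genuine three-factor problem whose end-to-end map is the $K\times K$ matrix $\Z:=\W_2\W_1\overline{\H}_1$.

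Next I would lower-bound the three regularizers for a fixed product. By AM--GM, $\frac{\lambda_{W_2}}{2}\|\W_2\|_F^2+\frac{\lambda_{W_1}}{2}\|\W_1\|_F^2+\frac{n\lambda_{H_1}}{2}\|\overline{\H}_1\|_F^2\ge \frac{3}{2}(\lambda_{W_2}\lambda_{W_1}n\lambda_{H_1})^{1/3}\big(\|\W_2\|_F\|\W_1\|_F\|\overline{\H}_1\|_F\big)^{2/3}$, with equality iff the three weighted energies are equal (scalar balancedness). Then the three-factor generalization of the nuclear-norm bound, $\|\W_2\|_F\|\W_1\|_F\|\overline{\H}_1\|_F\ge\big(\sum_i\sigma_i(\Z)^{2/3}\big)^{3/2}$, turns the regularizer into a \emph{separable} spectral penalty $c_0\sum_i\sigma_i(\Z)^{2/3}$ with $c_0=\frac{3}{2}(\lambda_{W_2}\lambda_{W_1}n\lambda_{H_1})^{1/3}$; its equality case pins down the alignment of the singular subspaces of the three factors and the matching of their singular values mode by mode. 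For the data term I would invoke von Neumann's trace inequality: $\|\Z-\I_K\|_F^2=\sum_i\sigma_i(\Z)^2-2\,\tr(\Z)+K\ge\sum_i(\sigma_i(\Z)-1)^2$, with equality iff $\Z$ is symmetric positive semidefinite.

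Combining these bounds, the whole objective is bounded below by the separable scalar sum $\sum_i\big[\frac{1}{2K}(\sigma_i-1)^2+c_0\,\sigma_i^{2/3}\big]$, so at a global minimizer every singular value of $\Z$ must minimize the \emph{same} one-dimensional function $\phi(\sigma)=\frac{1}{2K}(\sigma-1)^2+c_0\sigma^{2/3}$ on $\sigma\ge 0$. Showing $\phi$ has a unique positive minimizer forces all $K$ singular values to coincide; together with $\Z$ symmetric PSD this gives $\Z=\sigma^*\I_K\propto\I_K$. Finally I would unwind the equality conditions: $\Z\propto\I_K$ with balanced, subspace-aligned factors forces each factor to be a scaled orthonormal $d\times K$ frame with singular values proportional to the identity, yielding $\overline{\H}_1^\top\overline{\H}_1\propto\I_K$, $(\W_2^*\W_1^*)(\W_2^*\W_1^*)^\top\propto\I_K$, and the alignment $(\W_2^*\W_1^*)^\top\propto\overline{\H}_1$ (using $(\W_2^*\W_1^*)\overline{\H}_1\propto\I_K$ with both factors full-rank scaled frames). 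Reading off the same balanced aligned factorization for the pair $(\W_2^*,\,\overline{\H}_2)$ with $\overline{\H}_2=\W_1^*\overline{\H}_1$ gives the analogous statements $\W_2^*\overline{\H}_2\propto\overline{\H}_2^\top\overline{\H}_2\propto\W_2^*\W_2^{*\top}\propto\I_K$. The hypothesis $d>K$ is what guarantees enough room for the intermediate factor $\W_1$ to realize this aligned rank-$K$ structure.

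The main obstacle is concentrated in the middle and last steps. First, establishing the three-factor singular-value inequality with \emph{sharp} equality conditions is the technical heart: unlike the two-factor identity $\|\A\|_F\|\B\|_F\ge\|\A\B\|_*$, the three-factor version produces the concave Schatten-$2/3$ penalty, and one must track precisely when equality holds (aligned singular vectors together with matched, balanced singular values across all three factors) in order to recover the individual orthogonal-frame structures and not merely the structure of the product. Second, because the induced penalty $\sigma^{2/3}$ is concave, the scalar problem $\min\phi$ is non-convex, so proving that its minimizer is a single positive value --- equal singular values, full rank $K$, rather than a mixture of zeros and a positive value that would yield a lower-rank $\Z$ --- requires a careful one-dimensional analysis and is where any regime restriction on the regularization parameters enters.
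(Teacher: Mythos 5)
Your proposal is correct in outline but takes a genuinely different route from the paper's proof (Appendix~\ref{app:proof3}). After the shared first step (Jensen's inequality giving $\H_1^*=\overline{\H}_1\otimes\1_n^\top$), the paper never works with the end-to-end matrix $\Z=\W_2\W_1\overline{\H}_1$ at all: it applies the \emph{two}-factor nuclear-norm lemma of \cite{zhu2021geometric} (Lemma~\ref{app3_lemma_nuc}) twice --- once grouping $(\W_1,\overline{\H}_1)$, yielding a problem in $(\W_2,\overline{\H})$ with the convex penalty $\sqrt{n\lambda_{W_1}\lambda_{H_1}}\|\overline{\H}\|_*$, and once grouping $(\W_2,\W_1)$, yielding a problem in $(\W,\overline{\H}_1)$ with penalty $\sqrt{\lambda_{W_2}\lambda_{W_1}}\|\W\|_*$ --- then solves each two-factor problem by closed-form elimination of one variable followed by separation over the singular values of the other; the two sub-problems deliver the two alignment claims separately, and the lemma's explicit characterization of its own minimizers transfers the orthogonal structure back to the individual factors. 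You instead fold all three regularizers at once (AM--GM plus a three-factor Schatten-$2/3$ bound) into the concave spectral penalty $c_0\sum_i\sigma_i(\Z)^{2/3}$ on the end-to-end map and treat the data term by von Neumann, so that a single scalar separable problem characterizes all three factors simultaneously; this is more symmetric and yields both alignments in one pass. What it costs is that your key lemma --- the three-factor bound \emph{with sharp equality conditions} (consecutive singular-subspace alignment and balanced, matched singular values) --- is exactly the step you leave unproven, and unlike the paper you cannot cite it off the shelf together with its equality case. It is true: it follows from the known variational identity $\sum_i\sigma_i(\Z)^{2/3}=\min_{\A\B\C=\Z}\tfrac{1}{3}\left(\|\A\|_F^2+\|\B\|_F^2+\|\C\|_F^2\right)$ combined with the scale-invariance of the product $\|\A\|_F\|\B\|_F\|\C\|_F$ under rebalancing, with $d>K$ ensuring attainability; but nailing the equality case rigorously is comparable in effort to the paper's entire argument. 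Two further caveats: (i) your scalar function $\phi(\sigma)=\frac{1}{2K}(\sigma-1)^2+c_0\sigma^{2/3}$ always has a local minimizer at $\sigma=0$ (the slope of $\sigma^{2/3}$ is infinite at the origin), so excluding mixed spectra --- some $\sigma_i$ zero, some positive --- genuinely requires an explicit parameter condition; to be fair, the paper's proof has the same unaddressed degeneracy, as it quietly restricts to ``nonzero'' minimizers and leaves its quartic unsolved, so this does not disqualify your route; (ii) equality in your bounds forces the middle factor $\W_1\in\mathbb{R}^{d\times d}$ to be a rank-$K$ scaled partial isometry of the form $c\,\P_2\P_1^\top$, not a ``scaled orthonormal $d\times K$ frame'' as you wrote --- a harmless imprecision, since the theorem only makes claims about the products.
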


\begin{proof}
See Appendix~\ref{app:proof3}.
The proof is based on connecting the minimization of the three-factors objective with two sub-problems that include two-factors objectives. More specifically, the sum of the Frobenius norm regularization of two matrices is lower bounded (with attainable equality) by a suitably scaled nuclear norm of their multiplication, and the minimizers of the latter formulation, which can be expressed by the minimizers of the original problem, are analyzed.
\end{proof}

{\bf Remark on the choice of loss function.}
The proof of Theorem~\ref{thm_nc_of_deeper} mostly depends on handling the regularization terms when transforming the problem into two sub-problems, and can be potentially modified to the case where the cross-entropy loss is used instead of MSE.
Thus, a similar theorem can be stated for cross-entropy loss, for which it is known that the minimizers of the plain UFM collapse as well \citep{zhu2021geometric}.
Naturally, in such a statement the collapse will be to a simplex ETF rather than to an OF.
Indeed, we empirically observed that also when using the cross-entropy loss in \eqref{Eq_prob_deeper}, the global minimizers $\W_1^*\H_1^*$ and $\H_1^*$ collapse to a simplex ETF structure.

{\bf Discussion.}
\tomt{In practical ``well-trained" DNNs (e.g., see Figure~\ref{fig:mnist_mse_noBias_and_CE}  in the experiments section): (1) structured collapse appears only in the deepest features; (2) %
decrease in within-class variability 
is obtained monotonically along the depth of the network\footnote{\tomtr{Similar depthwise progressive variability decrease is empirically observed also in \cite{papyan2020traces}}.}}. 
However, %
Theorem~\ref{thm_nc_of_deeper} shows
the emergence of structured (orthogonal) collapse {\em simultaneously} at the two levels of unconstrained features of the model in \eqref{Eq_prob_deeper} --- both at the deeper $\H_2:=\W_1\H_1$ and at the shallower $\H_1$ --- which does not fit (1).
Moreover, the linear link between $\H_2$ and $\H_1$ implies that they have %
\tomr{similar} 
within-class variability measured by the metric $NC_1$ (defined in \eqref{Eq_NC1} below) %
\tomr{when} 
the columns of $\H_1$ are not in the null space of $\W_1$. 
This hints that $\H_1$ and $\H_2$ may have similar values/slopes for their $NC_1$ metric %
after random initialization and along gradient-based optimization 
(see Appendix~\ref{app:lemma_nc1} for more details). 
Yet, this does not fit (2).
Therefore, extending the model to two levels of features without the addition of a non-linearity still cannot capture the 
behavior of practical DNNs across layers.
This encourages us to further extend the model by adding a nonexpansive nonlinear activation function (ReLU) %
between $\W_2$ and $\W_1$, that naturally breaks the similarity between the two levels of features. 

\subsection{Non-Linear Unconstrained Features Model}

In this section, we turn to consider a nonlinear version of the unconstrained features model that has been stated in \eqref{Eq_prob_deeper}. Specifically, using the same notation as \eqref{Eq_prob_deeper}, we consider the optimization problem
\begin{align}
\label{Eq_prob_nonlin}
    &\minim{\W_2,\W_1,\H_1} \,\,  \frac{1}{2Kn} \| \W_2 \sigma( \W_1 \H_1 ) - \Y \|_F^2 \\ \nonumber 
& \hspace{10mm}
+ \frac{\lambda_{W_2}}{2} \|\W_2\|_F^2 + \frac{\lambda_{W_1}}{2} \|\W_1\|_F^2 + \frac{\lambda_{H_1}}{2} \|\H_1\|_F^2,
\end{align}
where $\sigma(\cdot)=\mathrm{max}(0,\cdot)$ is the element-wise ReLU function.

The following theorem characterizes the global solutions of \eqref{Eq_prob_nonlin} by %
exploiting the similarity of this 
problem to the one in \eqref{Eq_prob_deeper}.
It shows that the orthogonal frame structure created by the optimal solution %
$(\W^*,\H^*)=(\W_2^*,\sigma(\W_1^*\H_1^*))$ is maintained despite the nonlinearity that has been added.

\begin{theorem}
\label{thm_nc_of_deeper_nonlin}
Let $d > K$ and $(\W_2^*,\W_1^*,\H_1^*)$ be a global minimizer of \eqref{Eq_prob_nonlin}. 
\tomr{If the nuclear norm equality $\|\W_1^*\H_1^*\|_* = \|\sigma(\W_1^*\H_1^*)\|_*$ holds\footnote{\tomtr{This assumption
is required in our current proof. We verified that it holds in any numerical experiment that we performed.
}}%
, then}
$\sigma(\W_1^*\H_1^*)$ collapses to an orthogonal $d \times K$ frame and $\W_2^{*\top}$ is an orthogonal %
$d \times K$
matrix that is aligned with $\sigma(\W_1^*\H_1^*)$, i.e., 
$\H_2^*:=\sigma(\W_1^*\H_1^*) = \overline{\H}_2 \otimes \1_n^\top$ for some non-negative $\overline{\H}_2 \in \mathbb{R}^{d \times K}$, and
$$
\W_2^* \overline{\H}_2 \propto \overline{\H}_2^\top \overline{\H}_2 \propto  \W_2^* \W_2^{*\top} \propto  \I_K.
$$
\end{theorem}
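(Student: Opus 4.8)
The plan is to reduce the nonlinear problem \eqref{Eq_prob_nonlin} to a two-factor, \emph{post-activation} problem regularized by the nuclear norm, and then to invoke the characterization of the linear extended model in Theorem~\ref{thm_nc_of_deeper}. Write $\H_2^* := \sigma(\W_1^*\H_1^*) \geq \0$ and recall the variational form of the nuclear norm: for any $\Z$,
\[
\min_{\W_1\H_1 = \Z} \left( \frac{\lambda_{W_1}}{2}\|\W_1\|_F^2 + \frac{\lambda_{H_1}}{2}\|\H_1\|_F^2 \right) = \sqrt{\lambda_{W_1}\lambda_{H_1}}\,\|\Z\|_*,
\]
with the minimum attained at a balanced, SVD-based factorization. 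Define the auxiliary objective
\[
g(\W_2,\H_2) := \frac{1}{2Kn}\|\W_2\H_2 - \Y\|_F^2 + \frac{\lambda_{W_2}}{2}\|\W_2\|_F^2 + \sqrt{\lambda_{W_1}\lambda_{H_1}}\,\|\H_2\|_*.
\]

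First I would show that $(\W_2^*,\H_2^*)$ is a global minimizer of $g$ over non-negative $\H_2$. For the lower bound, I apply the variational identity to the regularizers of $\W_1^*,\H_1^*$ and then invoke the hypothesis $\|\W_1^*\H_1^*\|_* = \|\sigma(\W_1^*\H_1^*)\|_*$ to replace $\|\W_1^*\H_1^*\|_*$ by $\|\H_2^*\|_*$; since $(\W_2^*,\W_1^*,\H_1^*)$ is optimal this gives the value of \eqref{Eq_prob_nonlin} bounded below by $g(\W_2^*,\H_2^*)$, and hence by $\min_{\H_2\geq\0} g$. For the matching upper bound, given any non-negative $\H_2\in\mathbb{R}^{d\times N}$ (rank at most $d$, hence realizable by $\W_1\in\mathbb{R}^{d\times d}$, $\H_1\in\mathbb{R}^{d\times N}$) I factor $\H_2 = \W_1\H_1$ through the balanced factorization: because $\H_2 \geq \0$ the ReLU acts as the identity, $\sigma(\W_1\H_1) = \H_2$, and the objective of \eqref{Eq_prob_nonlin} at this point equals $g(\W_2,\H_2)$. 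Thus $\min\eqref{Eq_prob_nonlin} = \min_{\H_2\geq\0} g$ and the whole chain collapses to equalities, so $(\W_2^*,\H_2^*)$ minimizes $g$ under $\H_2\geq\0$.

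Next I would remove the non-negativity constraint. The objective $g$ is invariant under $(\W_2,\H_2)\mapsto(\W_2\Q^\top,\Q\H_2)$ for any orthogonal $\Q\in\mathbb{R}^{d\times d}$, since the Frobenius and nuclear norms and the product $\W_2\H_2$ are all preserved. By Theorem~\ref{thm_nc_of_deeper} (in its reduced two-factor form, which is exactly the minimization of $g$), the unconstrained minimizers of $g$ collapse to an orthogonal frame, $\H_2 = \overline{\H}_2\otimes\1_n^\top$ with $\overline{\H}_2^\top\overline{\H}_2\propto\I_K$, i.e. $\overline{\H}_2 = \sqrt{\rho}\,\P$ with $\P^\top\P = \I_K$. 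Since $d > K$, I can choose $\Q$ mapping the orthonormal columns of $\P$ to $\e_1,\ldots,\e_K$, so that $\Q\overline{\H}_2 \geq \0$; hence the unconstrained minimum of $g$ is attained at a non-negative point and $\min_{\H_2\geq\0} g = \min g$. Therefore $(\W_2^*,\H_2^*)$ is in fact a global minimizer of the unconstrained $g$, equivalently of \eqref{Eq_prob_deeper}, and inherits its structure; combined with $\H_2^*\geq\0$ this yields $\H_2^* = \overline{\H}_2\otimes\1_n^\top$ with non-negative $\overline{\H}_2$ and $\W_2^*\overline{\H}_2 \propto \overline{\H}_2^\top\overline{\H}_2 \propto \W_2^*\W_2^{*\top}\propto\I_K$, as claimed.

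The main obstacle is the lower-bound step, which is precisely where the assumption $\|\W_1^*\H_1^*\|_* = \|\sigma(\W_1^*\H_1^*)\|_*$ is indispensable: ReLU does not preserve the nuclear norm in general, so without it the regularization on $(\W_1,\H_1)$ cannot be tied to $\|\H_2^*\|_*$ and the reduction to the post-activation nuclear problem breaks down. A secondary subtlety is the orthogonal-rotation argument, which exploits the fact that mutually orthogonal non-negative vectors must have disjoint supports --- consistent with $\H_2^*$ being a ReLU output --- and uses the slack $d > K$ to place $K$ such columns in $\mathbb{R}^d$.
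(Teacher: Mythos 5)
Your proof is correct and takes essentially the same route as the paper's: both reduce the problem to the post-activation two-factor objective with nuclear-norm regularization (your $g$ is exactly the paper's $f_1$) via Lemma~\ref{app3_lemma_nuc}, invoke the assumption $\|\W_1^*\H_1^*\|_* = \|\sigma(\W_1^*\H_1^*)\|_*$ precisely where the ReLU would otherwise break that reduction, exhibit a non-negative orthogonal-frame minimizer using the rotational freedom (taking the frame whose columns are the first $K$ standard basis vectors, which uses $d \geq K$), and inherit the collapse and alignment structure from Theorem~\ref{thm_nc_of_deeper}. The only difference is organizational: your direct sandwich between the optimal value of \eqref{Eq_prob_nonlin} and $\min_{\H_2 \geq \0} g$ bypasses the paper's intermediate objective $\tilde{f}_1$ (nuclear norm placed on the pre-activation features), which streamlines the argument but does not constitute a genuinely different proof.
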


\begin{proof}
See Appendix~\ref{app:proof4}. 
\tomtr{The proof utilizes the one of Theorem~\ref{thm_nc_of_deeper}. It is based on showing equivalence between two related sub-problems w/ and w/o the ReLU operation.} 
\end{proof}

Note that
the structure of $(\W^*,\H^*)=(\W_2^*,\sigma(\W_1^*\H_1^*))$ is the same as for the model in \eqref{Eq_prob_deeper}, where the non-linearity is absent (yet, here $\H^*$ is obviously also non-negative).
This  %
 analysis 
benefits from the fact that the features are unconstrained, and is in contrast with the usual case, where 
the results obtained for linear models do not carry ``as is" to their non-linear counterparts. 
In 
Section~\ref{sec:exps}
we show that the nonlinearity is necessary for %
capturing the different 
behavior of features in different depths during the collapse of practical networks.

\begin{figure*}[t]
  \centering
  \begin{subfigure}[b]{0.25\linewidth}
    \centering\includegraphics[width=100pt]{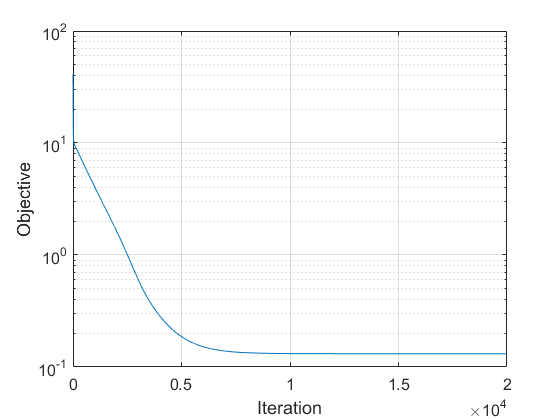}
   \end{subfigure}%
  \begin{subfigure}[b]{0.25\linewidth}
    \centering\includegraphics[width=100pt]{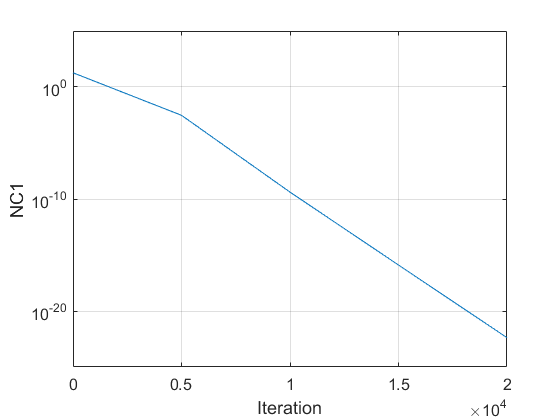}
   \end{subfigure}%
  \begin{subfigure}[b]{0.25\linewidth}
    \centering\includegraphics[width=100pt]{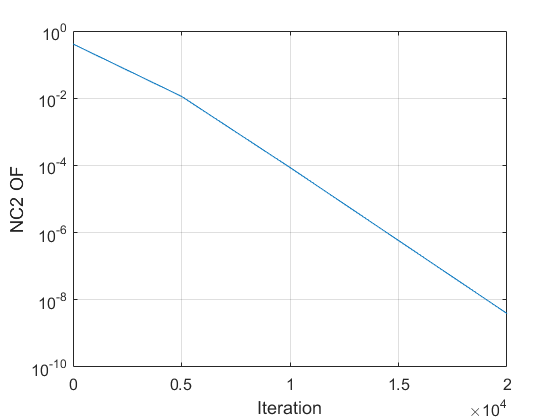}
   \end{subfigure}%
  \begin{subfigure}[b]{0.25\linewidth}
    \centering\includegraphics[width=100pt]{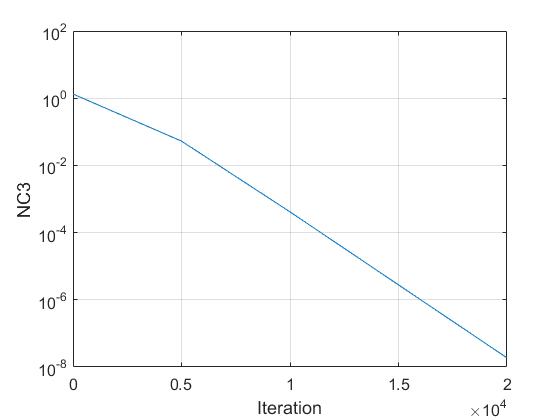}
   \end{subfigure}%
    \caption{
    Verification of Theorem~\ref{thm_nc_of} (MSE loss with no bias).
    From left to right: the objective value, NC1 (within-class variability), NC2 (similarity of the features to OF), and NC3 (alignment between the weights and the features).
    }
\label{fig:thm1}     

\vspace{4mm}

  \centering
  \begin{subfigure}[b]{0.25\linewidth}
    \centering\includegraphics[width=100pt]{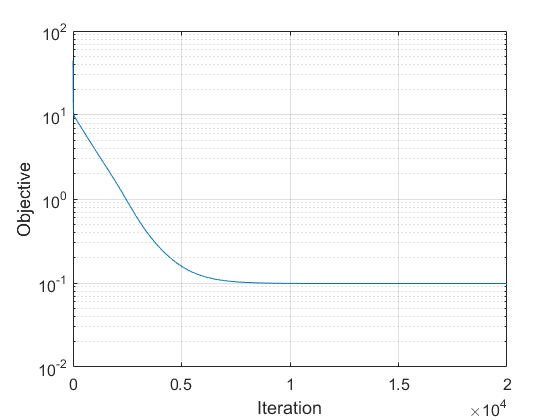}
   \end{subfigure}%
  \begin{subfigure}[b]{0.25\linewidth}
    \centering\includegraphics[width=100pt]{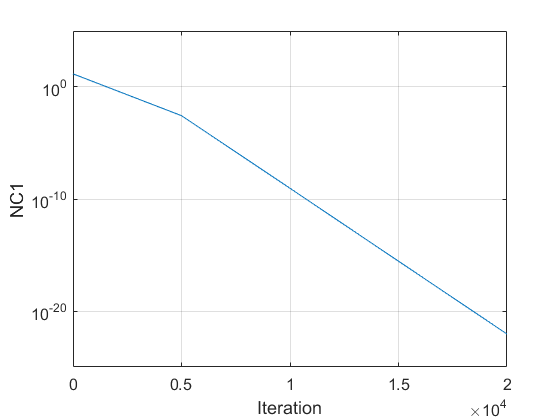}
   \end{subfigure}%
  \begin{subfigure}[b]{0.25\linewidth}
    \centering\includegraphics[width=100pt]{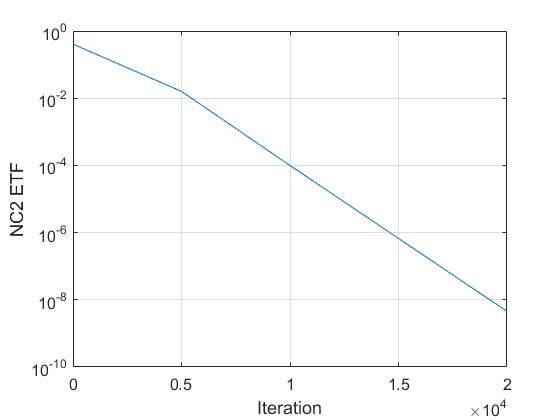}
   \end{subfigure}%
  \begin{subfigure}[b]{0.25\linewidth}
    \centering\includegraphics[width=100pt]{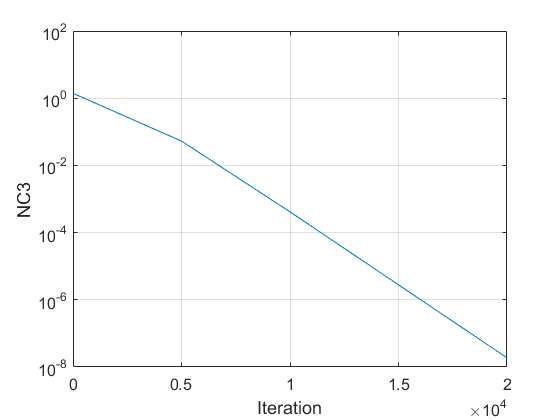}
   \end{subfigure}%
    \caption{
    Verification of Theorem~\ref{thm_nc_simplex_etf} (MSE loss with unregularized bias).
    From left to right: the objective value, NC1 (within-class variability), NC2 (similarity of the features to simplex ETF), and NC3 (alignment between the weights and the features).    
    }
\label{fig:thm2}     
\end{figure*}

\begin{figure*}[t]
  \centering
  \begin{subfigure}[b]{0.25\linewidth}
    \centering\includegraphics[width=100pt]{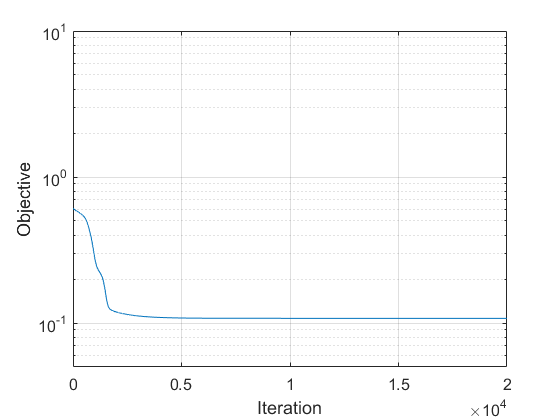} %
   \end{subfigure}%
  \begin{subfigure}[b]{0.25\linewidth}
    \centering\includegraphics[width=100pt]{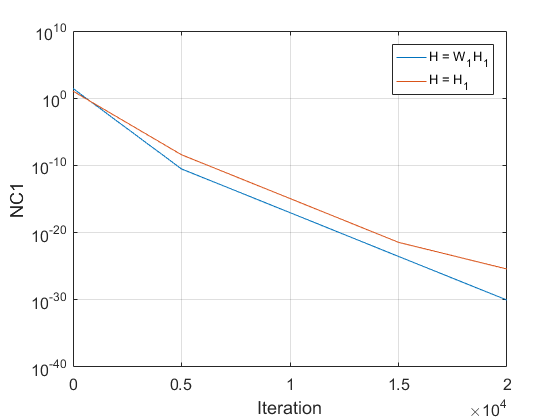}
   \end{subfigure}%
  \begin{subfigure}[b]{0.25\linewidth}
    \centering\includegraphics[width=100pt]{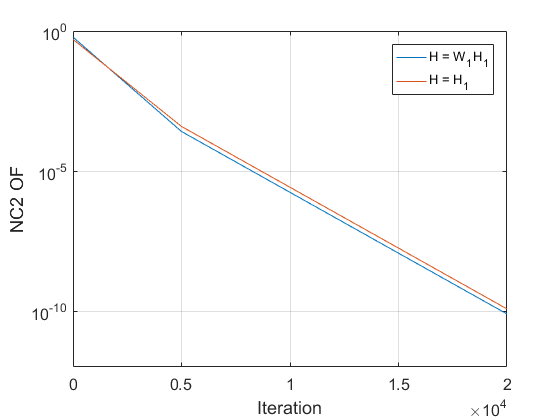}
   \end{subfigure}%
  \begin{subfigure}[b]{0.25\linewidth}
    \centering\includegraphics[width=100pt]{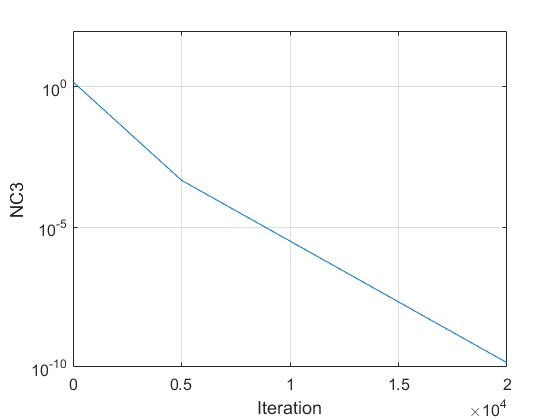}
   \end{subfigure}%
    \caption{
    Verification of Theorem~\ref{thm_nc_of_deeper} %
    (two levels of features). 
    From left to right: the objective value, NC1 (within-class variability), NC2 (similarity of the features to OF), and NC3 (alignment between the weights and the features).
    }
\label{fig:thm3}     

\vspace{4mm}

  \centering
  \begin{subfigure}[b]{0.25\linewidth}
    \centering\includegraphics[width=100pt]{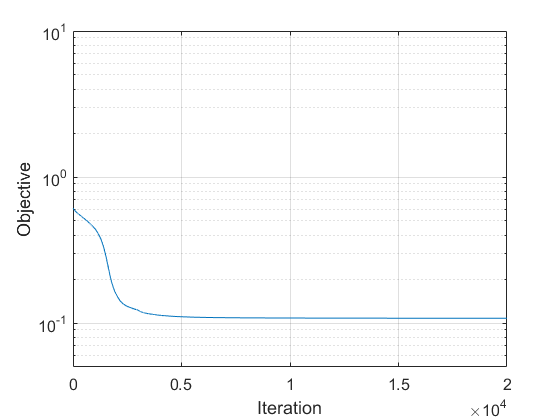}%
   \end{subfigure}%
  \begin{subfigure}[b]{0.25\linewidth}
    \centering\includegraphics[width=100pt]{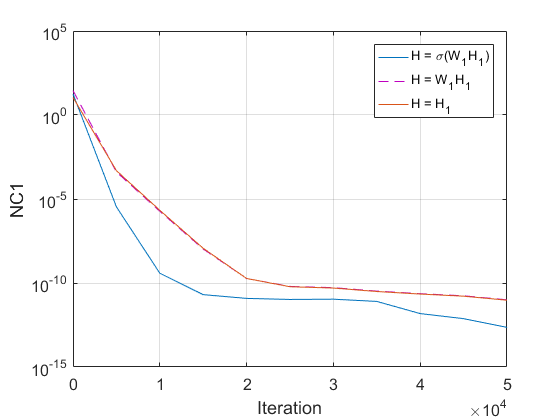}
   \end{subfigure}%
  \begin{subfigure}[b]{0.25\linewidth}
    \centering\includegraphics[width=100pt]{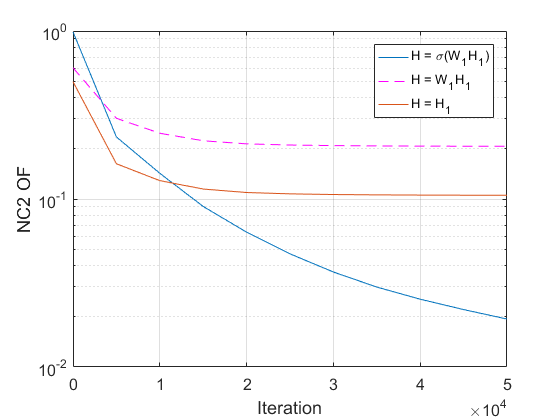}
   \end{subfigure}%
  \begin{subfigure}[b]{0.25\linewidth}
    \centering\includegraphics[width=100pt]{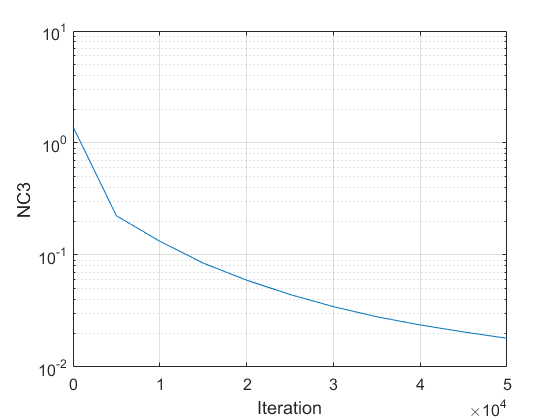}
   \end{subfigure}%
    \caption{
    Verification of Theorem~\ref{thm_nc_of_deeper_nonlin} (two levels of features with ReLU activation).
    From left to right: the objective value, NC1 (within-class variability), NC2 (similarity of the features to OF), and NC3 (alignment between the weights and the features).
    }
\label{fig:thm4}

\vspace{4mm}

  \centering
  \begin{subfigure}[b]{0.25\linewidth}
    \centering\includegraphics[width=107pt]{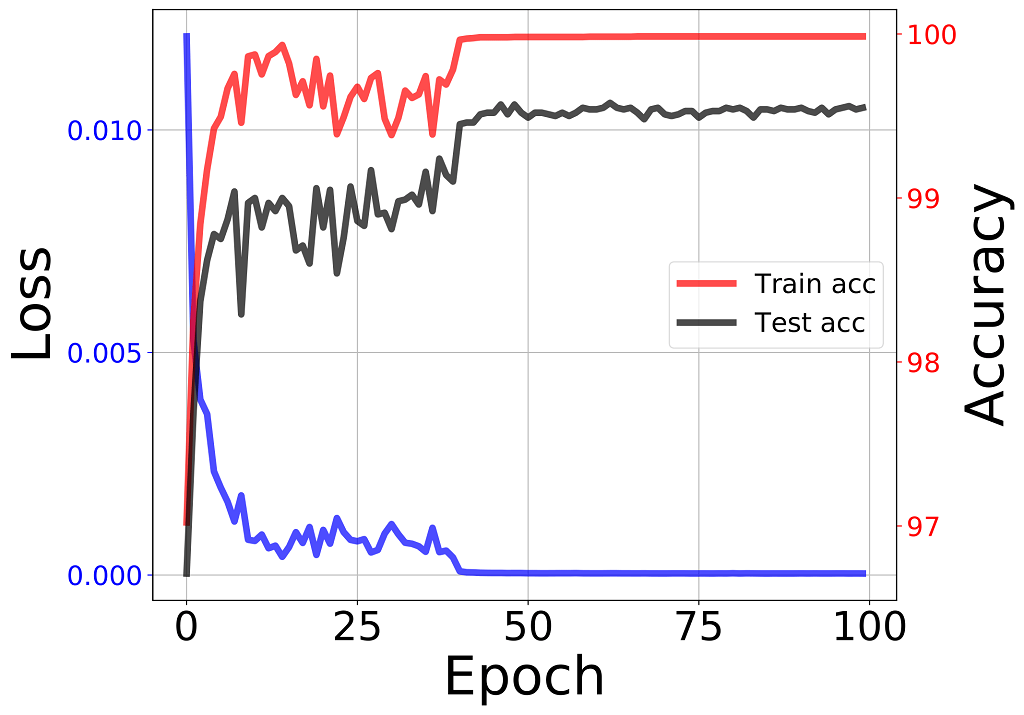}
   \end{subfigure}%
  \begin{subfigure}[b]{0.25\linewidth}
    \centering\includegraphics[width=96pt]{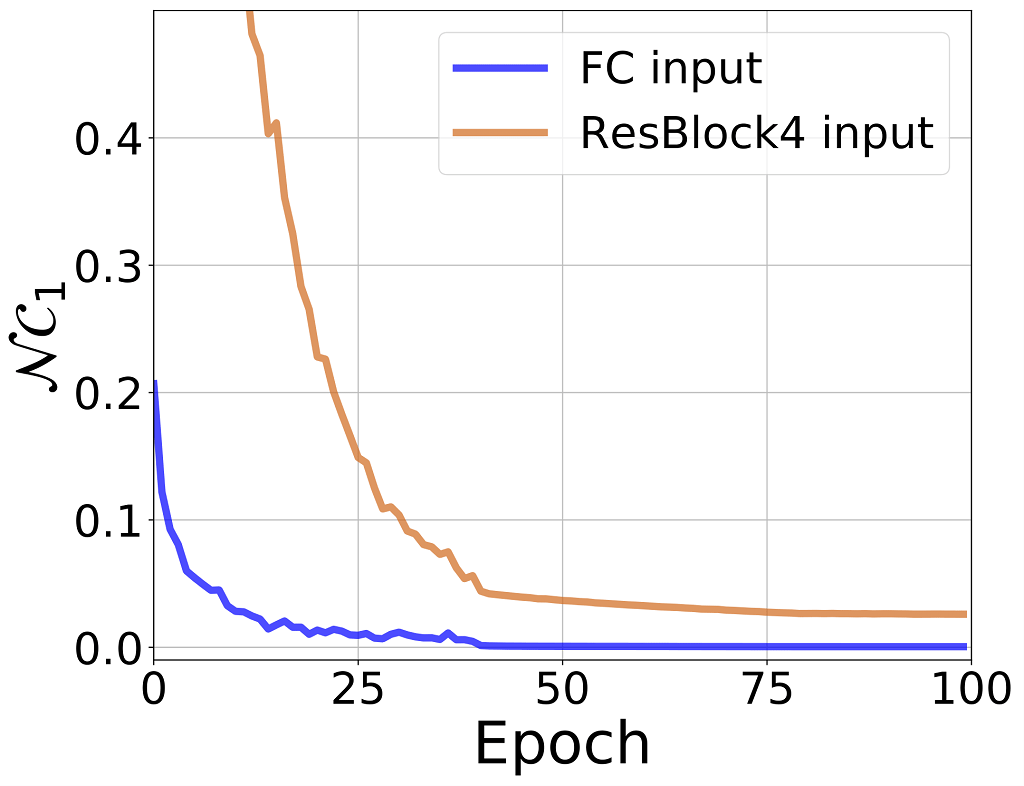}
   \end{subfigure}%
  \begin{subfigure}[b]{0.25\linewidth}
    \centering\includegraphics[width=96pt]{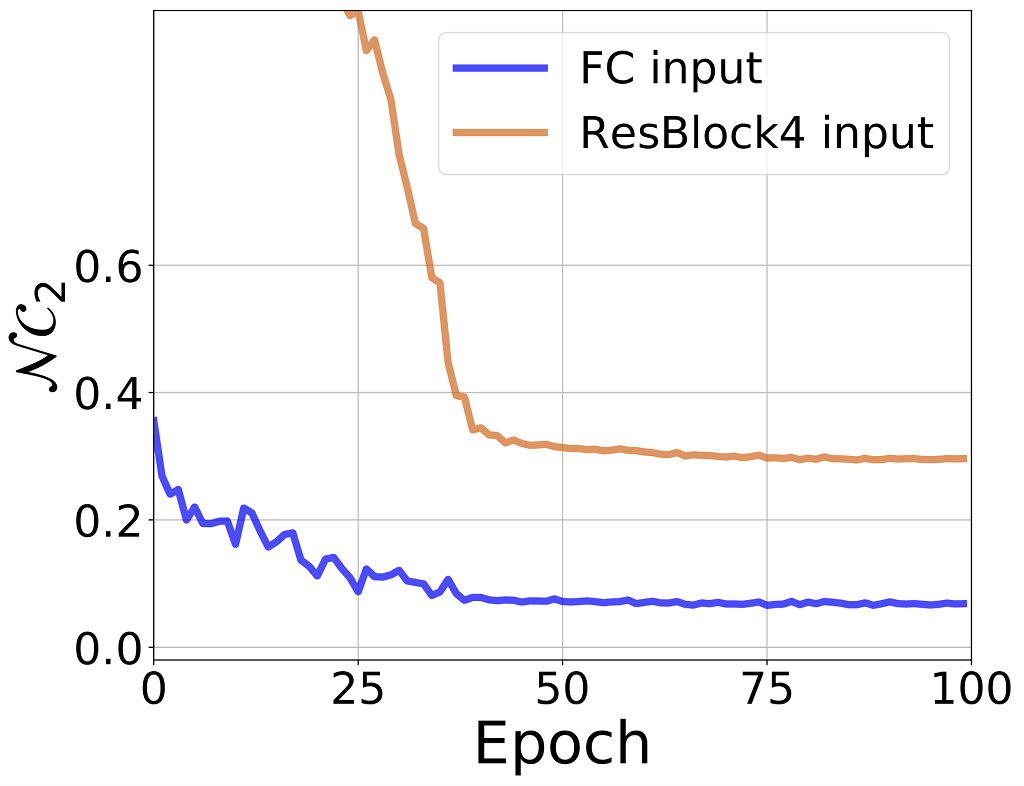}
   \end{subfigure}%
  \begin{subfigure}[b]{0.25\linewidth}
    \centering\includegraphics[width=96pt]{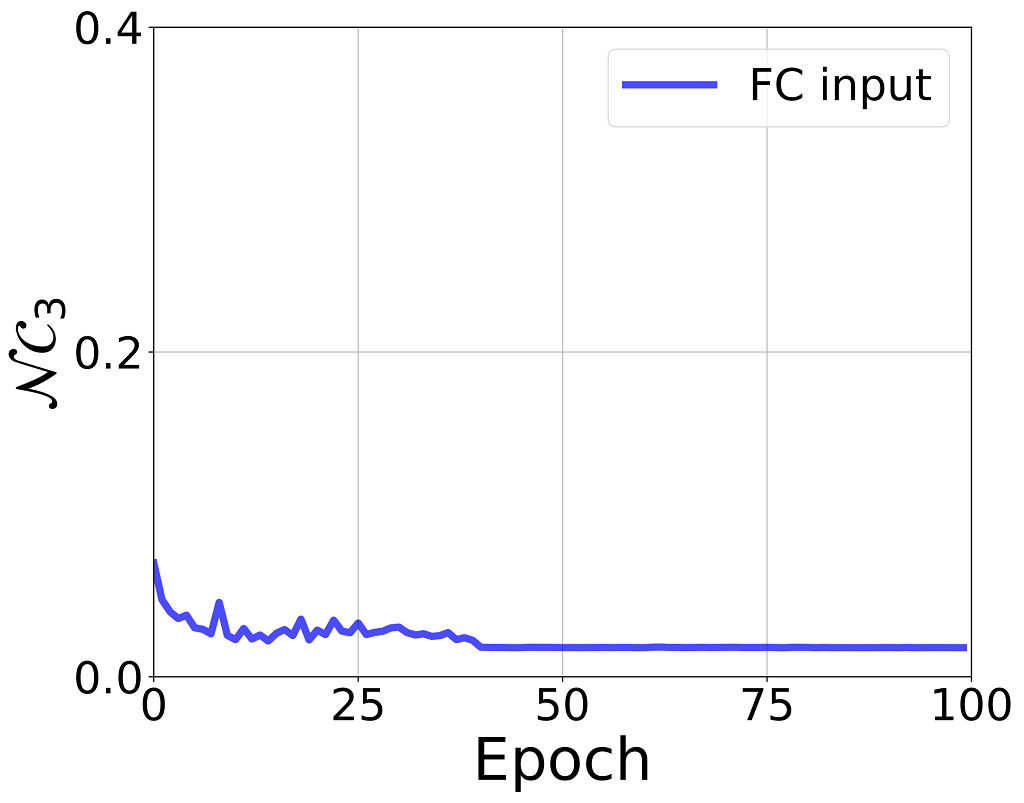}
   \end{subfigure}%
\\
  \begin{subfigure}[b]{0.25\linewidth}
    \centering\includegraphics[width=107pt]{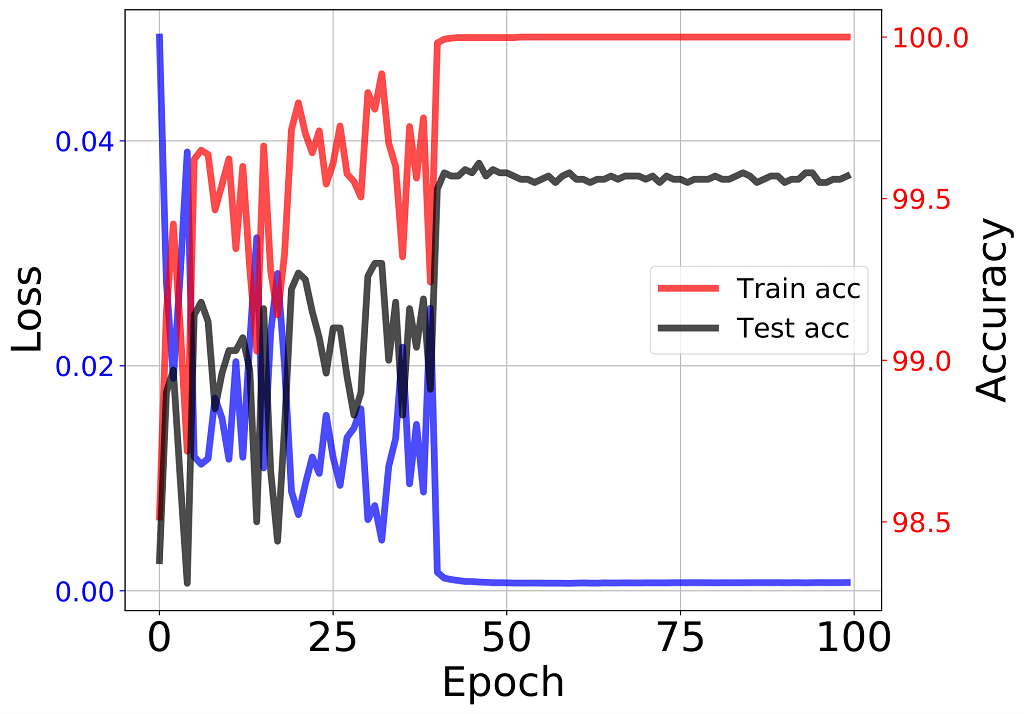}
   \end{subfigure}%
  \begin{subfigure}[b]{0.25\linewidth}
    \centering\includegraphics[width=96pt]{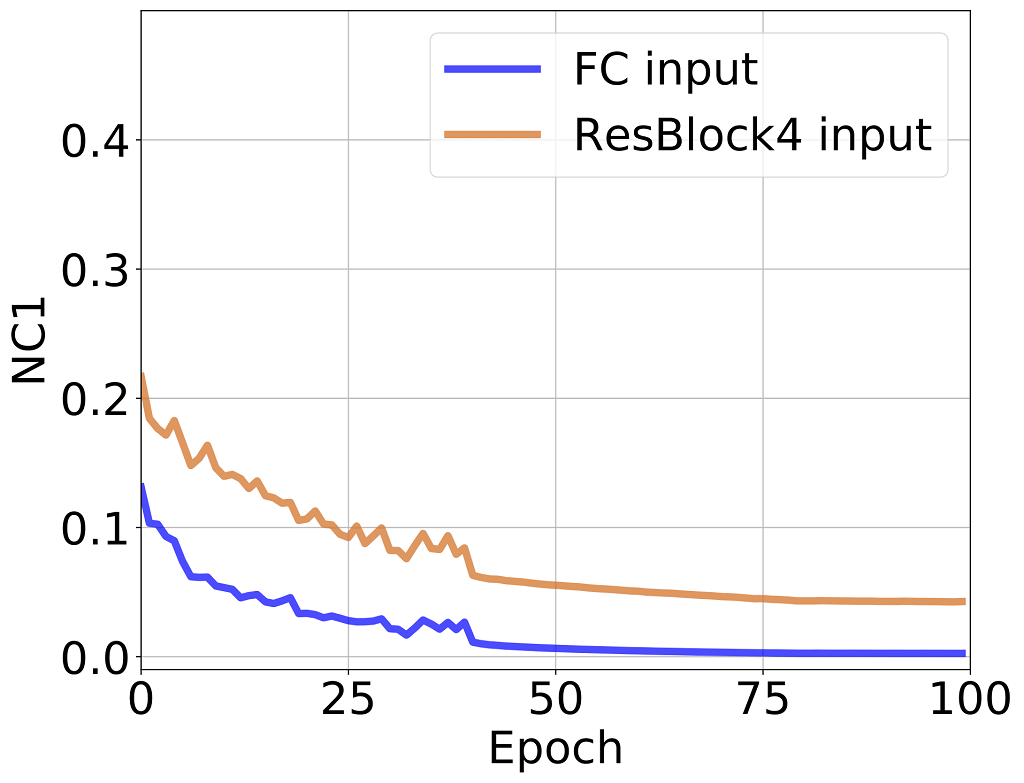}
   \end{subfigure}%
  \begin{subfigure}[b]{0.25\linewidth}
    \centering\includegraphics[width=96pt]{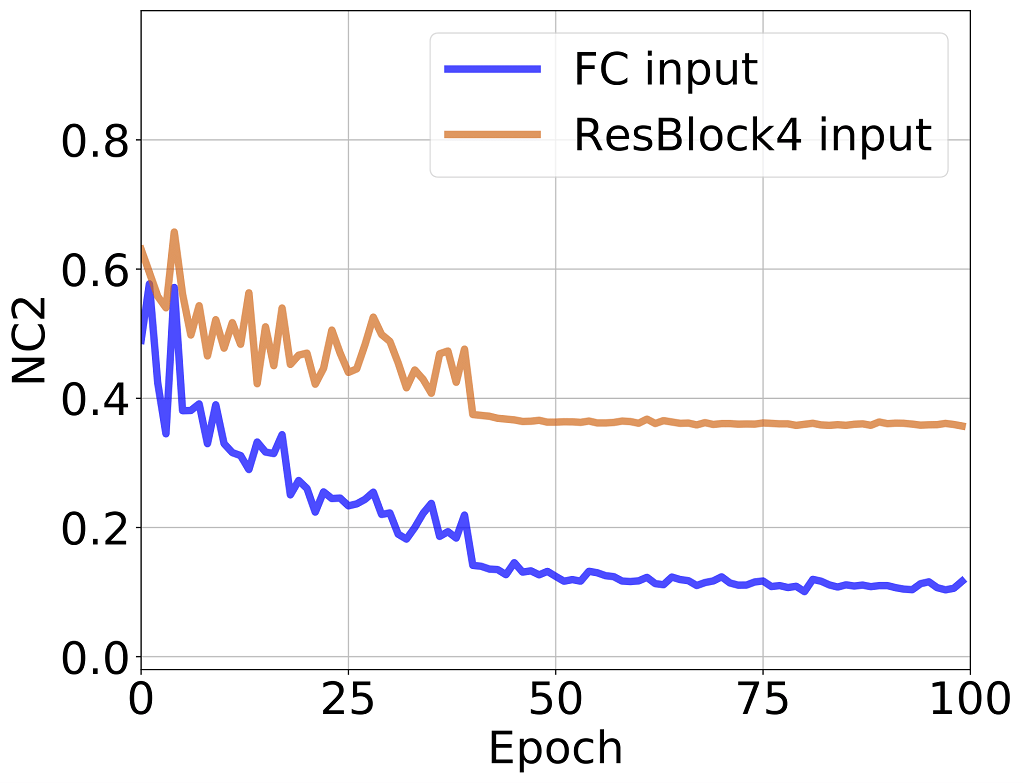}
   \end{subfigure}%
  \begin{subfigure}[b]{0.25\linewidth}
    \centering\includegraphics[width=96pt]{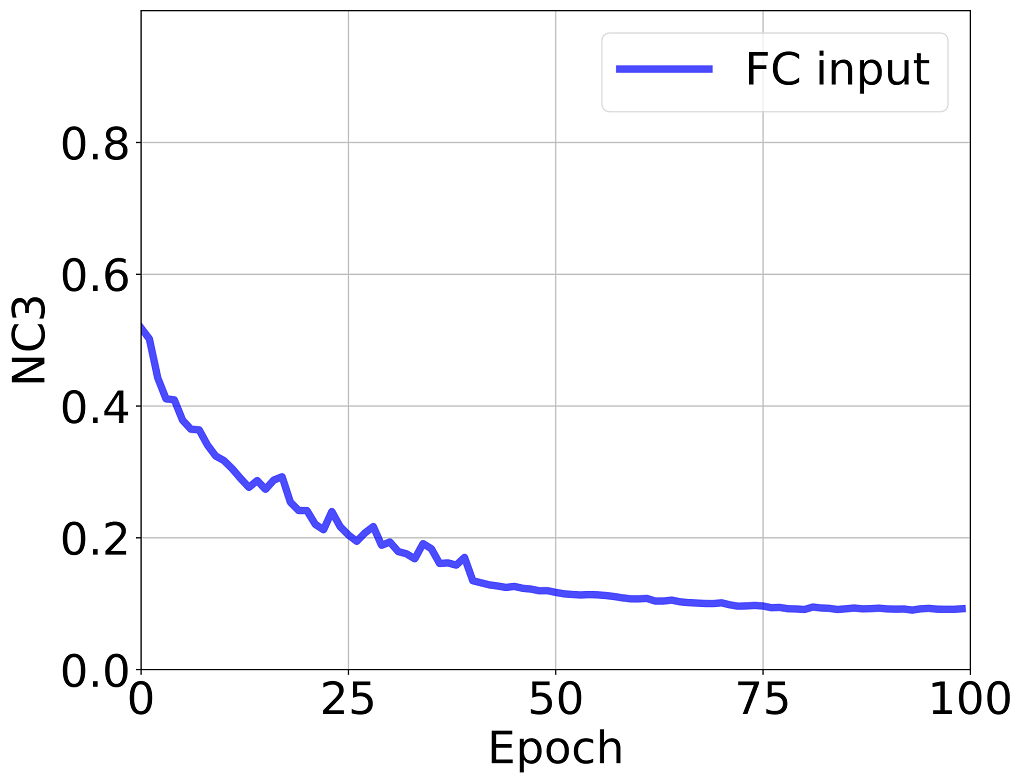}
   \end{subfigure}%
    \caption{
    NC metrics for ResNet18 trained on MNIST. Top: MSE loss, weight decay, and no bias; Bottom: Cross-entropy loss and weight decay.
    From left to right: training's objective value and accuracy, NC1 (within-class variability), NC2 (similarity of the centered features to simplex ETF), and NC3 (alignment between the weights and the features).
    }
\label{fig:mnist_mse_noBias_and_CE}     

\end{figure*}

\section{Toward Generalizing the UFMs Results to %
\tomtr{Other Models}}

\tomtm{
Similar to the existing theoretical works that demonstrate the emergence of collapsed minimizers, in this paper we considered models where the features matrix $\H$ (or $\H_1$) is a free optimization variable.
It is of high interest to make a step forward and instead of freely optimize the features connect  them to some data distribution.} 

While we defer a comprehensive study that links the models to data for future research, in this short section we demonstrate the feasibility of this goal, 
even for the plain UFM, through the following theorem.

\begin{theorem}
\label{thm_nc_of_asymp}
\tomtm{
Consider \eqref{Eq_prob} with $\lambda_H = \frac{\tilde{\lambda}_H}{n}$. Denote by $(\W^*,\H^*)$ a global minimizer of \eqref{Eq_prob} for some $n$. Following Theorem~\ref{thm_nc_of}, observe that $\H^*=\overline{\H}\otimes \1_n^\top$ %
for some $\overline{\H}\in \mathbb{R}^{d \times K}$. %
Let $\tilde{\H}_n := \overline{\H}\otimes \1_n^\top + \E_n$
where $\E_n \in \mathbb{R}^{d \times Kn}$ whose entries are i.i.d.~random variables with zero mean, variance $\sigma_e^2$, \tomt{and finite fourth moment}. 
Let
\begin{equation}
\label{Eq_fixed_H}
\hat{\W}_n = \argmin{\W} \,\, %
\frac{1}{2Kn} \| \W \tilde{\H}_n - \Y \|_F^2 + \frac{\lambda_W}{2} \|\W\|_F^2.
\end{equation}
We have that $\hat{\W}_n \xrightarrow[n \xrightarrow{} \infty]{a.s.} \frac{1}{1+\sigma_e^2 K \sqrt{\tilde{\lambda}_H/\lambda_W}} \W^*$.}  %
\end{theorem}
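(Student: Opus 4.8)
The plan is to observe that \eqref{Eq_fixed_H} is an ordinary ridge regression in $\W$, solve it in closed form, and then pass to the limit $n\to\infty$ entrywise, which is legitimate because $d$ and $K$ are held fixed. Setting the gradient of \eqref{Eq_fixed_H} to zero gives the normal equations, whose solution is
\begin{equation}
\label{Eq_ridge_closed}
\hat{\W}_n = \left( \frac{1}{Kn} \Y \tilde{\H}_n^\top \right) \left( \frac{1}{Kn} \tilde{\H}_n \tilde{\H}_n^\top + \lambda_W \I_d \right)^{-1},
\end{equation}
where the inverted $d\times d$ matrix is positive definite for every $n$ since $\lambda_W>0$. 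Thus the whole statement reduces to identifying the almost-sure limits of the two sample averages $\frac{1}{Kn}\tilde{\H}_n\tilde{\H}_n^\top$ and $\frac{1}{Kn}\Y\tilde{\H}_n^\top$.

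Next I would decompose $\tilde{\H}_n = \M + \E_n$ with $\M := \overline{\H}\otimes\1_n^\top$ and expand both averages. Because $d$ and $K$ are fixed, each of the finitely many entries of these matrices is a normalized sum of independent terms, so the strong law of large numbers applies entrywise, and intersecting the finitely many almost-sure events gives joint convergence of the full matrices (the assumed finite fourth moment is more than enough here; finite variance already suffices for these elementary applications of the SLLN). The deterministic pieces give $\frac{1}{Kn}\M\M^\top = \frac{1}{K}\overline{\H}\,\overline{\H}^\top$ and $\frac{1}{Kn}\Y\M^\top = \frac{1}{K}\overline{\H}^\top$ via the Kronecker mixed-product rule and $\1_n^\top\1_n = n$; the cross terms $\frac{1}{Kn}\E_n\M^\top$, $\frac{1}{Kn}\M\E_n^\top$, $\frac{1}{Kn}\Y\E_n^\top$ vanish because $\E_n$ is zero-mean; and the pure-noise term satisfies $\frac{1}{Kn}\E_n\E_n^\top \to \sigma_e^2\I_d$ (diagonal entries converge to $\sigma_e^2$, off-diagonal to $0$ by independence across rows). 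Hence
\begin{align}
\label{Eq_lim1}
\frac{1}{Kn}\tilde{\H}_n\tilde{\H}_n^\top &\xrightarrow{a.s.} \tfrac{1}{K}\overline{\H}\,\overline{\H}^\top + \sigma_e^2\I_d, \\
\label{Eq_lim2}
\frac{1}{Kn}\Y\tilde{\H}_n^\top &\xrightarrow{a.s.} \tfrac{1}{K}\overline{\H}^\top.
\end{align}

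Since the limiting matrix $\frac{1}{K}\overline{\H}\,\overline{\H}^\top + (\sigma_e^2+\lambda_W)\I_d$ is positive definite, matrix inversion is continuous there, so substituting \eqref{Eq_lim1}--\eqref{Eq_lim2} into \eqref{Eq_ridge_closed} yields
\begin{equation}
\label{Eq_lim_W}
\hat{\W}_n \xrightarrow{a.s.} \tfrac{1}{K}\overline{\H}^\top\left(\tfrac{1}{K}\overline{\H}\,\overline{\H}^\top + (\sigma_e^2+\lambda_W)\I_d\right)^{-1}.
\end{equation}
The last step is the algebraic simplification, which is where the stated constant emerges. Applying the push-through identity $\overline{\H}^\top(\overline{\H}\,\overline{\H}^\top + \alpha\I_d)^{-1} = (\overline{\H}^\top\overline{\H}+\alpha\I_K)^{-1}\overline{\H}^\top$ with $\alpha = K(\sigma_e^2+\lambda_W)$, together with the structure $\overline{\H}^\top\overline{\H} = \rho\I_K$ and the value $\rho = (1-c)\sqrt{\lambda_W/\tilde{\lambda}_H}$ from Theorem~\ref{thm_nc_of} (recall that the scaling $n\lambda_H=\tilde{\lambda}_H$ makes both $c = K\sqrt{\tilde{\lambda}_H\lambda_W}$ and $\W^* = \sqrt{\tilde{\lambda}_H/\lambda_W}\,\overline{\H}^\top$ independent of $n$), the right-hand side of \eqref{Eq_lim_W} collapses to the scalar multiple $\frac{1}{\rho + K(\sigma_e^2+\lambda_W)}\overline{\H}^\top$. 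Substituting $\overline{\H}^\top = \sqrt{\lambda_W/\tilde{\lambda}_H}\,\W^*$ and using the identity $c\sqrt{\lambda_W/\tilde{\lambda}_H} = K\lambda_W$ to cancel the deterministic $K\lambda_W$ contribution then produces exactly $\frac{1}{1+\sigma_e^2 K\sqrt{\tilde{\lambda}_H/\lambda_W}}\W^*$.

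I expect the probabilistic part to be essentially routine, precisely because the dimension $d$ is fixed: no random-matrix-theory input is needed, only the scalar SLLN applied to finitely many entries. The real content, and the step I would be most careful about, is the final algebraic reduction in \eqref{Eq_lim_W}: it is the specific value of $\rho$ supplied by Theorem~\ref{thm_nc_of} that makes the regularization term combine with the signal so that the push-through collapses to a single scalar and the constant comes out exactly. I would also note the degenerate edge case $c=1$ (equivalently $\rho=0$, $\overline{\H}=\0$, $\W^*=\0$), where both sides vanish and the claim holds trivially, so that the argument above may be run under $c<1$ where $\overline{\H}$ has full column rank.
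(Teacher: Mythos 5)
Your proof is correct and takes essentially the same route as the paper's: the closed-form ridge solution for $\hat{\W}_n$, almost-sure convergence of $\frac{1}{Kn}\Y\tilde{\H}_n^\top$ and $\frac{1}{Kn}\tilde{\H}_n\tilde{\H}_n^\top$ to $\frac{1}{K}\overline{\H}^\top$ and $\frac{1}{K}\overline{\H}\,\overline{\H}^\top+\sigma_e^2\I_d$, and then the push-through identity together with $\overline{\H}^\top\overline{\H}=\rho\I_K$ and the value of $\rho$ from Theorem~\ref{thm_nc_of} to extract the scalar $\frac{1}{1+\sigma_e^2 K\sqrt{\tilde{\lambda}_H/\lambda_W}}$. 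The only differences are cosmetic: the paper cites a sample-covariance convergence result (under the finite fourth moment assumption) where you use the entrywise SLLN in fixed dimension, and you additionally spell out the degenerate case $\overline{\H}=\0$, which the paper's algebra covers implicitly.
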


\begin{proof}
\tomtm{
See Appendix~\ref{app:proof_asymp}. 
The proof exploits the fact that $\hat{\W}_n$ has a closed-form expression (a function of the features matrix) that allows linking it to $\W^*$.}  
\end{proof}

Theorem~\ref{thm_nc_of_asymp} shows that as the
number of samples tend to infinity we have that properties of the optimal weights such as the orthogonal structure and the alignment with $\overline{\H}^\top$ (stated for $\W^*$ in Theorem~\ref{thm_nc_of}) are restored even with a fixed non-collapsed features matrix. %

\tomtm{
As discussed in Appendix~\ref{app:proof_asymp_intuitive}, the intuition that the asymptotic consequence of the %
deviation from ``perfectly" collapsed features
will only be some attenuation of $\W^*$ can also be seen from expending the quadratic term in \eqref{Eq_fixed_H} and eliminating the terms that are linear in the zero-mean $\E_n$. This intuition applies also for the extended UFMs  %
with fixed features (where no closed-form minimizers exist).} 
\section{Numerical Results}
\label{sec:exps}

In this section, %
we corroborate our theoretical results with experiments.
For each setting that is considered in %
the theorems of Sections~\ref{sec:plain_ufm} and \ref{sec:ext_ufm} 
we tune a gradient descent scheme to reach a global minimizer. We plot the optimization's objective value curve at different iterations, as well as 
several metrics that measure the properties of the NC, which are computed every 5e3 iterations. %
The theorems are verified by demonstrating the convergence of %
the NC metrics to zero. 
We use the following metrics for measuring NC, which are similar to those in \citep{papyan2020prevalence,zhu2021geometric} but include also %
a metric for collapse
to orthogonal frames.

First, for a given set of $n $ features for each of $K$ classes, $\{ \h_{k,i} \}$, we define the per-class and global means as
$\overline{\h}_k := \frac{1}{n}\sum_{i=1}^n \h_{k,i}$ and $\overline{\h}_G := \frac{1}{Kn}\sum_{k=1}^k \sum_{i=1}^n \h_{k,i}$, respectively,
as well as the mean features matrix
$\overline{\H}:=\left [ \overline{\h}_1, \ldots, \overline{\h}_K \right ]$.
Next, we define the within-class and between-class $d \times d$ covariance matrices
\begin{align*}
 \bSigma_W &:= \frac{1}{Kn}\sum_{k=1}^K \sum_{i=1}^n (\h_{k,i}-\overline{\h}_k)(\h_{k,i}-\overline{\h}_k)^\top,  \\ 
 \bSigma_B &:=\frac{1}{K}\sum_{k=1}^K (\overline{\h}_k-\overline{\h}_G)(\overline{\h}_k-\overline{\h}_G)^\top.
\end{align*}
Now, we turn to define three metrics of NC.

{ $NC_1$ for measuring within-class variability:}
\begin{align}
\label{Eq_NC1}
NC_1 := \frac{1}{K}\tr \left ( \bSigma_W \bSigma_B^\dagger \right ),
\end{align}
where $\bSigma_B^\dagger$ denotes the pseudoinverse of $\bSigma_B$.

{ $NC_2$ for measuring the similarity of the mean features to the structured frames:}
\begin{align}
NC_2^{ETF} &:= \left \| \frac{\overline{\H}^\top \overline{\H}}{\| \overline{\H}^\top \overline{\H} \|_F} - \frac{1}{\sqrt{K-1}}(\I_K - \frac{1}{K}\1_K\1_K^\top)  \right \|_F \nonumber \\ 
\label{Eq_NC2h_of}
NC_2^{OF} &:= \left \| \frac{\overline{\H}^\top \overline{\H}}{\| \overline{\H}^\top \overline{\H} \|_F} - \frac{1}{\sqrt{K}}\I_K  \right \|_F
\end{align}
where the simplex ETFs and the OFs are normalized to unit Frobenius norm.

{ $NC_3$ for measuring the alignment of the last weights and the mean features:}
\begin{align}
\label{Eq_NC3}
NC_3 := \left \| \W/\| \W \|_F - \overline{\H}^\top /\| \overline{\H} \|_F \right \|_F.
\end{align}

Figure~\ref{fig:thm1} corroborates Theorem~\ref{thm_nc_of} for $K=4, d=20, n=50$ %
and $\lambda_W=\lambda_H=0.005$ (no bias is used, equivalently $\lambda_b \xrightarrow{} \infty$). Both $\W$ and $\H$ are initialized with standard normal distribution and are optimized with plain gradient descent with step-size 0.1.

Figure~\ref{fig:thm2} corroborates Theorem~\ref{thm_nc_simplex_etf} for $K=4, d=20, n=50$, %
$\lambda_W=\lambda_H=0.005$ and $\lambda_b=0$. All $\W$, $\H$ and $\b$ are initialized with standard normal distribution and are optimized with plain gradient descent with step-size 0.1.

Figure~\ref{fig:thm3} corroborates %
Theorem~\ref{thm_nc_of_deeper}
for $K=4, d=20, n=50$ %
and $\lambda_{W_2}=\lambda_{W_1}=\lambda_{H_1}=0.005$ (no bias is used). All $\W_2$, $\W_1$ and $\H_1$ are initialized with standard normal distribution scaled by 0.1 and are optimized with plain gradient descent with step-size 0.1.
The metrics are computed for $\W=\W_2$ and $\H=\W_1\H_1$. 
We also compute $NC_1$ and $NC_2^{OF}$ for the first layer's features $\H=\H_1$.
The collapse of both $\W_1\H_1$ and $\H_1$ to OF (demonstrated by NC1 and NC2 converging to zero) is in agreement with  %
Theorem~\ref{thm_nc_of_deeper}. 

Figure~\ref{fig:thm4} %
corroborates Theorem~\ref{thm_nc_of_deeper_nonlin}
that considers the nonlinear model in \eqref{Eq_prob_nonlin}.
We use $K=4, d=20, n=50$ and $\lambda_{W_2}=\lambda_{W_1}=\lambda_{H_1}=0.005$ (no bias is used). All $\W_2$, $\W_1$ and $\H_1$ are initialized with standard normal distribution scaled by 0.1 and are optimized with plain gradient descent with step-size 0.1.
The metrics are computed for $\W=\W_2$ and $\H=\sigma(\W_1\H_1)$.  %
We also compute $NC_1$ and $NC_2^{OF}$ for the first layer's features $\H=\H_1$ (as well as for the pre-ReLU features $\H=\W_1\H_1$).

Comparing Figures~\ref{fig:thm3} and \ref{fig:thm4} (experiments with different hyper-parameter setting yield similar results, as shown in Appendix~\ref{app:exp}), 
we observe that adding the ReLU nonlinearity to the model better distinguishes between the behavior of the features in the two levels, both in the rate of the collapse and in its structure.

Finally, we show the similarity of the NC metrics that are obtained for the {\em nonlinear} extended UFM in Figure~\ref{fig:thm4} (rather than those in Figure~\ref{fig:thm3}) and metrics obtained by a practical well-trained DNN, namely ResNet18 \citep{he2016deep} (composed of 4 ResBlocks), trained on MNIST with SGD with learning rate 0.05 (divided by 10 every 40 epochs) and weight decay ($L_2$ regularization) of 5e-4. 
Figure~\ref{fig:mnist_mse_noBias_and_CE} shows the results for two cases: 1) MSE loss without bias in the FC layer; and 2) the widely-used setting, with cross-entropy loss and bias.
(Additional experiments with CIFAR10 dataset appear in Appendix~\ref{app:exp}).
The behaviors of the metrics in both cases correlate with the one of the extended UFM in Figure~\ref{fig:thm4}.

\section{Conclusion} %

In this work, we first characterized the (global) minimizers of the unconstrained features model (UFM) for regularized MSE loss, showing some distinctions 
from the neural collapse (NC) results that have been obtained for the 
\tomtr{UFM with} 
cross-entropy loss in recent works \tomtr{(such as the effect of the bias term)}.
Then, we mitigated the inability of the plain UFM to capture any NC behavior that happens across depth by adding another layer of weights as well as ReLU nonlinearity to the model and generalized our previous results.
Finally, we empirically verified the theorems and demonstrated the usefulness of our nonlinear extended UFM in modeling the %
NC phenomenon that occurs in the training of practical networks.

The aforementioned experiments further demonstrated the necessity of the nonlinearity in the model.
We note, however, that adding a ReLU nonlinearity in the plain UFM, after the single level of features (with no additional layer of weights), is problematic. Optimizing such a model with simple gradient-based method after random initialization (which is the common way to train practical DNNs), is doomed to fail because the negative entries in the first layer cannot be modified.
The extended model that is considered in this paper does not have this limitation.

\tomtm{
As directions for future research, 
we believe that analyzing the gradient descent dynamics of the proposed extended UFM may lead to insights 
on gradient-based training of practical networks
that cannot be obtained from the dynamics of the plain UFM.
Generalizing the results that are obtained for the plain and extended UFMs to models where the features cannot be freely optimized, but are rather linked to some data distribution is also of high interest. In this front, the result in Theorem~\ref{thm_nc_of_asymp} is encouraging, though, it is only asymptotic.
When the training data is limited and the question of generalization arises (as in real-world settings), it may not be possible to show positive effects of NC %
on the generalization without departing from the plain UFM, which has limited expressiveness when the features are fixed.
On the other hand, the proposed nonlinear extended UFM seems to be more suitable for such analysis, as, in fact, it has a shallow MLP on top of the first level of features.}

\section*{Acknowledgements}

\tomtr{This work has been partially supported by the Alfred P. Sloan Foundation, NSF RI-1816753, NSF CAREER CIF-1845360, and NSF CCF-1814524.}

\bibliography{main_camera_ready_4Arxiv}
\bibliographystyle{icml2022}

\newpage
\appendix
\onecolumn

\section{Proof of Theorem~\ref{thm_nc_of}}
\label{app:proof1}

\begin{proof}

The proof is based on lower bounding $f(\W,\H):= \frac{1}{2N} \| \W \H - \Y \|_F^2 + \frac{\lambda_W}{2} \|\W\|_F^2 + \frac{\lambda_H}{2} \|\H\|_F^2$ by a sequence of inequalities that hold with equality if and only if \eqref{Eq_thm_1}-\eqref{Eq_thm_4} are satisfied. First, observe that
\begin{align}
\label{Eq_poof_1}
&\frac{1}{2N} \| \W \H - \Y  \|_F^2 + \frac{\lambda_W}{2} \|\W\|_F^2 + \frac{\lambda_H}{2} \|\H\|_F^2 \\ \nonumber
    &= \frac{1}{2Kn} \sum_{k=1}^{K} \sum_{i=1}^{n} \| \W \h_{k,i} - \y_k \|_2^2   %
+ \frac{\lambda_W}{2} \sum_{k=1}^{K} \|\w_{k}\|_2^2 + \frac{\lambda_H}{2} \sum_{k=1}^{K} \sum_{i=1}^{n} \|\h_{k,i}\|_2^2 \\ \nonumber
    &\stackrel{(a)}{\geq} \frac{1}{2Kn} \sum_{k=1}^{K} n \frac{1}{n} \sum_{i=1}^{n} ( \w_k^\top \h_{k,i} - 1 )^2 %
+ \frac{\lambda_W}{2} \sum_{k=1}^{K} \|\w_{k}\|_2^2 + \frac{\lambda_H}{2} \sum_{k=1}^{K} n \frac{1}{n} \sum_{i=1}^{n} \|\h_{k,i}\|_2^2 \\ \nonumber 
    &\stackrel{(b)}{\geq} \frac{1}{2Kn} \sum_{k=1}^{K} n \left ( \w_k^\top \frac{1}{n} \sum_{i=1}^{n} \h_{k,i} - 1 \right )^2 %
+ \frac{\lambda_W}{2} \sum_{k=1}^{K} \|\w_{k}\|_2^2 + \frac{\lambda_H}{2} \sum_{k=1}^{K} n  \left \| \frac{1}{n} \sum_{i=1}^{n} \h_{k,i} \right \|_2^2 \\ \nonumber     
\end{align}
The inequality $(a)$ follows from ignoring all the entries except $k$ in the $K\times1$ vector $\W \h_{k,i} - \y_k$, and holds with equality iff $\w_{k'}^\top \h_{k,i}=0$ for all $k' \neq k$ and $i \in [n]$.
In $(b)$ we used Jensen's inequality, which (due to the strict convexity of $(\cdot-1)^2$ and $\|\cdot\|^2$) holds with equality iff $\h_{k,1}=\ldots=\h_{k,n}$ for all $k \in [K]$.
Indeed, note that the equality condition for $(b)$ is satisfied by \eqref{Eq_thm_1}, and the equality condition for $(a)$ is a consequence of \eqref{Eq_thm_1}, \eqref{Eq_thm_3} and \eqref{Eq_thm_4} (which yield \eqref{Eq_thm_6}).

Next, to simplify the notation, let us denote $\h_k := \frac{1}{n} \sum_{i=1}^{n} \h_{k,i}$. Thus, continuing from the last RHS in \eqref{Eq_poof_1}, we have
\begin{align}
\label{Eq_poof_2}
    &\frac{1}{2K} \sum_{k=1}^{K} \left ( \w_k^\top \h_k - 1 \right )^2  %
+ \frac{\lambda_W}{2} K\frac{1}{K} \sum_{k=1}^{K} \|\w_{k}\|_2^2 + \frac{n\lambda_H}{2} K\frac{1}{K} \sum_{k=1}^{K} \left \| \h_k \right \|_2^2 \\ \nonumber     
    &\stackrel{(c)}{\geq} \frac{1}{2} \left ( \frac{1}{K} \sum_{k=1}^{K} \w_k^\top \h_k - 1 \right )^2  %
+ \frac{\lambda_W}{2} K \left ( \frac{1}{K} \sum_{k=1}^{K} \|\w_{k}\|_2 \right )^2 + \frac{n\lambda_H}{2} K \left( \frac{1}{K} \sum_{k=1}^{K} \left \| \h_k \right \|_2 \right)^2 \\ \nonumber  
    &\stackrel{(d)}{\geq} \frac{1}{2} \left ( \frac{1}{K} \sum_{k=1}^{K} \w_k^\top \h_k - 1 \right )^2  %
+ K \sqrt{n \lambda_H \lambda_W} \left ( \frac{1}{K} \sum_{k=1}^{K} \|\w_{k}\|_2 \right ) \left( \frac{1}{K} \sum_{k=1}^{K} \left \| \h_k \right \|_2 \right) %
\end{align}
In $(c)$ we used Jensen's inequality, which holds with equality iff 
\begin{align*}
      &\w_{1}^\top \h_{1} = \ldots = \w_{K}^\top \h_{K}, \\
      &\| \w_{1} \|_2  = \ldots = \| \w_{K} \|_2, \\
      &\| \h_{1} \|_2  = \ldots = \| \h_{K} \|_2, 
\end{align*}
which are satisfied when conditions \eqref{Eq_thm_2} and \eqref{Eq_thm_4} are satisfied.
In $(d)$ we used the AM–GM inequality, i.e., $\frac{a}{2}+\frac{b}{2} \geq \sqrt{ab}$, with $a = \lambda_W \left ( \frac{1}{K} \sum_{k=1}^{K} \|\w_{k}\|_2 \right )^2$ and $b = n\lambda_H \left( \frac{1}{K} \sum_{k=1}^{K} \left \| \h_k \right \|_2 \right)^2$. It holds with equality iff $a=b$, which is satisfied by \eqref{Eq_thm_4} that implies $\lambda_W \|\w_k\|_2^2 = n \lambda_H \|\h_k\|_2^2$.

Note that so far all the iff conditions are satisfied by both $(\W^*,\H^*)$ that satisfy \eqref{Eq_thm_1}-\eqref{Eq_thm_4} and the trivial $(\W^*,\H^*)=(\0,\0)$. 
Now, it is left to show that if $K \sqrt{n \lambda_H \lambda_W} \leq 1$
then $\w_k$ and $\h_k$ must have the same direction, as implied by \eqref{Eq_thm_4}, which will also yield the orthogonality of $\{\h^*_k\}$ and $\{\w^*_k\}$.
While for $K \sqrt{n \lambda_H \lambda_W} > 1$, we get the zero minimizer.

As all the inequalities (a)-(d) are attainable with iff conditions, we can consider now $(\W,\H)$ that satisfy these conditions to further lower the bound. Specifically, using the symmetry w.r.t. $k$, the last RHS in \eqref{Eq_poof_2} turns into the expression 
\begin{align}
&\frac{1}{2} \left ( \w_k^\top \h_k -1 \right )^2 + K\sqrt{n\lambda_H\lambda_W}\|\w_k\|_2\|\h_k\|_2 \\ \nonumber
&= \frac{1}{2} \left ( \|\w_k\|_2\|\h_k\|_2 \mathrm{cos}{\alpha} -1 \right )^2 + K\sqrt{n\lambda_H\lambda_W}\|\w_k\|_2\|\h_k\|_2,
\end{align}
where $\alpha$ is the angle between $\w_k$ and $\h_k$.
Invoking Lemma~\ref{lemma_aux} with $\beta = \|\w_k\|_2\|\h_k\|_2$ and $c=K\sqrt{n\lambda_H\lambda_W}$, we get that if $K\sqrt{n\lambda_H\lambda_W}>1$ then the minimizer is $(\W^*,\H^*)=(\0,\0)$ (since $\|\w_k\|_2\|\h_k\|_2=0$), and otherwise, the minimizer must obey $\alpha=0$.
Therefore, we get the desired result that $\w_k^*$ and $\h_k^*$ must have the same direction.
Together with $\lambda_W \|\w_k\|_2^2 = n \lambda_H \|\h_k\|_2^2$ (which is required to attain equality for AM-GM), we get the necessity %
of $\w_k^* = \sqrt{n \lambda_H / \lambda_W} \h_{k}^*$ in \eqref{Eq_thm_4}. 
Finally, the orthogonality of $\{\h^*_k\}$ (and similarly of ${\w^*_k}$) follows from
$$
\h_{k'}^{*\top} \h_{k}^* = \frac{1}{\sqrt{n \lambda_H / \lambda_W}} \w_{k'}^{*\top} \h_{k}^* = 0 \,\,\,\, \forall k'\neq k
$$
where we used the previous condition $\w_{k'}^{*\top} \h_{k}^*=0$ for all $k' \neq k$, which is necessary to attain equality in \eqref{Eq_poof_1}.

\end{proof}

\begin{lemma}
\label{lemma_aux}
Let
\begin{align}
\label{Eq_lemma}
\tilde{f}(\alpha,\beta) = \frac{1}{2} \left ( \beta \mathrm{cos}{\alpha} -1 \right )^2 + c \beta,
\end{align}
where $\beta \geq 0$ and $c>0$. Then, (i) if $c>1$ then $\tilde{f}$ is minimized by $\beta^*=0$ and the minimal value is $\frac{1}{2}$; (ii) if $c \leq 1$ then $\tilde{f}$ is minimized by $(\alpha^*,\beta^*)=(0,1-c)$ and the minimal value is $c-\frac{1}{2}c^2$.
\end{lemma}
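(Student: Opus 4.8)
The plan is to minimize $\tilde f(\alpha,\beta) = \frac{1}{2}(\beta\cos\alpha - 1)^2 + c\beta$ over $\beta \geq 0$ and $\alpha \in \mathbb{R}$ by first optimizing over $\alpha$ for fixed $\beta$, then reducing to a one-dimensional problem in $\beta$. Since $\beta \geq 0$ and $c > 0$, the penalty term $c\beta$ does not depend on $\alpha$, so minimizing over $\alpha$ amounts to minimizing $(\beta\cos\alpha - 1)^2$. For fixed $\beta \geq 0$, the quantity $\beta\cos\alpha$ ranges over $[-\beta, \beta]$, and we want it as close to $1$ as possible. The optimal choice is $\cos\alpha = 1$ (i.e., $\alpha^* = 0$), which is attainable for every $\beta \geq 0$ and makes $\beta\cos\alpha = \beta$ as large as possible toward the target value $1$. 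This reduces the problem to minimizing the single-variable function
\begin{equation*}
g(\beta) := \frac{1}{2}(\beta - 1)^2 + c\beta, \qquad \beta \geq 0.
\end{equation*}

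First I would expand $g(\beta) = \frac{1}{2}\beta^2 - \beta + \frac{1}{2} + c\beta = \frac{1}{2}\beta^2 + (c-1)\beta + \frac{1}{2}$, which is a convex parabola in $\beta$ with unconstrained minimizer at $g'(\beta) = \beta + (c-1) = 0$, i.e., $\beta = 1 - c$. I would then split into the two stated cases according to the sign of $1-c$. In case (ii), when $c \leq 1$, the unconstrained minimizer $\beta^* = 1 - c$ lies in the feasible region $\beta \geq 0$, so it is the global constrained minimizer; substituting back gives the minimal value $g(1-c) = \frac{1}{2}(1-c-1)^2 + c(1-c) = \frac{1}{2}c^2 + c - c^2 = c - \frac{1}{2}c^2$, matching the claim, together with $\alpha^* = 0$. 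In case (i), when $c > 1$, the unconstrained vertex $1-c$ is negative, so on $\beta \geq 0$ the convex function $g$ is increasing and attains its minimum at the boundary $\beta^* = 0$, with value $g(0) = \frac{1}{2}$, again matching the claim.

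The main obstacle, which is minor here, is handling the $\alpha$-optimization rigorously: I must note that at $\beta = 0$ the value of $\alpha$ is irrelevant (the term $(\beta\cos\alpha - 1)^2 = 1$ regardless of $\alpha$), so in case (i) the minimizer is characterized by $\beta^* = 0$ alone without pinning down $\alpha^*$, consistent with the lemma's asymmetric phrasing. I would also briefly verify that the reduction to $\cos\alpha = 1$ does not discard any better configuration: for any fixed $\beta > 0$, the map $t \mapsto \frac{1}{2}(t-1)^2$ restricted to $t \in [-\beta,\beta]$ is minimized at the feasible point closest to $1$, and since $\beta \leq 1$ in the relevant regime (and more generally the analysis handles both $\beta < 1$ and $\beta \geq 1$ via the parabola), taking $t = \beta$ is optimal; this justifies the two-stage minimization and completes the proof.
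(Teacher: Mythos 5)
Your reduction contains a genuine gap: the claim that the inner minimization over $\alpha$ always yields $\cos\alpha^*=1$ is false when $\beta>1$. For fixed $\beta>1$ the feasible set of $t=\beta\cos\alpha$ is $[-\beta,\beta]$, which \emph{contains} the target value $1$, so the optimal choice is $\cos\alpha=1/\beta$, the squared term vanishes, and
\begin{equation*}
\min_{\alpha}\tilde f(\alpha,\beta)=c\beta \;<\; g(\beta)=\tfrac{1}{2}(\beta-1)^2+c\beta \qquad(\beta>1).
\end{equation*}
Consequently $g$ is \emph{not} the partial minimum of $\tilde f$ on $\beta>1$, and minimizing $g$ over all $\beta\geq 0$ only gives an upper bound on $\min\tilde f$; it does not by itself rule out a better configuration with $\beta>1$ and $\cos\alpha=1/\beta$. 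Your closing remark that ``$\beta\leq 1$ in the relevant regime'' is circular (the relevant regime is what you are trying to establish), and the parenthetical that ``the analysis handles both $\beta<1$ and $\beta\geq 1$ via the parabola'' is precisely the step that fails.

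The gap is easily patched, and the patch is exactly the case the paper's proof treats explicitly: for $\beta\geq 1$ one has $\min_{\alpha}\tilde f(\alpha,\beta)=c\beta\geq c$, and this value always exceeds both candidate minima, since $c>c-\tfrac{1}{2}c^2$ when $c\leq 1$ and $c>1>\tfrac{1}{2}$ when $c>1$; hence no global minimizer lies in $\beta\geq 1$, and your parabola analysis, which is valid on $[0,1]$, determines the minimum. With this three-line addition your argument becomes the paper's proof almost verbatim (two-stage minimization, $\alpha$ first, then the convex parabola in $\beta$, with the $\beta=0$, $0<\beta<1$, $\beta\geq 1$ split); without it, the proof as written rests on an incorrect assertion about the $\alpha$-optimization.
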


\begin{proof}
The proof is based on separately analyzing the cases $\beta=0$, $0<\beta<1$ and $\beta \geq 1$.

For $\beta=0$, we get objective value of $\frac{1}{2}$ for any $\alpha$. 
Assuming that $0<\beta<1$, clearly, the minimizer of \eqref{Eq_lemma} w.r.t. $\alpha$ is only $\alpha^*=0$ (or other integer multiplications of $2\pi$). Thus, we have
$$
\tilde{f}(0,\beta) = \frac{1}{2} \left ( \beta -1 \right )^2 + c \beta = \frac{1}{2} \beta^2 - (1-c)\beta + \frac{1}{2},
$$
which is a ``smiling" parabola in $\beta$, with feasible minimum at $\beta^*=\mathrm{max}\{1-c, 0\}$.
This means that if $c>1$ we get the (feasible) minimum at $(\alpha^*,\beta^*)=(0,0)$, for which $\tilde{f}(\alpha^*,\beta^*)=\frac{1}{2}$. 
If $c\leq1$, we get minimum at $(\alpha^*,\beta^*)=(0,1-c)$ with objective value of $\tilde{f}(\alpha^*,\beta^*)=\frac{1}{2}c^2 +c(1-c) = c - \frac{1}{2}c^2$.

Assuming that $\beta \geq 1$, the first term in \eqref{Eq_lemma} is minimized (eliminated) by $\alpha^*=\mathrm{arccos}(1/\beta)$. Thus, we get
$
\tilde{f}(\alpha^*,\beta) = c \beta,
$
which is minimized by $\beta^*=1$, and the objective value is $\tilde{f}(\alpha^*,\beta^*)=c$.
Since $c>0$, note that this value is always larger than the minimal value obtained for $\beta < 1$.

To summarize, (i) if $c>1$ %
we get the minimizers $\tilde{f}(\alpha^*,\beta^*=0)=\frac{1}{2}$; (ii) 
If $c \leq 1$ %
we get the minimizer $\tilde{f}(\alpha^*=0,\beta^*=1-c)=c-\frac{1}{2}c^2$. %

\end{proof}

\newpage
\subsection{Alternative proof for Theorem~\ref{thm_nc_of}}
\label{app:proof1_alt}

We present here an alternative proof for Theorem~\ref{thm_nc_of}. 
\tomr{The strategy of this proof is more similar (than the preceding proof)} 
to the one we take to handle the three layer case in Appendix~\ref{app:proof3}. 
\tomr{(Steps in the previous version of this appendix, which can be justified with alignment of singular-bases similarly to Appendix A in \cite{arora2018optimization}, are replaced by more elementary 
and simple arguments).} 

\tomr{
We start by showing the zero within-class variability property of a minimizer $\H$ similarly to the previous proof:
\begin{align}
\label{Eq_poof_1_alt}
&\frac{1}{2N} \| \W \H - \Y  \|_F^2 + \frac{\lambda_W}{2} \|\W\|_F^2 + \frac{\lambda_H}{2} \|\H\|_F^2 \\ \nonumber
    &= \frac{1}{2Kn} \sum_{k=1}^{K} n\frac{1}{n} \sum_{i=1}^{n} \| \W \h_{k,i} - \y_k \|_2^2 
+ \frac{\lambda_W}{2} \|\W\|_F^2 + \frac{\lambda_H}{2} \sum_{k=1}^{K} n\frac{1}{n} \sum_{i=1}^{n} \|\h_{k,i}\|_2^2 \\ \nonumber
    &\geq \frac{1}{2Kn} \sum_{k=1}^{K} n \left \| \W \frac{1}{n} \sum_{i=1}^{n} \h_{k,i} - \y_k \right \|^2 %
+ \frac{\lambda_W}{2} \|\W\|_F^2 + \frac{\lambda_H}{2} \sum_{k=1}^{K} n  \left \| \frac{1}{n} \sum_{i=1}^{n} \h_{k,i} \right \|_2^2,    
\end{align}
where 
we used Jensen's inequality, which (due to the strict convexity of %
$\|\cdot\|^2$) holds with equality iff $\h_{k,1}=\ldots=\h_{k,n}$ for all $k \in [K]$.
That is, the minimizer must exhibit zero within-class variability: $\H = \overline{\H} \otimes \1_n^\top$ for some $\overline{\H} \in \mathbb{R}^{d \times K}$.} %

\tomr{We proceed
by computing the gradients of the objective (after substituting $\H = \overline{\H} \otimes \1_n^\top$ and $\Y=\I_K \otimes \1_n^\top$): 
$f(\W, \overline{\H} ):= \frac{1}{2K} \| \W \overline{\H} - \I_K \|_F^2 + \frac{\lambda_W}{2} \|\W\|_F^2 + \frac{n\lambda_H}{2} \| \overline{\H} \|_F^2$
\begin{align}
\frac{\partial f}{\partial \overline{\H}} &= \W^\top \frac{1}{K} ( \W \overline{\H} - \Y) + n\lambda_{H} \overline{\H}, \\
\frac{\partial f}{\partial \W} &= \frac{1}{K} ( \W \overline{\H} - \I_K) \overline{\H}^\top + \lambda_{W} \W. 
\end{align}
From $\frac{\partial f}{\partial \W}=\0$, observe that the minimizer w.r.t.~$\W$ is a closed-form function of $\overline{\H}$:
\begin{align}
\label{app1_Eq_W_func_H}
\W(\overline{\H}) = \overline{\H}^\top ( \overline{\H} \overline{\H}^\top + K \lambda_{W} \I_d )^{-1}.
\end{align}
(Similarly, from $\frac{\partial f}{\partial \overline{\H}} = \0$ we can express the minimizer w.r.t.~$\overline{\H}$ as a function of $\W$.)}

\tomr{
Let us denote the compact SVD of $\overline{\H}=\U\S\V^\top$, where $\U \in \mathbb{R}^{d \times K}$ is a partial orthonormal matrix and $\V \in \mathbb{R}^{K \times K}$ is an orthonormal matrix and $\S \in \mathbb{R}^{K \times K}$ is a diagonal matrix with $\{ s_k \}_{k=1}^K$ on its diagonal. 
From \eqref{app1_Eq_W_func_H} %
we have that a minimizer $(\W,\overline{\H})$ obeys
\begin{align*}
    &\W %
    = \V \S \U^\top (\U \S^2 \U^\top + K\lambda_{W} \I_d )^{-1} = \V \S (\S^2 + K\lambda_{W} \I_K )^{-1} \U^\top, \\
    &\W \overline{\H} - \I_K = \V \left ( \S (\S^2 + K\lambda_{W}\I_d)^{-1} \S - \I_K \right ) \V^\top = \V \mathrm{diag} \left \{ \frac{-K\lambda_{W}}{(s_1^2 + K\lambda_{W})} , ... , \frac{-K\lambda_{W}}{(s_K^2 + K\lambda_{W})} \right \} \V^\top.
\end{align*}
Substituting these expressions in the objective, and using the fact that unitary operators do not change the Frobenius norm, we can express the objective as a function of the singular values $\{s_k\}$
\begin{align}
\label{Eq_poof_2_alt}
f(\W, \overline{\H}) &= \frac{1}{2K} \sum_{k=1}^K \frac{(K\lambda_{W})^2}{(s_k^2 + K\lambda_{W})^2} + \frac{\lambda_{W}}{2} \sum_{k=1}^K \frac{s_k^2}{(s_k^2 + K\lambda_{W})^{2}} + \frac{n\lambda_{H}}{2} \sum_{k=1}^K s_k^2 \\ \nonumber
&= \sum_{k=1}^K \left ( \frac{\lambda_W}{2} \frac{1}{s_k^2 + K\lambda_{W}} + \frac{n\lambda_{H}}{2} s_k^2 \right ).
\end{align}
At this point, we already see that the objective is separable w.r.t.~the singular values, which implies that the minimizer obeys $s_1 = ... = s_K =:s$.
Therefore, $\overline{\H} = s \U \V^\top$ and $\W = \frac{s}{s^2+K\lambda_W} \V \U^\top \propto \overline{\H}^\top$.}

\tomr{
The flat spectrum of $\overline{\H}$ implies that $\overline{\H}^\top \overline{\H} \propto \I_K$, since
\begin{align}
\overline{\H}^\top \overline{\H} = s^2 \V \U^\top \U \V^\top = s^2 \V \V^\top = s^2 \I_K.
\end{align}
Similarly, $\W\overline{\H} \propto \I_K$ %
and $\W\W^\top \propto \I_K$. 
Denoting $\sigma_{\overline{H}}:=s$ and $\sigma_{W}:=\frac{s}{s^2+K\lambda_W}$, 
the above results imply that %
a minimizer of \eqref{Eq_prob} %
is given by $(\W,\overline{\H} \otimes \1_n^\top)$ where
\begin{align}
\label{Eq_poof_3_alt}
    \W &= \sigma_{W} \R^\top \in \mathbb{R}^{K \times d} \\
\label{Eq_poof_4_alt}    
    \overline{\H} &= \sigma_{\overline{H}} \R \in \mathbb{R}^{d \times K}
\end{align}
with arbitrary orthonormal matrix $\R \in \mathbb{R}^{d \times K}$ ($\R^\top\R = \I_K$).}

\tomr{
We can compute $s$ by minimization of one term of \eqref{Eq_poof_2_alt}. Alternatively, the values of $\sigma_W$ and $\sigma_{\overline{H}}$ can be determined by minimizing the simplified objective that resembles the one in the previous appendix (obtained by substituting \eqref{Eq_poof_3_alt}-\eqref{Eq_poof_4_alt}):}
\begin{align}
\label{app1_obj_W_Hbar3}
f(\W, \overline{\H}) &= \frac{1}{2} ( \sigma_{{W}} \sigma_{\overline{H}} - 1 )^2 + K \frac{\lambda_W}{2} \sigma_W^2 + K \frac{n\lambda_H}{2} \sigma_{\overline{H}}^2.
\end{align}
The derivatives are given by
\begin{align}
\label{app1_obj_W_Hbar4}
\frac{\partial}{\partial \sigma_W} f &= \sigma_{\overline{H}} ( \sigma_{{W}} \sigma_{\overline{H}} - 1 ) + K \lambda_W \sigma_W = 0, \\
\frac{\partial}{\partial \sigma_{\overline{H}}} f &= \sigma_{W} ( \sigma_{{W}} \sigma_{\overline{H}} - 1 ) + K n \lambda_H \sigma_{\overline{H}} = 0,
\end{align}
implying that $\lambda_W \sigma_W^2 = n \lambda_H \sigma_{\overline{H}}^2$, which can also be obtained by attaining the AM-GM inequality
$$
K \frac{\lambda_W}{2} \sigma_W^2 + K \frac{n\lambda_H}{2} \sigma_{\overline{H}}^2 \geq K \sqrt{n\lambda_H\lambda_W} \sigma_{{W}} \sigma_{\overline{H}}.
$$
Therefore, setting $\beta=\sigma_{{W}} \sigma_{\overline{H}}$, to find the eigenvalues of the minimizers we just need to find $\beta \geq 0$ that minimizes
\begin{align}
\tilde{f}(\beta) = \frac{1}{2} \left ( \beta -1 \right )^2 + c \beta,
\end{align}
for $c = K \sqrt{n\lambda_H\lambda_W} > 0$.
It can be shown that: (i) if $c>1$ then $\tilde{f}$ is minimized by $\beta^*=0$ and the minimal value is $\frac{1}{2}$; (ii) if $c \leq 1$ then $\tilde{f}$ is minimized by $\beta^*=1-c$ and the minimal value is $c-\frac{1}{2}c^2$.

Summarizing our finding, we have that if $c = K \sqrt{n\lambda_H\lambda_W} > 1$ then the minimizer is $(\W,\H)=(\0,\0)$ (because the singular values of the matrices are zero).
On the other hand, if $c = K \sqrt{n\lambda_H\lambda_W} \leq 1$ then the minimizers obey $\H = \overline{\H} \otimes \1_n^\top$, $\W = \sqrt{\frac{n\lambda_H}{\lambda_W}}\overline{\H}^\top$, and 
$$
\W \overline{\H} = \sigma_W \sigma_{\overline{H}} \I_K = (1-c) \I_K
$$
$$
\overline{\H}^\top \overline{\H} = \sigma_{\overline{H}}^2 \I_K = (1-c) \sqrt{\frac{\lambda_W}{n \lambda_H}} \I_K
$$
$$
\W \W^\top = \sigma_{W}^2 \I_K = (1-c) \sqrt{ \frac{n \lambda_H}{\lambda_W} } \I_K
$$
as stated in the theorem.

\newpage

\section{Proof of Theorem~\ref{thm_nc_simplex_etf}}
\label{app:proof2}

\begin{proof}

First, note that the objective %
\begin{align}
\label{app22_Eq_poof_1}
f(\W,\H,\b) := \frac{1}{2N} \| \W \H + \b \1_N^\top - \Y \|_F^2 + \frac{\lambda_W}{2} \|\W\|_F^2 + \frac{\lambda_H}{2} \|\H\|_F^2
\end{align}
is convex w.r.t. $\b$, for which there is the following closed-form minimizer (which depends on $\W\H$)
\begin{align}
\label{app22_Eq_poof_2}
\b^* = \frac{1}{N} \left ( \Y - \W\H \right ) \1_N = \frac{1}{N} \sum_{k=1}^K \sum_{i=1}^n ( \y_k - \W \h_{k,i} ).
\end{align}
Since $\{ \y_k \}$ are one-hot vectors, note that for $k' \in [K]$
\begin{align}
\label{app22_Eq_poof_b}
b^*_{k'} = \frac{n}{N} -\frac{1}{N} \sum_{k=1}^K \sum_{i=1}^n \w_{k'}^\top \h_{k,i} = \frac{1}{K} - \w_{k'}^\top\h_G,
\end{align}
where $\h_G:= \frac{1}{N} \sum_{k=1}^K \sum_{i=1}^n \h_{k,i}$ is the global feature mean. 

The proof is based on lower bounding $f(\W,\H,\b^*)$ by a sequence of inequalities that hold with equality if and only if \eqref{Eq_thm2_0}-\eqref{Eq_thm2_4} are satisfied. Observe that
\begin{align}
\label{app2_Eq_poof_1}
    &\frac{1}{2Kn} \sum_{k=1}^{K} \sum_{i=1}^{n} \| \W \h_{k,i} + \b^* - \y_k \|_2^2   %
+ \frac{\lambda_W}{2} \sum_{k=1}^{K} \|\w_{k}\|_2^2 + \frac{\lambda_H}{2} \sum_{k=1}^{K} \sum_{i=1}^{n} \|\h_{k,i}\|_2^2 %
\\ \nonumber
    &= \frac{1}{2K} \sum_{k'=1}^{K} \sum_{k=1}^{K} \frac{1}{n} \sum_{i=1}^{n} ( \w_{k'}^\top (\h_{k,i} - \h_G) + \frac{1}{K} - 1_{k'=k} )^2   %
+ \frac{\lambda_W}{2} \sum_{k=1}^{K} \|\w_{k}\|_2^2 + \frac{\lambda_H}{2} \sum_{k=1}^{K} n \frac{1}{n}\sum_{i=1}^{n} \|\h_{k,i}\|_2^2 %
\\ \nonumber
    &\stackrel{(b)}{\geq} \frac{1}{2K} \sum_{k'=1}^{K} \sum_{k=1}^{K} \left ( \w_{k'}^\top (\frac{1}{n} \sum_{i=1}^{n} \h_{k,i} - \h_G) + \frac{1}{K} - 1_{k'=k} \right )^2 %
+ \frac{\lambda_W}{2} \sum_{k=1}^{K} \|\w_{k}\|_2^2 + \frac{\lambda_H}{2} \sum_{k=1}^{K} n  \left \| \frac{1}{n} \sum_{i=1}^{n} \h_{k,i} \right \|_2^2 %
\\ \nonumber     
\end{align}
In $(b)$ we used Jensen's inequality, which (due to the strict convexity of $\|\cdot\|^2$) holds with equality iff $\h_{k,1}=\ldots=\h_{k,n}$ for all $k \in [K]$.
Indeed, note that the equality condition for $(b)$ is satisfied by \eqref{Eq_thm2_1}.

Next, to simplify the notation, let us denote $\h_k := \frac{1}{n} \sum_{i=1}^{n} \h_{k,i}$ (note that $\h_G=\frac{1}{K} \sum_{k=1}^K \h_k$). Thus, continuing from the last RHS in \eqref{app2_Eq_poof_1}, we have
\begin{align*}
&\frac{1}{2K} \sum_{k'=1}^{K} \sum_{k=1}^{K} \left ( \w_{k'}^\top (\h_k - \h_G) + \frac{1}{K} - 1_{k'=k} \right )^2 
+ \frac{\lambda_W}{2} \sum_{k=1}^{K} \|\w_{k}\|_2^2 + \frac{n \lambda_H}{2}  \sum_{k=1}^{K}  \left \| \h_k \right \|_2^2 %
\\ \nonumber 
&= \frac{1}{2K} \sum_{k=1}^{K} \left ( \w_{k}^\top (\h_k - \h_G) - \frac{K-1}{K} \right )^2  
+ \frac{K-1}{2K} \sum_{k'=1}^{K} \frac{1}{K-1} \sum_{k=1, k \neq k'}^{K} \left ( \w_{k'}^\top (\h_k - \h_G) + \frac{1}{K} \right )^2 
 \\ \nonumber 
& \hspace{2mm} + \frac{\lambda_W}{2} K\frac{1}{K} \sum_{k=1}^{K} \|\w_{k}\|_2^2 + \frac{n \lambda_H}{2} K\frac{1}{K} \sum_{k=1}^{K}  \left \| \h_k \right \|_2^2 %
\\ \nonumber
\end{align*}
\begin{align}
\label{app2_Eq_poof_2}
    &\stackrel{(c)}{\geq} \frac{1}{2} \left ( \frac{1}{K} \sum_{k=1}^{K} \w_{k}^\top (\h_k - \h_G) - \frac{K-1}{K} \right )^2  
+ \frac{K-1}{2K} \sum_{k'=1}^{K} \left ( \frac{1}{K-1} \sum_{k=1, k \neq k'}^{K} \w_{k'}^\top (\h_k - \h_G) + \frac{1}{K} \right )^2
 \\ \nonumber 
    & \hspace{2mm} 
+ \frac{\lambda_W}{2} K \left ( \frac{1}{K} \sum_{k=1}^{K} \|\w_{k}\|_2 \right )^2 + \frac{n\lambda_H}{2} K \left( \frac{1}{K} \sum_{k=1}^{K} \left \| \h_k \right \|_2 \right)^2  %
\\ \nonumber  
    &\stackrel{(d)}{\geq} \frac{1}{2} \left ( \frac{1}{K} \sum_{k=1}^{K} \w_{k}^\top (\h_k - \h_G) - \frac{K-1}{K} \right )^2  
+ \frac{K-1}{2K} \sum_{k'=1}^{K} \left ( \frac{1}{K-1} \sum_{k=1, k \neq k'}^{K} \w_{k'}^\top (\h_k - \h_G) + \frac{1}{K} \right )^2
 \\ \nonumber 
    & \hspace{2mm} 
+ K \sqrt{n \lambda_H \lambda_W} \left ( \frac{1}{K} \sum_{k=1}^{K} \|\w_{k}\|_2 \right ) \left( \frac{1}{K} \sum_{k=1}^{K} \left \| \h_k \right \|_2 \right) %
\end{align}
In $(c)$ we used Jensen's inequality, which holds with equality iff 
\begin{align}
\label{app2_Eq_poof_2b_1}
      &\w_{1}^\top ( \h_{1} - \h_G ) = \ldots = \w_{K}^\top ( \h_{K} - \h_G), \\
\label{app2_Eq_poof_2b_2}
      &\w_{k'}^\top ( \h_{k_1} - \h_G ) = \w_{k'}^\top ( \h_{k_2} - \h_G), \,\,\, \forall k_1,k_2 \in [K] \setminus k', \\
\label{app2_Eq_poof_2b_3}
      &\| \w_{1} \|_2  = \ldots = \| \w_{K} \|_2, \\
\label{app2_Eq_poof_2b_4}
      &\| \h_{1} \|_2  = \ldots = \| \h_{K} \|_2,
\end{align}
which are satisfied when conditions \eqref{Eq_thm2_2} and \eqref{Eq_thm2_4} are satisfied.
In $(d)$ we used the AM–GM inequality, i.e., $\frac{a}{2}+\frac{b}{2} \geq \sqrt{ab}$, with $a = \lambda_W \left ( \frac{1}{K} \sum_{k=1}^{K} \|\w_{k}\|_2 \right )^2$ and $b = n\lambda_H \left( \frac{1}{K} \sum_{k=1}^{K} \left \| \h_k \right \|_2 \right)^2$. It holds with equality iff $a=b$, which is satisfied by \eqref{Eq_thm2_4} that implies $\lambda_W \|\w_k\|_2^2 = n \lambda_H \|\h_k\|_2^2$.
Now, observe that the first two terms in the last RHS of \eqref{app2_Eq_poof_2} are invariant to the global mean of $\H$ (since it is subtracted there from $\{ \h_k \}$). Therefore, the expression can be further reduced by requiring that $\h_G$ minimizes the term $\frac{1}{K} \sum_{k=1}^K \|\h_k\|_2$. 
To this end, using the triangle inequality $\|\h_k\|_2 \geq \|\h_k - \h_G \|_2 - \| \h_G \|_2$, we have
$$
\frac{1}{K} \sum_{k=1}^K \|\h_k\|_2 \geq \frac{1}{K} \sum_{k=1}^K \|\h_k - \h_G \|_2 - \| \h_G \|_2,
$$
which becomes equality when $\h_G=\0$, as required by condition \eqref{Eq_thm2_1g}.
From \eqref{app22_Eq_poof_b}, this also implies that $\b^*=\frac{1}{K}\1_K$, as required by condition \eqref{Eq_thm2_0}.
Next, consider $\w_{k'}^\top \h_{k} = \| \w_{k'} \|_2 \| \h_{k} \|_2 \mathrm{cos}\tilde{\alpha}_{k',k}$, where $\tilde{\alpha}_{k,k'}$ denotes the angle between $\w_{k'}$ and $\h_{k}$. From \eqref{app2_Eq_poof_2b_2}-\eqref{app2_Eq_poof_2b_4} it follows that $\tilde{\alpha}_{k',k}$ is exactly the same for any chosen $k' \in [K]$ and $k\in [K] \setminus k'$. This equiangular property implies that
the minimal (most negative) possible value of $\mathrm{cos}\tilde{\alpha}_{k',k}$ is given by  
 $\mathrm{cos}\tilde{\alpha}_{k',k} = -\frac{1}{K-1}$, as we have in the standard simplex ETF (Definition \ref{def:simplex_etf}).%

Note that so far all the iff conditions are satisfied by both $(\W^*,\H^*,\b^*=\frac{1}{K}\1_K)$ that satisfy \eqref{Eq_thm2_1}-\eqref{Eq_thm2_4} and the naive $(\W^*,\H^*,\b^*)=(\0,\0,\frac{1}{K}\1_K)$. 
Now, it is left to show that if $K \sqrt{n \lambda_H \lambda_W} \leq 1$
then $\w_k$ and $\h_k$ must have the same direction, as implied by \eqref{Eq_thm2_4}, %
and the simplex 
equiangular property of $\{\h^*_k\}$ and $\{\w^*_k\}$.
While for $K \sqrt{n \lambda_H \lambda_W} > 1$, we get the naive minimizer.

As all the inequalities used so far %
are attainable with iff conditions, we can consider now $(\W,\H)$ that satisfy these conditions to further lower the bound. Specifically, using the symmetry w.r.t. $k$, and choosing any $k'\neq k$, the last RHS in \eqref{app2_Eq_poof_2} (with the required $\h_G=\0$) turns into the expression 
\begin{align}
&\frac{1}{2} \left ( \w_k^\top \h_k - \frac{K-1}{K} \right )^2 
+ \frac{(K-1)}{2} \left ( \w_{k'}^\top \h_k + \frac{1}{K} \right )^2
+ K\sqrt{n\lambda_H\lambda_W}\|\w_k\|_2\|\h_k\|_2 %
\\ \nonumber
&= \frac{1}{2} \left ( \|\w_k\|_2\|\h_k\|_2 \mathrm{cos}{\alpha}  - \frac{K-1}{K}  \right )^2 
+ \frac{(K-1)}{2} \left ( \|\w_k\|_2\|\h_k\|_2 \mathrm{cos}{\tilde{\alpha}} + \frac{1}{K} \right )^2
+ K\sqrt{n\lambda_H\lambda_W}\|\w_k\|_2\|\h_k\|_2,  %
\\ \nonumber
\end{align}
where %
we used $\alpha$ (resp. $\tilde{\alpha}$) to denote the angle between $\w_k$ and $\h_k$ (resp. $\w_k'$ and $\h_k$), and %
the necessary condition that $\|\w_k\|_2=\|\w_k'\|_2$. %

Invoking Lemma~\ref{app2_lemma_aux} with $\beta = \|\w_k\|_2\|\h_k\|_2$ and $c=K\sqrt{n\lambda_H\lambda_W}$, we get that if $K\sqrt{n\lambda_H\lambda_W}>1$ then the minimizer is $(\W^*,\H^*)=(\0,\0)$ (since $\|\w_k\|_2\|\h_k\|_2=0$), and otherwise, the minimizer must obey $\alpha=0$ and $\tilde{\alpha}=\mathrm{arccos}(-\frac{1}{K-1})$. 
Therefore, we get the desired results that $\w_k^*$ and $\h_k^*$ must have the same direction %
and $\w_{k'}^\top \h_{k} %
= - \| \w_{k} \|_2 \| \h_{k} \|_2 \frac{1}{K-1}$ for any $k' \in [K]$ and $k\in [K] \setminus k'$. 
Together with $\lambda_W \|\w_k\|_2^2 = n \lambda_H \|\h_k\|_2^2$ (which is required to attain equality for AM-GM), we get the necessity %
of $\w_k^* = \sqrt{n \lambda_H / \lambda_W} \h_{k}^*$ in \eqref{Eq_thm2_4}. 
Finally, the simplex equiangular property of $\{\h^*_k\}$ (and similarly of ${\w^*_k}$) follows from
$$
\h_{k'}^{*\top} \h_{k}^* = \sqrt{\frac{\lambda_W}{n \lambda_H}} \w_{k'}^{*\top} \h_{k}^* 
= \sqrt{\frac{\lambda_W}{n \lambda_H}} \| \w_{k'}^*\|_2 \|\h_{k}^*\|_2 \mathrm{cos}\tilde{\alpha}_{k',k} 
= \|\h_{k}^*\|_2^2 \mathrm{cos}\tilde{\alpha}_{k',k}
= - \|\h_{k}^*\|_2^2 \frac{1}{K-1}
\,\,\,\, \forall k'\neq k
$$
where we used the simplex equiangular condition between $\w_{k'}$ and $\h_{k}$ ($k' \neq k$). %

\end{proof}

\begin{lemma}
\label{app2_lemma_aux}
Let %
\begin{align}
\label{app2_Eq_lemma}
\tilde{f}(\alpha,\tilde{\alpha},\beta) = \frac{1}{2} \left ( \beta \mathrm{cos}{\alpha} - \frac{K-1}{K} \right )^2 
+ \frac{(K-1)}{2} \left ( \beta \mathrm{cos}{\tilde{\alpha}} + \frac{1}{K} \right )^2
+ c \beta,
\end{align}
where $\beta \geq 0$, $-\frac{1}{K-1} \leq \mathrm{cos}{\tilde{\alpha}} \leq 1$ and $c>0$. Then, (i) if $c>1$ then $\tilde{f}$ is minimized by $\beta^*=0$ and the minimal value is $\frac{K-1}{2K}$; (ii) if $c \leq 1$ then $\tilde{f}$ is minimized by $(\alpha^*, \tilde{\alpha}^*, \beta^*)=(0, \mathrm{arccos}(-\frac{1}{K-1}), \frac{(1-c)(K-1)}{K})$ and the minimal value is $\frac{K-1}{K} \left ( c - \frac{1}{2}c^2 \right )$.
\end{lemma}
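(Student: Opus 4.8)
The plan is to reduce this three-variable minimization to a one-dimensional problem in $\beta$, exactly in the spirit of the proof of Lemma~\ref{lemma_aux}: first minimize over the two angles for each fixed $\beta \geq 0$, then analyze a parabola in $\beta$. Since $\alpha$ appears only in the first term and $\tilde\alpha$ only in the second, the angular minimizations decouple. For the first term $\frac{1}{2}(\beta\cos\alpha - \frac{K-1}{K})^2$ with $\cos\alpha \in [-1,1]$: if $\beta \geq \frac{K-1}{K}$ one may take $\cos\alpha = \frac{K-1}{K\beta} \in (0,1]$ to annihilate it, whereas if $0 \leq \beta < \frac{K-1}{K}$ the best choice is $\cos\alpha = 1$ (i.e.\ $\alpha=0$). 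For the second term $\frac{K-1}{2}(\beta\cos\tilde\alpha + \frac{1}{K})^2$ under the constraint $\cos\tilde\alpha \in [-\frac{1}{K-1},1]$: the attainable range of $\beta\cos\tilde\alpha$ is $[-\frac{\beta}{K-1},\beta]$, so the target value $-\frac{1}{K}$ is reachable exactly when $\beta \geq \frac{K-1}{K}$, and otherwise the minimum is at the boundary $\cos\tilde\alpha = -\frac{1}{K-1}$ (i.e.\ $\tilde\alpha = \arccos(-\frac{1}{K-1})$). The convenient structural fact is that both terms become simultaneously zeroable at the \emph{same} threshold $\beta = \frac{K-1}{K}$.

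Next I would assemble the reduced objective $G(\beta)$. On the regime $\beta \geq \frac{K-1}{K}$ both angular terms vanish, so $G(\beta) = c\beta$. On the regime $0 \leq \beta < \frac{K-1}{K}$, substituting $\cos\alpha = 1$ and $\cos\tilde\alpha = -\frac{1}{K-1}$ gives $G(\beta) = \frac{1}{2}(\beta - \frac{K-1}{K})^2 + \frac{K-1}{2}(-\frac{\beta}{K-1} + \frac{1}{K})^2 + c\beta$. Expanding and collecting, the linear and constant contributions simplify dramatically, yielding
\[ G(\beta) = \frac{K}{2(K-1)}\beta^2 - (1-c)\beta + \frac{K-1}{2K}, \]
a convex (``smiling'') parabola whose unconstrained vertex lies at $\beta_0 = \frac{(1-c)(K-1)}{K}$.

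Then the result follows from a case split. If $c > 1$, then $\beta_0 < 0$, so $G$ is increasing on $[0,\frac{K-1}{K}]$ and minimized there at $\beta=0$ with $G(0) = \frac{K-1}{2K}$; on the complementary regime $G(\beta) = c\beta \geq c\frac{K-1}{K} > \frac{1}{2}\frac{K-1}{K}$ since $c > 1 > \frac12$, hence the global minimizer is $\beta^*=0$ with value $\frac{K-1}{2K}$, establishing~(i). If $c \leq 1$, then $\beta_0 \in [0,\frac{K-1}{K}]$ lies in the first regime, where the parabola attains minimum value $\frac{K-1}{2K}\bigl(1-(1-c)^2\bigr) = \frac{K-1}{K}(c - \tfrac12 c^2)$; on the regime $\beta \geq \frac{K-1}{K}$ we have $G(\beta) \geq c\frac{K-1}{K} \geq \frac{K-1}{K}(c-\tfrac12 c^2)$, so the global minimizer is $(\alpha^*,\tilde\alpha^*,\beta^*) = (0,\arccos(-\frac{1}{K-1}),\frac{(1-c)(K-1)}{K})$ with value $\frac{K-1}{K}(c-\tfrac12 c^2)$, establishing~(ii).

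The main obstacle is simply to organize the piecewise minimization so that one is certain to locate the \emph{global} minimum rather than a spurious stationary point; this is defused by minimizing over the angles first (a closed-form step that respects the constraint on $\cos\tilde\alpha$) and only afterwards comparing the two $\beta$-regimes. The secondary care point is the algebraic reduction of $G(\beta)$, where one must verify the fortunate cancellations — the $\beta^2$-coefficient collecting to $\frac{K}{2(K-1)}$, the linear coefficient to exactly $-(1-c)$, and the constant to exactly $\frac{K-1}{2K}$ — and correctly track that the inequality constraint $\cos\tilde\alpha \geq -\frac{1}{K-1}$ is active precisely on the regime $\beta < \frac{K-1}{K}$ that contains the optimizer.
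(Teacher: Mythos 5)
Your proposal is correct and follows essentially the same route as the paper's proof: minimize over the angles for each fixed $\beta$ (yielding $\alpha^*=0$, $\tilde\alpha^*=\arccos(-\tfrac{1}{K-1})$ below the threshold $\beta=\tfrac{K-1}{K}$, and full elimination of both squared terms above it), then analyze the resulting ``smiling'' parabola $\frac{K}{2(K-1)}\beta^2-(1-c)\beta+\frac{K-1}{2K}$ and compare the two regimes. Your algebra and the regime comparison match the paper's; the only cosmetic difference is that you fold $\beta=0$ into the parabola case rather than treating it separately.
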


\begin{proof}

The proof is based on separately analyzing the cases $\beta=0$, $0<\beta < \frac{K-1}{K}$ and $\beta \geq \frac{K-1}{K}$.

For $\beta=0$, we get objective value of $\frac{(K-1)^2}{2K^2} + \frac{K-1}{2K^2} = \frac{K-1}{2K}$ for any $\alpha$ and $\tilde{\alpha}$. 
Assuming that $0<\beta< \frac{K-1}{K}$, clearly, the minimizer of \eqref{app2_Eq_lemma} w.r.t. $\alpha$ is only $\alpha^*=0$ (or other integer multiplications of $2\pi$),
and the minimizer of \eqref{app2_Eq_lemma} w.r.t. $\tilde{\alpha}$ is $\tilde{\alpha}^*=\mathrm{arccos}(-\frac{1}{K-1})$ (recall the assumption $-\frac{1}{K-1} \leq \mathrm{cos}{\tilde{\alpha}} \leq 1$). 
Thus, we have
\begin{align}
\tilde{f}(0,\mathrm{arccos}(\frac{-1}{K-1}),\beta) &= \frac{1}{2} \left ( \beta - \frac{K-1}{K} \right )^2 + \frac{(K-1)}{2} \left ( -\frac{\beta}{K-1} + \frac{1}{K} \right )^2 + c \beta, \\ \nonumber
&= %
\frac{1}{2} \frac{K}{K-1} \beta^2 - (1-c) \beta + \frac{1}{2} \frac{K-1}{K}
\end{align}
which is a ``smiling" parabola in $\beta$, with feasible minimum at $\beta^*=\mathrm{max}\{\frac{(1-c)(K-1)}{K}, 0\}$.
This means that if $c>1$ we get the (feasible) minimum at $(\alpha^*,\tilde{\alpha}^*,\beta^*)=(0,\mathrm{arccos}(\frac{-1}{K-1}),0)$, for which $\tilde{f}(\alpha^*,\tilde{\alpha}^*,\beta^*)=\frac{K-1}{2K}$. 
If $c\leq1$, we get minimum at $(\alpha^*,\tilde{\alpha}^*,\beta^*)=(0,\mathrm{arccos}(\frac{-1}{K-1}),\frac{(1-c)(K-1)}{K})$ with objective value of $\tilde{f}(\alpha^*,\tilde{\alpha}^*,\beta^*)= \frac{1}{2}\frac{K-1}{K}\left ( 1-(1-c)^2 \right ) = \frac{K-1}{K} \left ( c - \frac{1}{2}c^2 \right )$.

Assuming that $\beta \geq \frac{K-1}{K}$, the first term in \eqref{app2_Eq_lemma} is minimized (eliminated) by $\alpha^*=\mathrm{arccos}(\frac{K-1}{K\beta})$,
and the second term in \eqref{app2_Eq_lemma} is minimized (eliminated) by $\tilde{\alpha}^*=\mathrm{arccos}(\frac{-1}{K\beta})$.
Thus, we get
$
\tilde{f}(\alpha^*,\tilde{\alpha}^*,\beta) = c \beta,
$
which is minimized by $\beta^*=\frac{K-1}{K}$, and the objective value is $\tilde{f}(\alpha^*,\tilde{\alpha}^*,\beta^*)=c\frac{K-1}{K}$.
Since $c>0$, note that this value is always larger than the minimal value obtained for $\beta < \frac{K-1}{K}$.

To summarize, (i) if $c>1$ %
we get the minimizers $\tilde{f}(\alpha^*,\tilde{\alpha}^*,\beta^*=0)=\frac{K-1}{2K}$; (ii) 
If $c \leq 1$ %
we get the minimizer $\tilde{f}(\alpha^*=0,\tilde{\alpha}^*=\mathrm{arccos}(-\frac{1}{K-1}),\beta^*=\frac{(1-c)(K-1)}{K})=\frac{K-1}{K} \left ( c - \frac{1}{2}c^2 \right )$. %

\end{proof}

\newpage

\section{Proof of Theorem~\ref{thm_nc_of_deeper}}
\label{app:proof3}

\tomr{
We start by showing the zero within-class variability property of a minimizer $\H_1$, where we denote by $\{ \h_{k,i} \}$ the columns of $\H_1$: %
\begin{align}
\label{Eq_app3_poof_1_alt}
&\frac{1}{2N} \| \W_2 \W_1 \H_1 - \Y  \|_F^2 + \frac{\lambda_{W_2}}{2} \|\W_2\|_F^2 + \frac{\lambda_{W_1}}{2} \|\W_1\|_F^2 + \frac{\lambda_{H_1}}{2} \|\H_1\|_F^2 \\ \nonumber
    &= \frac{1}{2Kn} \sum_{k=1}^{K} n\frac{1}{n} \sum_{i=1}^{n} \| \W_2 \W_1 \h_{k,i} - \y_k \|_2^2 
+ \frac{\lambda_{W_2}}{2} \|\W_2\|_F^2 + \frac{\lambda_{W_1}}{2} \|\W_1\|_F^2 + \frac{\lambda_H}{2} \sum_{k=1}^{K} n\frac{1}{n} \sum_{i=1}^{n} \|\h_{k,i}\|_2^2 \\ \nonumber
    &\geq \frac{1}{2Kn} \sum_{k=1}^{K} n \left \| \W_2 \W_1 \frac{1}{n} \sum_{i=1}^{n} \h_{k,i} - \y_k \right \|^2 %
+ \frac{\lambda_{W_2}}{2} \|\W_2\|_F^2 + \frac{\lambda_{W_1}}{2} \|\W_1\|_F^2 + \frac{\lambda_H}{2} \sum_{k=1}^{K} n  \left \| \frac{1}{n} \sum_{i=1}^{n} \h_{k,i} \right \|_2^2,   
\end{align}
where 
we used Jensen's inequality, which (due to the strict convexity of %
$\|\cdot\|^2$) holds with equality iff $\h_{k,1}=\ldots=\h_{k,n}$ for all $k \in [K]$.
That is, the minimizer must exhibit zero within-class variability: $\H_1 = \overline{\H}_1 \otimes \1_n^\top$ for some $\overline{\H}_1 \in \mathbb{R}^{d \times K}$.}

\tomr{Next, we} 
are going to connect the minimization of the three-factors objective 
of \eqref{Eq_prob_deeper} \tomr{(after substituting $\H_1 = \overline{\H}_1 \otimes \1_n^\top$ and $\Y=\I_K \otimes \1_n^\top$)}
$$
f(\W_2,\W_1,\overline{\H}_1):=  \frac{1}{2K} \| \W_2 \W_1 \overline{\H}_1  - \I_K \|_F^2 
+ \frac{\lambda_{W_2}}{2} \|\W_2\|_F^2 + \frac{\lambda_{W_1}}{2} \|\W_1\|_F^2 + \frac{n\lambda_{H_1}}{2} \| \overline{\H}_1 \|_F^2
$$
with two sub-problems that include two-factors objectives. We will use the following lemma from \cite{zhu2021geometric} (which slightly generalizes a result from \cite{srebro2004learning}).
In this lemma, $\|\Z\|_*$ denotes the nuclear norm of the matrix $\Z$, i.e., the sum of its singular values.

\begin{lemma}[Lemma A.3 in \cite{zhu2021geometric}]
\label{app3_lemma_nuc}
For any fixed $\Z \in \mathbb{R}^{K \times N}$ and $\alpha>0$, we have 
\begin{align}
    \|\Z\|_* = \minim{\W,\H \,\, s.t. \,\, \W\H=\Z} \,\,\, \frac{1}{2} \left ( \frac{1}{\sqrt{\alpha}} \|\W\|_F^2 + \sqrt{\alpha} \|\H\|_F^2 \right ).
\end{align}
\tomt{Note that the minimizers $\W,\H$ obey $\W = \alpha^{1/4} \U \bSigma^{1/2} \R^\top$ and $\H = \alpha^{-1/4} \R \bSigma^{1/2} \V^\top$, where $\U\bSigma\V^\top$ is the SVD of $\Z$ and $\R$ is any orthonormal matrix of suitable dimensions.}
\end{lemma}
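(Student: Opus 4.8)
The plan is to prove the claimed identity by matching a lower bound against an explicit achievability construction, and then to recover the stated form of the minimizers by tracing the equality conditions backwards. For the lower bound I would fix any feasible pair $(\W,\H)$ with $\W\H=\Z$ and chain two inequalities. The AM--GM inequality $\frac12(a+b)\ge\sqrt{ab}$ with $a=\tfrac{1}{\sqrt{\alpha}}\|\W\|_F^2$ and $b=\sqrt{\alpha}\,\|\H\|_F^2$ gives
$$
\frac12\Big(\frac{1}{\sqrt{\alpha}}\|\W\|_F^2+\sqrt{\alpha}\,\|\H\|_F^2\Big)\ \ge\ \|\W\|_F\,\|\H\|_F,
$$
with equality iff $\tfrac{1}{\sqrt{\alpha}}\|\W\|_F^2=\sqrt{\alpha}\,\|\H\|_F^2$. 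The remaining ingredient is the bound $\|\W\H\|_*\le\|\W\|_F\|\H\|_F$, which I would establish via the dual description $\|\Z\|_*=\maxim{\|\O\|_{\mathrm{op}}\le1}\langle\O,\Z\rangle$: taking a maximizing $\O$ and writing $\langle\O,\W\H\rangle=\langle\W^\top\O,\H\rangle\le\|\W^\top\O\|_F\,\|\H\|_F\le\|\W\|_F\,\|\O\|_{\mathrm{op}}\,\|\H\|_F\le\|\W\|_F\|\H\|_F$, using Cauchy--Schwarz in the Frobenius inner product and the mixed-norm submultiplicativity $\|\mathbf{A}\mathbf{B}\|_F\le\|\mathbf{A}\|_F\|\mathbf{B}\|_{\mathrm{op}}$. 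Since $\W\H=\Z$, this yields $\tfrac12(\tfrac{1}{\sqrt{\alpha}}\|\W\|_F^2+\sqrt{\alpha}\|\H\|_F^2)\ge\|\Z\|_*$ for every feasible factorization.

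For achievability I would exhibit exactly the factorization announced in the Note. Writing the compact SVD $\Z=\U\bSigma\V^\top$ and setting $\W=\alpha^{1/4}\U\bSigma^{1/2}\R^\top$ and $\H=\alpha^{-1/4}\R\bSigma^{1/2}\V^\top$ with $\R^\top\R=\I$, the factors $\alpha^{\pm1/4}$ cancel and $\R^\top\R=\I$ telescopes, giving $\W\H=\U\bSigma^{1/2}\bSigma^{1/2}\V^\top=\Z$. A short trace computation using $\U^\top\U=\I$, $\V^\top\V=\I$, and $\R^\top\R=\I$ yields $\|\W\|_F^2=\sqrt{\alpha}\,\tr\bSigma=\sqrt{\alpha}\,\|\Z\|_*$ and $\|\H\|_F^2=\alpha^{-1/2}\|\Z\|_*$, so the objective evaluates to exactly $\frac12(\|\Z\|_*+\|\Z\|_*)=\|\Z\|_*$. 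Together with the lower bound this proves the equality and shows the stated pair attains the minimum.

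To justify that \emph{every} minimizer has the stated form I would run the two equality conditions in reverse. Equality in AM--GM forces the balancing condition $\tfrac{1}{\sqrt{\alpha}}\|\W\|_F^2=\sqrt{\alpha}\|\H\|_F^2$, which fixes the global rescaling freedom $\W\mapsto t\W,\ \H\mapsto t^{-1}\H$. Equality in $\|\W\H\|_*\le\|\W\|_F\|\H\|_F$ is the delicate step: I would argue it forces the column space of $\W$ and the row space of $\H$ to coincide with the leading singular subspaces of $\Z$, and the nonzero singular values of $\W$ and of $\H$ to be matched and proportional; after the balancing condition pins the proportionality constant, this is precisely the parametrization $\W=\alpha^{1/4}\U\bSigma^{1/2}\R^\top$, $\H=\alpha^{-1/4}\R\bSigma^{1/2}\V^\top$, with an arbitrary partial isometry $\R$ absorbing the residual rotational degrees of freedom.

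I expect this last characterization to be the main obstacle. The equality $\|\Z\|_*=\minim{\W\H=\Z}\tfrac12(\tfrac{1}{\sqrt{\alpha}}\|\W\|_F^2+\sqrt{\alpha}\|\H\|_F^2)$ is routine once the inequality $\|\W\H\|_*\le\|\W\|_F\|\H\|_F$ is available, but converting equality in that inequality into the precise SVD-aligned form of $\W$ and $\H$ requires careful handling of the equality cases of Cauchy--Schwarz and of operator-norm duality, together with the (benign) non-uniqueness arising from repeated singular values and the free orthonormal factor $\R$. Since the lemma is quoted verbatim from prior work, for the purposes of this appendix it may suffice to verify the main equality and the explicit achieving factorization, deferring the full equality-case analysis to the cited references.
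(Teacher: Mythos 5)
Your lower bound and achievability construction are correct, and you should know that the paper itself offers no proof of this statement: it is imported with a citation from \cite{zhu2021geometric} (their Lemma A.3, which slightly generalizes a result of \cite{srebro2004learning}), and the route you take --- AM--GM to reduce to $\|\W\|_F\|\H\|_F$, the bound $\|\W\H\|_*\le\|\W\|_F\|\H\|_F$ via the dual description of the nuclear norm, and the explicit SVD factorization to match --- is essentially the standard proof in that cited line of work. So for the displayed equality your write-up needs no changes.

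The step you defer, however, is not safely deferrable in context: the note characterizing \emph{all} minimizers is precisely what Appendix~\ref{app:proof3} uses to pull minimizers of the nuclear-norm sub-problems back to minimizers of \eqref{Eq_prob_deeper}, so a complete account should include it. Fortunately your sketch closes in a few lines once you fix the dual certificate $\O=\U\V^\top$, where $\Z=\U\bSigma\V^\top$ is one fixed compact SVD (the case $\Z=\0$ is trivial). For any minimizer $(\W,\H)$ the chain $\|\Z\|_*=\langle\O,\W\H\rangle=\langle\W^\top\O,\H\rangle\le\|\W^\top\O\|_F\|\H\|_F\le\|\W\|_F\|\H\|_F=\|\Z\|_*$ is tight throughout. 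Tightness of Cauchy--Schwarz gives $\H=t\,\W^\top\O$ for some $t>0$ (positive because the inner product equals $\|\Z\|_*>0$); tightness of the second inequality gives $\tr\big(\W\W^\top(\I-\U\U^\top)\big)=0$ since $\O\O^\top=\U\U^\top$, and as both factors are positive semidefinite this forces $(\I-\U\U^\top)\W=\0$. Substituting $\H=t\,\W^\top\U\V^\top$ into $\W\H=\Z$ and multiplying on the right by $\V$ yields $\W\W^\top\U=t^{-1}\U\bSigma$, hence $\W\W^\top=t^{-1}\U\bSigma\U^\top$, and therefore $\W=t^{-1/2}\U\bSigma^{1/2}\R^\top$ for some $\R$ with $\R^\top\R=\I$ (take $\R^\top=(t^{-1/2}\U\bSigma^{1/2})^\dagger\W$), whence $\H=t\,\W^\top\O=t^{1/2}\R\bSigma^{1/2}\V^\top$. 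The AM--GM balancing condition $\frac{1}{\sqrt{\alpha}}\|\W\|_F^2=\sqrt{\alpha}\,\|\H\|_F^2$ then reads $t^{-1}\tr\bSigma/\sqrt{\alpha}=\sqrt{\alpha}\,t\,\tr\bSigma$, pinning $t=\alpha^{-1/2}$ and giving exactly the stated parametrization. Note also that your phrase ``leading singular subspaces'' should just be ``the singular subspaces of $\Z$'' (all singular values participate), and repeated singular values cause no difficulty because the argument is phrased relative to one fixed compact SVD, with the residual freedom absorbed into $\R$.
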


The first sub-problem is derived as follows:
\begin{align}
    \label{Eq_lemma2_aux_general2_0_}
    &\minim{\W_2,\W_1,\overline{\H}_1} \,\, \frac{1}{2K} \| \W_2 \W_1 \overline{\H}_1 - \I_K \|_F^2 + \frac{\lambda_{W_2}}{2} \|\W_2\|_F^2 + \frac{\lambda_{W_1}}{2} \|\W_1\|_F^2 +  \frac{n\lambda_{H_1}}{2} \|  \overline{\H}_1 \|_F^2 \\
    &= \minim{\W_2,\W_1,\overline{\H}_1,\overline{\H} \,\, s.t. \,\, \overline{\H}=\W_1\overline{\H}_1} \,\, \frac{1}{2K} \| \W_2 \overline{\H} - \I_K \|_F^2 + \frac{\lambda_{W_2}}{2} \|\W_2\|_F^2 + \frac{\lambda_{W_1}}{2} \|\W_1\|_F^2 +  \frac{n\lambda_{H_1}}{2} \| \overline{\H}_1 \|_F^2 \\ 
    &= \minim{\W_2,\W_1,\overline{\H}_1,\overline{\H} \,\, s.t. \,\, \overline{\H}=\W_1\overline{\H}_1} \,\, \frac{1}{2K} \| \W_2 \overline{\H} - \I_K \|_F^2 + \frac{\lambda_{W_2}}{2} \|\W_2\|_F^2 \\ \nonumber
    &\hspace{10mm} + \sqrt{\lambda_{W_1} n\lambda_{H_1}} \frac{1}{2} \left ( \frac{1}{\sqrt{n\lambda_{H_1}/\lambda_{W_1}}} \|\W_1\|_F^2 + \sqrt{n\lambda_{H_1}/\lambda_{W_1}} \| \overline{\H}_1 \|_F^2 \right ) \\  
    \label{Eq_lemma2_aux_general2_ineq_}
    &\tomr{=} \minim{\W_2,\overline{\H}} \,\, \frac{1}{2K} \| \W_2 \overline{\H} - \I_K \|_F^2 + \frac{\lambda_{W_2}}{2} \|\W_2\|_F^2 \\ \nonumber
    &\hspace{10mm} + \sqrt{\lambda_{W_1} n\lambda_{H_1}} \minim{\W_1,\overline{\H}_1 \,\, s.t. \,\, \W_1\overline{\H}_1=\overline{\H}} \,\, \frac{1}{2} \left ( \frac{1}{\sqrt{n\lambda_{H_1}/\lambda_{W_1}}} \|\W_1\|_F^2 + \sqrt{n\lambda_{H_1}/\lambda_{W_1}} \| \overline{\H}_1 \|_F^2 \right ) \\      \label{Eq_lemma2_aux_general2_nonconvex_}
    &= \minim{\W_2, \overline{\H}} \,\, f_1(\W_2,\overline{\H}) := \,\, \frac{1}{2K} \| \W_2 \overline{\H} - \I_K \|_F^2 + \frac{\lambda_{W_2}}{2} \|\W_2\|_F^2  + \sqrt{n\lambda_{W_1} \lambda_{H_1}} \| \overline{\H} \|_*
\end{align}
where the last equality follows from Lemma~\ref{app3_lemma_nuc}. %

With very similar steps, the second sub-problem is stated as:
\begin{align}
    \label{Eq_lemma2_aux_general3_0_}
    &\minim{\W_2,\W_1,\overline{\H}_1} \,\, \frac{1}{2K} \| \W_2 \W_1 \overline{\H}_1 - \I_K \|_F^2 + \frac{\lambda_{W_2}}{2} \|\W_2\|_F^2 + \frac{\lambda_{W_1}}{2} \|\W_1\|_F^2 +  \frac{n\lambda_{H_1}}{2} \| \overline{\H}_1 \|_F^2 \\
     \label{Eq_lemma2_aux_general3_nonconvex_}
    &\tomr{=} \minim{\W, \overline{\H}_1} \,\, f_2(\W,\overline{\H}_1) := \,\, \frac{1}{2K} \| \W \overline{\H}_1 - \I_K \|_F^2 + \frac{n\lambda_{H_1}}{2} \| \overline{\H}_1 \|_F^2  + \sqrt{\lambda_{W_2} \lambda_{W_1}} \| \W \|_*
\end{align}

Therefore, we can analyze the minimizers of \eqref{Eq_lemma2_aux_general2_nonconvex_} and \eqref{Eq_lemma2_aux_general3_nonconvex_} and translate the results to the minimizers of \eqref{Eq_prob_deeper}, using the characteristics of the minimizers in Lemma~\ref{app3_lemma_nuc}. 

Let us start with \eqref{Eq_lemma2_aux_general2_nonconvex_}: %
$$
f_1(\W_2,\overline{\H}) := \,\, \frac{1}{2K} \| \W_2 \overline{\H} - \I_K \|_F^2 + \frac{\lambda_{W_2}}{2} \|\W_2\|_F^2  + \sqrt{n \lambda_{W_1} \lambda_{H_1}} \| \overline{\H} \|_*.
$$

\tomr{
From $\frac{\partial f_1}{\partial \W_2}= \frac{1}{K} ( \W_2 \overline{\H} - \I_K) \overline{\H}^\top + \lambda_{W_2} \W_2 = \0$, observe that the minimizer w.r.t.~$\W_2$ is a closed-form function of $\overline{\H}$:
\begin{align}
\label{app3_Eq_W_func_H}
\W_2(\overline{\H}) = \overline{\H}^\top ( \overline{\H} \overline{\H}^\top + K \lambda_{W_2} \I_d )^{-1}.
\end{align}
Let us denote the compact SVD of $\overline{\H}=\U\S\V^\top$, where $\U \in \mathbb{R}^{d \times K}$ is a partial orthonormal matrix and $\V \in \mathbb{R}^{K \times K}$ is an orthonormal matrix and $\S \in \mathbb{R}^{K \times K}$ is a diagonal matrix with $\{ s_k \}_{k=1}^K$ on its diagonal. 
From \eqref{app3_Eq_W_func_H} %
we have that a minimizer $(\W_2,\overline{\H})$ obeys
\begin{align*}
    &\W_2 %
    = \V \S \U^\top (\U \S^2 \U^\top + K\lambda_{W_2} \I_d )^{-1} = \V \S (\S^2 + K\lambda_{W_2} \I_K )^{-1} \U^\top, \\
    &\W_2 \overline{\H} - \I_K = \V \left ( \S (\S^2 + K\lambda_{W_2}\I_d)^{-1} \S - \I_K \right ) \V^\top = \V \mathrm{diag} \left \{ \frac{-K\lambda_{W_2}}{(s_1^2 + K\lambda_{W_2})} , ... , \frac{-K\lambda_{W_2}}{(s_K^2 + K\lambda_{W_2})} \right \} \V^\top.
\end{align*}
Substituting these expressions in the objective, and using the fact that unitary operators do not change the Frobenius and nuclear norms, we can express the objective as a function of the singular values $\{s_k\}$
\begin{align}
\label{Eq_app3_poof_2_alt}
f_1(\W_2, \overline{\H}) &= \frac{1}{2K} \sum_{k=1}^K \frac{(K\lambda_{W_2})^2}{(s_k^2 + K\lambda_{W_2})^2} + \frac{\lambda_{W_2}}{2} \sum_{k=1}^K \frac{s_k^2}{(s_k^2 + K\lambda_{W})^{2}} + \sqrt{n\lambda_{W_1}\lambda_{H_1}} \sum_{k=1}^K s_k \\ \nonumber
&= \sum_{k=1}^K \left ( \frac{\lambda_{W_2}}{2} \frac{1}{s_k^2 + K\lambda_{W_2}} + \sqrt{n\lambda_{W_1}\lambda_{H_1}} s_k \right ).
\end{align}
At this point, we already see that the objective is separable w.r.t.~the singular values, which implies that the minimizer obeys $s_1 = ... = s_K =:s$.
Therefore, $\overline{\H} = s \U \V^\top$ and $\W_2 = \frac{s}{s^2+K\lambda_{W_2}} \V \U^\top \propto \overline{\H}^\top$.}

\tomr{
The flat spectrum of $\overline{\H}$ implies that $\overline{\H}^\top \overline{\H} \propto \I_K$, since 
$
\overline{\H}^\top \overline{\H} = s^2 \V \U^\top \U \V^\top = s^2 \V \V^\top = s^2 \I_K.
$
Similarly, $\W_2\overline{\H} \propto \I_K$ %
and $\W_2\W_2^\top \propto \I_K$. 
Denoting $\sigma_{\overline{H}}:=s$ and $\sigma_{W}:=\frac{s}{s^2+K\lambda_{W_2}}$, 
the above results imply that %
a minimizer %
$(\W_2,\overline{\H})$ is given by
\begin{align}
\label{Eq_app3_poof_3_alt}
    \W_2 &= \sigma_{W} \R^\top \in \mathbb{R}^{K \times d} \\
\label{Eq_app3_poof_4_alt}    
    \overline{\H} &= \sigma_{\overline{H}} \R \in \mathbb{R}^{d \times K}
\end{align}
with arbitrary orthonormal matrix $\R \in \mathbb{R}^{d \times K}$ ($\R^\top\R = \I_K$).}

\tomr{
Potentially, 
the values of $\sigma_W$ and $\sigma_{\overline{H}}$ can be determined by minimizing the simplified objective (obtained by substituting \eqref{Eq_app3_poof_3_alt}-\eqref{Eq_app3_poof_4_alt}:} %

\begin{align}
\label{app3_obj_W_Hbar3}
f_1(\W_2,\overline{\H}) &= \frac{1}{2} ( \sigma_{{W}} \sigma_{\overline{H}} - 1 )^2 + K \frac{\lambda_{W_2}}{2} \sigma_W^2 + K \sqrt{n \lambda_{W_1} \lambda_{H_1}} \sigma_{\overline{H}}.
\end{align}
The derivatives are given by
\begin{align}
\label{app3_obj_W_Hbar4}
\frac{\partial}{\partial \sigma_W} f_1 &= \sigma_{\overline{H}} ( \sigma_{{W}} \sigma_{\overline{H}} - 1 ) + K \lambda_{W_2} \sigma_W = 0, \\
\label{app3_obj_W_Hbar4b}
\frac{\partial}{\partial \sigma_{\overline{H}}} f_1 &= \sigma_{W} ( \sigma_{{W}} \sigma_{\overline{H}} - 1 ) + K \sqrt{n \lambda_{W_1} \lambda_{H_1}} = 0,
\end{align}
implying that $\lambda_{W_2} \sigma_W^2 = \sqrt{n \lambda_{W_1} \lambda_{H_1}} \sigma_{\overline{H}}$. Plugging $\sigma_{\overline{H}} = \frac{\lambda_{W_2}\sigma_W^2}{\sqrt{n \lambda_{W_1} \lambda_{H_1}}}$ in \eqref{app3_obj_W_Hbar4b} we get
$$
\lambda_{W_2} \sigma_{{W}}^4 - \sqrt{n \lambda_{W_1} \lambda_{H_1}} \sigma_{W} + K n \lambda_{W_1} \lambda_{H_1} = 0
$$
The value of $\sigma_{W}$ can be computed numerically as the positive root of the above 4th degree polynomial (the analytical result is extremely cumbersome) and the same goes for the value of $\sigma_{\overline{H}}$.
\tomr{Note that an attempt to determine $s$ by minimization of \eqref{Eq_app3_poof_2_alt} also leads to a challenging 4th degree polynomial.} 
Yet, even without stating these exact constants we can summarize our findings for \eqref{Eq_lemma2_aux_general2_nonconvex_} as follows. We %
have shown that the minimizers obey %
$\overline{\H}=\sigma_{\overline{H}}\R$ and $\W_2 = \sigma_{W}\R^\top$ for some non-negative constants $\sigma_{\overline{H}}, \sigma_{W}$ (which depend on $K,n,\lambda_{W_2},\lambda_{W_1},\lambda_{H_1}$) and any orthonormal matrix $\R \in \mathbb{R}^{d \times K}$.
Therefore, $\W_2 \propto \overline{\H}^\top$, and 
$$
\W_2 \overline{\H} \propto \overline{\H}^\top \overline{\H} \propto  \W_2 \W_2^\top \propto  \I_K.
$$

From 
\tomr{$\H_1 = \overline{\H}_1 \otimes \1_n^\top$ and
Lemma~\ref{app3_lemma_nuc} (which factorizes $\overline{\H}=\sigma_{\overline{H}}\R$ to $\W_1 \overline{\H}_1$)} we know that the minimal objective value of \eqref{Eq_lemma2_aux_general2_nonconvex_} is attained by the minimizers $\W_1, \H_1$ of \eqref{Eq_prob_deeper} 
for which 
we have %
$\W_1 = \sqrt[4]{n\lambda_{H_1}/\lambda_{W_1}} \sqrt{\sigma_{\overline{H}}} \R \tilde{\R}^\top$ 
and 
$\H_1 = \frac{1}{\sqrt[4]{n\lambda_{H_1}/\lambda_{W_1}}} \sqrt{\sigma_{\overline{H}}} \tilde{\R} \otimes \1_n^\top$ 
for any orthonormal matrix $\tilde{\R} \in \mathbb{R}^{d \times K}$. %

We conclude that for \tomr{$d \geq K$} and $(\W_2^*,\W_1^*,\H_1^*)$ being a (nonzero) global minimizer of \eqref{Eq_prob_deeper}, we have that $\W_1^*\H_1^*$ collapses to an orthogonal $d \times K$ frame, and $\W_2^{*\top}$ is an orthogonal %
$d \times K$
matrix that is aligned with $\W_1^*\H_1^*$.

Analyzing the minimizers of \eqref{Eq_lemma2_aux_general3_nonconvex_} by steps which are  very similar to those used for \eqref{Eq_lemma2_aux_general2_nonconvex_} 
\tomr{(essentially, using $\frac{\partial f_2(\W,\overline{\H}_1)}{\partial \overline{\H}_1} = \0$ to express the minimizer w.r.t.~$\overline{\H}_1$ as function of $\W$, and expressing the objective using the singular values of $\W$)} %
yields the following.

The minimizers of \eqref{Eq_lemma2_aux_general3_nonconvex_} obey $\H_1 = \overline{\H}_1 \otimes \1_n^\top$, where $\overline{\H}_1=\sigma_{\overline{H}_1}\tilde{\R}$ and $\W = \sigma_{W}\tilde{\R}^\top$ for some non-negative constants $\sigma_{\overline{H}_1}, \sigma_{W}$ (which depend on $K,n,\lambda_{W_2},\lambda_{W_1},\lambda_{H_1}$) and any orthonormal matrix $\tilde{\R} \in \mathbb{R}^{d \times K}$.
Therefore, $\W \propto \overline{\H}_1^\top$, and 
$$
\W \overline{\H}_1 \propto \overline{\H}_1^\top \overline{\H}_1 \propto  \W \W^\top \propto  \I_K.
$$
Now, since $\W = \sigma_{W} \tilde{\R}^\top$, from Lemma~\ref{app3_lemma_nuc} we know that the minimal objective value of \eqref{Eq_lemma2_aux_general3_nonconvex_} is attained by the minimizers $\W_2, \W_1$ of \eqref{Eq_prob_deeper} 
for which 
we have %
$\W_2 = \sqrt[4]{\lambda_{W_1}/\lambda_{W_2}} \sqrt{\sigma_{W}} \R^\top$ 
and 
$\W_1 = \frac{1}{\sqrt[4]{\lambda_{W_1}/\lambda_{W_2}}} \sqrt{\sigma_{W}} \R \tilde{\R}^\top$ 
for any orthonormal matrix $\R \in \mathbb{R}^{d \times K}$.

We conclude that for \tomr{$d \geq K$} and $(\W_2^*,\W_1^*,\H_1^*)$ being a (nonzero) global minimizer of \eqref{Eq_prob_deeper}, we have that $\H_1^*$ collapses to an orthogonal $d \times K$ frame, and $(\W_2^*\W_1^*)^\top$ is an orthogonal %
$d \times K$
matrix that is aligned with $\H_1^*$.

\newpage

\section{On the Within-Class Variability Metric NC1}
\label{app:lemma_nc1}

In this section, we discuss some properties of the within-class variability of the features $\H_1$ and $\H_2:=\W_1\H_1$ for the model in \eqref{Eq_prob_deeper}.
First, let us define the metric $NC_1$ that is used to measure the within-class variability. Note that this metric is related to the classical Fisher's ratio.
For a given (organized) features matrix $\H = \left [ \h_{1,1}, \ldots, \h_{1,n}, \h_{2,1}, \ldots , \h_{K,n}  \right ] \in \mathbb{R}^{d \times Kn}$, denote the per-class and global means as
$\overline{\h}_k := \frac{1}{n}\sum_{i=1}^n \h_{k,i}$ and $\overline{\h}_G := \frac{1}{Kn}\sum_{k=1}^k \sum_{i=1}^n \h_{k,i}$, respectively. Define the within-class and between-class $d \times d$ covariance matrices
$$
\bSigma_W(\H) := \frac{1}{Kn}\sum_{k=1}^K \sum_{i=1}^n (\h_{k,i}-\overline{\h}_k)(\h_{k,i}-\overline{\h}_k)^\top,
$$
$$
\bSigma_B(\H):=\frac{1}{K}\sum_{k=1}^K (\overline{\h}_k-\overline{\h}_G)(\overline{\h}_k-\overline{\h}_G)^\top.
$$
We define the corresponding within-class variability metric as
\begin{align}
\label{app_Eq_NC1}
NC_1(\H) := \frac{1}{K}\tr \left ( \bSigma_W(\H) \bSigma_B^\dagger(\H) \right ),
\end{align}
where $\bSigma_B^\dagger$ denotes the pseudoinverse of $\bSigma_B$.

From the definitions above, observe that $\bSigma_W(\H_2) = \W_1 \bSigma_W(\H_1) \W_1^\top$ and $\bSigma_B(\H_2) = \W_1 \bSigma_B(\H_1) \W_1^\top$.
Therefore,
\tomr{assuming that $( \W_1 \bSigma_B(\H_1) \W_1^\top )^\dagger \approx  \W_1^{\top\dagger}  \bSigma_B^\dagger(\H_1) \W_1^{\dagger}$, we have that}
\begin{align}
\label{app_Eq_NC1_2}
NC_1(\H_2) &= \frac{1}{K}\tr \left ( \W_1 \bSigma_W(\H_1) \W_1^\top ( \W_1 \bSigma_B(\H_1) \W_1^\top )^\dagger \right ) \\ \nonumber
&\approx \frac{1}{K}\tr \left ( \W_1 \bSigma_W(\H_1) \W_1^\top \W_1^{\top\dagger}  \bSigma_B^\dagger(\H_1) \W_1^{\dagger} \right ) \\ \nonumber
&= \frac{1}{K}\tr \left ( \W_1^{\dagger} \W_1 \bSigma_W(\H_1) \left ( \W_1^{\dagger} \W_1 \right )^\top \bSigma_B^\dagger(\H_1) \right ).
\end{align}
Now, by their definitions, the columns of $\bSigma_W(\H_1)$ and $\bSigma_B(\H_1)$ are in the range of $\H_1$.
Thus, since $\W_1^{\dagger} \W_1$ is an orthogonal projection matrix (onto the subspace spanned by the rows of $\W_1$), we have that 
$$
NC_1(\H_2) \approx \frac{1}{K}\tr \left ( \W_1^{\dagger} \W_1 \bSigma_W(\H_1) \left ( \W_1^{\dagger} \W_1 \right )^\top \bSigma_B^\dagger(\H_1) \right )
= \frac{1}{K}\tr \left ( \bSigma_W(\H_1) \bSigma_B^\dagger(\H_1) \right ) = NC_1(\H_1)
$$
is guaranteed when there are no columns of $\H_1$ in the null space of $\W_1$.
One such case is at initialization, when $\W_1$ is initialized by continuous random distribution and thus its rows span $\mathbb{R}^d$ with probability 1.
Moreover, after random initialization, we empirically observed that $\H_1$ and $\H_2$ also have similar $NC_1$ along gradient-based optimization (see Figure~\ref{fig:thm3}), which is due to having similar $K$ dimensional subspaces dominantly spanned by the columns of $\H_1$ and the rows of $\W_1$ (as well as those of $\W_2$).
At convergence to the a global minimizer, again it is guaranteed that there are no columns of $\H_1$ in the null space of $\W_1$.
Specifically, %
as demonstrated in the proof of Theorem~\ref{thm_nc_of_deeper},
the global minimizers necessarily have that $\W_2^{*\top}, \W_1^{*\top}$ and $\H_1^*$ have exactly the same $K$ dimensional range (column space).
Briefly, denoting the objective of \eqref{Eq_prob_deeper} by $f$, 
this follows from $\W_2^*\W_2^{*\top} \propto \I_K$, as well as  
$\lambda_{W_2} \W_2^{*\top}\W_2^* = \lambda_{W_1}\W_1^* \W_1^{*\top}$ and $\lambda_{W_1} \W_1^{*\top}\W_1^* = \lambda_{H_1} \H_1^* \H_1^{*\top}$,
where the last two equalities follow from
$\W_1^\top \frac{\partial f}{\partial \W_1} - \frac{\partial f}{\partial \H_1} \H_1^\top = \0$ and $\W_2^\top \frac{\partial f}{\partial \W_2} - \frac{\partial f}{\partial \W_1} \W_1^\top = \0$, respectively.

\newpage

\section{Proof of Theorem~\ref{thm_nc_of_deeper_nonlin}}
\label{app:proof4}

Let $(\W_2^*, \W_1^*, \H_1^*)$ be a minimizer of \eqref{Eq_prob_nonlin}. 
Similar to the arguments in the proof of Theorem~\ref{thm_nc_of_deeper}, the construction $(\W_2,\H) = (\W_2^*, \W_1^*\H_1^* )$ is also a minimizer of the following sub-problem
\begin{align}
\label{app_Eq_subprob_nonlin}
    \tilde{f}_1(\W_2,\H) := \,\, \frac{1}{2Kn} \| \W_2 \sigma(\H) - \Y \|_F^2 + \frac{\lambda_{W_2}}{2} \|\W_2\|_F^2  + \sqrt{\lambda_{W_1} \lambda_{H_1}} \| \H \|_*.
\end{align}
\tomr{The goal is} 
to show that the construction $(\W_2,\H) = (\W_2^*, \sigma(\W_1^*\H_1^*) )$ is also a minimizer of 
\begin{align}
\label{app_Eq_subprob_from_lin_model}
    f_1(\W_2,\H) := \,\, \frac{1}{2Kn} \| \W_2 \H - \Y \|_F^2 + \frac{\lambda_{W_2}}{2} \|\W_2\|_F^2  + \sqrt{\lambda_{W_1} \lambda_{H_1}} \| \H \|_*.
\end{align}
Recall that, following the proof of Theorem~\ref{thm_nc_of_deeper}, for a given minimizer of \eqref{Eq_prob_deeper}, $(\tilde{\W}_2^*,\tilde{\W}_1^*,\tilde{\H}_1^*)$, we have that  $(\W_2,\H)=(\tilde{\W}_2^*,\tilde{\W}_1^*\tilde{\H}_1^*)$
minimizes \eqref{app_Eq_subprob_from_lin_model}.
Therefore, this will imply that the orthogonal collapse and alignment properties of $\tilde{\W}_2^*$ and $\tilde{\W}_1^*\tilde{\H}_1^*$, 
which have been obtained by analyzing $f_1(\W_2,\H)$ in Appendix~\ref{app:proof3}, 
carry on to $\W_2^*$ and $\sigma(\W_1^*\H_1^*)$ constructed from global minimizers of \eqref{Eq_prob_nonlin}.

We begin by considering an intermediate objective:
\begin{align}
\label{app_Eq_subprob_hybrid}
    \overline{f}_1(\W_2,\H) := \,\, \frac{1}{2Kn} \| \W_2 \sigma(\H) - \Y \|_F^2 + \frac{\lambda_{W_2}}{2} \|\W_2\|_F^2  + \sqrt{\lambda_{W_1} \lambda_{H_1}} \| \sigma(\H) \|_*.
\end{align}
Clearly, $\minim{\W_2, \H} \,\, f_1(\W_2,\H) \leq \minim{\W_2, \H} \,\, \overline{f}_1(\W_2,\H)$, 
since the ReLUs in \eqref{app_Eq_subprob_hybrid} can be translated to a non-negativity constraint on $\H$ that reduces the feasible set of the minimization problem.
Yet, let us show that this inequality is not strict.

Essentially, showing that $\minim{\W_2, \H} \,\, f_1(\W_2,\H) = \minim{\W_2, \H} \,\, \overline{f}_1(\W_2,\H)$ is translated to proving that \eqref{app_Eq_subprob_from_lin_model} has a non-negative minimizer.
To this end, we use the following properties of the minimizers that has been shown in the proof of Theorem~\ref{thm_nc_of_deeper}:
$\tilde{\H}^*$ has the structure $\tilde{\H}^* = \overline{\H} \otimes \1_n^\top$ and
\begin{align}
    \tilde{\W}_2^* &= %
    \sigma_W^*
    \R^\top \in \mathbb{R}^{K \times d} \\
    \overline{\H} &= %
    \sigma_{\overline{H}}^* \R
    \in \mathbb{R}^{d \times K}
\end{align}
where %
\tomr{$\sigma_W^*$ and $\sigma_{\overline{H}}^*$ are non-negative scalars (singular values)}
and $\R \in \mathbb{R}^{d \times K}$ can be any orthonormal matrix ($\R^\top\R = \I_K$). (The freedom in $\R$ is due to the fact that the problem can be expressed only in terms of the singular values). 
Now, we can get the existence of the desired non-negative matrices
by considering
$$
\R = \begin{bmatrix}
   \I_K \\
   \0_{(d-K) \times K} 
   \end{bmatrix},
$$
for which
$$
\tilde{\W}_2^* = \sigma_W^* %
\begin{bmatrix}
   \I_K &
   \0_{K \times (d-K)} 
   \end{bmatrix}
$$
$$
\tilde{\W}_1^*\tilde{\H}_1^* = \sigma_{\overline{H}}^*
\begin{bmatrix}
   \I_K \\
   \0_{(d-K) \times K} 
   \end{bmatrix} 
   \otimes \1_n^\top
$$
are clearly non-negative. 

The above result also implies that the set of minimizers of \eqref{app_Eq_subprob_hybrid} is a subset of the set of minimizers of \eqref{app_Eq_subprob_from_lin_model}.
Thus, such minimizers carry the property that $\sigma(\H^*)$ is a collapsed orthogonal matrix: $\sigma(\H^*) = \overline{\H} \otimes \1_n^\top$ for some non-negative $\overline{\H} \in \mathbb{R}^{d \times K}$ that obeys $\overline{\H}_2^\top \overline{\H}_2 = \alpha \I_k$ for some positive scalar $\alpha$.

Now, let us compare $\overline{f}_1(\W_2,\H)$ with $\tilde{f}_1(\W_2,\H)$. 
Observe that, 
intuitively,  
$\minim{\W_2, \H} \,\, \tilde{f}_1(\W_2,\H) \leq \minim{\W_2, \H} \,\, \overline{f}_1(\W_2,\H)$
because, apparently, %
negative entries of $\H$ can be used to reduce $ \| \H \|_*$. %
Formally, %
we have that
\begin{align*}
    \minim{\W_2, \H} \,\, \tilde{f}_1(\W_2,\H)  &= \minim{\W_2, \tilde{\H}, \H: \sigma(\H)=\sigma(\tilde{\H})} \,\, \overline{f}_1(\W_2,\tilde{\H}) + \sqrt{\lambda_{W_1} \lambda_{H_1}} ( \| \H \|_* - \| \sigma(\tilde{\H}) \|_* ) \\ \nonumber
    &= \minim{\W_2, \tilde{\H}} \,\, \overline{f}_1(\W_2,\tilde{\H}) + \minim{\H: \sigma(\H)=\sigma(\tilde{\H})} \,\, \sqrt{\lambda_{W_1} \lambda_{H_1}} ( \| \H \|_* - \| \sigma(\tilde{\H}) \|_* )
\end{align*}
where the inner minimization is non-positive (observe that it equals zero for the feasible point $\H = \sigma(\tilde{\H})$).

\tomr{
However, in any numerical experiment that we performed (much beyond those that are presented in this paper) we observed that a minimizer of $\tilde{f}_1(\W_2,\H)$, namely $(\W_2^*,\H^*)=(\W_2^*, \W_1^*\H_1^*)$ constructed from a minimizer of \eqref{Eq_prob_nonlin}, obeys that $\|\H^*\|_* = \|\sigma(\H^*)\|_*$.
This implies that negative entries of $\H$ in $\tilde{f}_1(\W_2,\H)$ do not contribute to reducing the objective. 
Therefore, $\minim{\W_2, \H} \,\, \tilde{f}_1(\W_2,\H) = \minim{\W_2, \H} \,\, \overline{f}_1(\W_2,\H)$, 
and $(\W_2,\H)=(\W_2^*, \sigma(\H^*))$ is also a minimizer of $\overline{f}_1(\W_2,\H)$.}

\tomr{
Currently, the property $\|\H^*\|_* = \|\sigma(\H^*)\|_*$ is considered as an assumption of the theorem. 
Yet, we conjecture that it necessarily holds, such that a version of the theorem without this assumption may be published in the future.} 

From the above equivalences between $\tilde{f}_1$ and ${f}_1$ through $\overline{f}_1$, we obtain the desired result that if $(\W_2^*,\H^*)$ is a (global) minimizer of $\tilde{f}_1(\W_2,\H)$, then $(\W_2^*,\sigma(\H^*))$ is a (global) minimizer of $f_1(\W_2,\H)$ \tomr{and thus carries the orthogonal collapse properties}.

\newpage

\section{Proof of Theorem~\ref{thm_nc_of_asymp}}
\label{app:proof_asymp}

As stated in the theorem, we consider \eqref{Eq_prob} with $\lambda_H = \frac{\tilde{\lambda}_H}{n}$: 
\begin{align}
\label{app_Eq_prob_modified}
    \minim{\W,\H} \,\, %
\frac{1}{2Kn} \| \W \H - \Y \|_F^2 + \frac{\lambda_W}{2} \|\W\|_F^2 + \frac{\tilde{\lambda}_H}{2n} \|\H\|_F^2, %
\end{align}
and denote by $(\W^*,\H^*)$ a global minimizer.
From Theorem~\ref{thm_nc_of} we have that $\H^*=\overline{\H}\otimes \1_n^\top$ and $\W^* = \sqrt{\tilde{\lambda}_H/\lambda_W} \overline{\H}^\top$ for some $\overline{\H}\in \mathbb{R}^{d \times K}$ that obeys $\overline{\H}^\top \overline{\H} = \rho \I_K = (1-K\sqrt{\tilde{\lambda}_H\lambda_W})\sqrt{\frac{\lambda_W}{\tilde{\lambda}_H}} \I_K = (\sqrt{\frac{\lambda_W}{\tilde{\lambda}_H}} - K \lambda_W) \I_K$.

Note that for any value of $n$, we have that 
$(\W,\H) = (\W^*,\H^*_n := \overline{\H}\otimes \1_n^\top)$ is a global minimizer of \eqref{app_Eq_prob_modified}.  

We turn to examine \eqref{app_Eq_prob_modified} for fixed %
$\H$ 
and minimization only w.r.t.~$\W$. Namely,
\begin{align}
\label{app_Eq_fixed_H}
\hat{\W}_n = \argmin{\W} \,\, %
\frac{1}{2Kn} \| \W {\H} - \Y \|_F^2 + \frac{\lambda_W}{2} \|\W\|_F^2.
\end{align}
This strongly convex problem has the following closed-form solution
\begin{align}
\label{app_Eq_fixed_H_sol}
\hat{\W}_n(\H) = \frac{1}{Kn} \Y \H^\top \left ( \frac{1}{Kn} \H \H^\top + \lambda_W \I_d \right )^{-1}.
\end{align} 

Recalling that $\Y=\I_K \otimes \1_n^\top$,
for $\H=\H^*_n=\overline{\H}\otimes \1_n^\top$ we have that
\begin{align}
\label{app_Eq_fixed_H_sol1}
\hat{\W}_n(\H^*_n) &= \frac{1}{Kn} (\I_K \overline{\H}^\top \otimes \1_n^\top \1_n) \left ( \frac{1}{Kn} (\overline{\H} \overline{\H}^\top \otimes \1_n^\top \1_n) + \lambda_W \I_d \right )^{-1} \\ \nonumber
& = \frac{1}{K} \overline{\H}^\top \left ( \frac{1}{K} \overline{\H} \overline{\H}^\top + \lambda_W \I_d \right )^{-1}.
\end{align} 
This expression can be simplified as follows
\begin{align}
\label{app_Eq_fixed_H_sol2}
\hat{\W}_n(\H^*_n) %
&= \frac{1}{K\lambda_W} \overline{\H}^\top \left ( \frac{1}{K\lambda_W} \overline{\H} \overline{\H}^\top + \I_d \right )^{-1} \\ \nonumber
&= \frac{1}{K\lambda_W} \left ( \frac{\rho}{K\lambda_W}\I_K  + \I_K \right )^{-1}\overline{\H}^\top \\ \nonumber
&= \frac{1}{K\lambda_W + \rho} \overline{\H}^\top, 
\end{align} 
where the second equality follows from the ``push-through identity" and the fact that $\overline{\H}^\top \overline{\H} = \rho \I_K$. 
Note that, as expected, if we fixed $\H$ to be $\H^*_n$, a global minimizer of the joint optimization w.r.t.~$(\W,\H)$, then we get $\hat{\W}_n=\W^*$. Indeed, $\hat{\W}_n(\H^*_n) =  \frac{1}{K\lambda_W + \rho} \overline{\H}^\top = \frac{1}{\sqrt{\lambda_W/\tilde{\lambda}_H}} \overline{\H}^\top = \W^*$.

Let us turn to examine $\hat{\W}_n$ for $\H=\tilde{\H}_n$ where 
$\tilde{\H}_n := \overline{\H}\otimes \1_n^\top + \E_n$
with $\E_n \in \mathbb{R}^{d \times Kn}$ whose entries are i.i.d.~random variables with zero mean, variance $\sigma_e^2$, and finite fourth moment. 
Hence, $\Expp{\E_n}=\0$ and $\Expp{\E_n\E_n^\top}=Kn\sigma_e^2\I_d$.

Substituting $\H=\tilde{\H}_n$ in \eqref{app_Eq_fixed_H_sol}, we get
\begin{align}
\label{app_Eq_fixed_H_sol3}
\hat{\W}_n(\tilde{\H}_n) = \frac{1}{Kn} \Y \tilde{\H}_n^\top \left ( \frac{1}{Kn} \tilde{\H}_n \tilde{\H}_n^\top + \lambda_W \I_d \right )^{-1}.
\end{align} 
Based on the law of large numbers, as well as the convergence of sample covariance matrices of random variables with finite fourth moment \cite{vershynin2012close}, we have the following limits
\begin{align}
\label{app_Eq_fixed_H_limits}
&\frac{1}{Kn} \Y \tilde{\H}_n^\top = \frac{1}{K}\overline{\H} + \frac{1}{Kn} (\I_K \otimes \1_n^\top) \E_n^\top \xrightarrow[n \xrightarrow{} \infty]{a.s.} \frac{1}{K}\overline{\H}, \\ \nonumber
&\frac{1}{Kn} \tilde{\H}_n \tilde{\H}_n^\top = \frac{1}{K}\overline{\H}\overline{\H}^\top + \frac{1}{Kn} (\overline{\H} \otimes \1_n^\top) \E_n^\top + \frac{1}{Kn} \E_n (\overline{\H}^\top \otimes \1_n) + \frac{1}{Kn} \E_n \E_n^\top
\xrightarrow[n \xrightarrow{} \infty]{a.s.} \frac{1}{K}\overline{\H}\overline{\H}^\top + \sigma_e^2 \I_d.
\end{align} 
Therefore, 
\begin{align}
\label{app_Eq_fixed_H_sol4}
\hat{\W}_n(\tilde{\H}_n) \xrightarrow[n \xrightarrow{} \infty]{a.s.} \frac{1}{K}\overline{\H} \left ( \frac{1}{K}\overline{\H}\overline{\H}^\top + \sigma_e^2 \I_d + \lambda_W \I_d \right )^{-1}.
\end{align} 
Repeating the simplifications of \eqref{app_Eq_fixed_H_sol2} (with $\sigma_e^2 + \lambda_W$ in lieu of $\lambda_W$) we get
\begin{align}
\label{app_Eq_fixed_H_sol5}
\hat{\W}_n(\tilde{\H}_n) \xrightarrow[n \xrightarrow{} \infty]{a.s.} \frac{1}{K(\sigma_e^2 + \lambda_W) + \rho} \overline{\H}^\top
= \frac{1}{K\sigma_e^2 + \sqrt{\lambda_W/\tilde{\lambda}_H} } \overline{\H}^\top.
\end{align} 
Comparing \eqref{app_Eq_fixed_H_sol5} with $\W^* = \frac{1}{\sqrt{\lambda_W/\tilde{\lambda}_H}} \overline{\H}^\top$, we get the result that is stated in the theorem:
$$
\hat{\W}_n(\tilde{\H}_n) \xrightarrow[n \xrightarrow{} \infty]{a.s.}
\frac{\sqrt{\lambda_W/\tilde{\lambda}_H}}{K\sigma_e^2 + \sqrt{\lambda_W/\tilde{\lambda}_H}} \W^*
= \frac{1}{1+\sigma_e^2 K \sqrt{\tilde{\lambda}_H/\lambda_W}} \W^*.
$$

\subsection{Intuitive explanation of the result}
\label{app:proof_asymp_intuitive}

The intuition that the asymptotic consequence of $\E_n$, %
i.e., the deviation from ``perfectly" collapsed features, 
 will only be some attenuation of $\W^*$ can also be seen from expending the quadratic term in \eqref{app_Eq_fixed_H} for $\H = \overline{\H}\otimes \1_n^\top + \E_n$ and eliminating the terms that are linear in the zero-mean $\E_n$. 
Specifically, observe that
\begin{align}
\label{app_Eq_fixed_H_w_E}
&\frac{1}{2Kn} \| \W (\overline{\H}\otimes \1_n^\top + \E_n) - \Y \|_F^2 + \frac{\lambda_W}{2} \|\W\|_F^2 =
\frac{1}{2Kn} \| (\W \overline{\H}\otimes \1_n^\top - \Y) + \W \E_n \|_F^2 + \frac{\lambda_W}{2} \|\W\|_F^2 \\ \nonumber
&= \frac{1}{2Kn} \| \W \overline{\H}\otimes \1_n^\top - \Y \|_F^2
+ \frac{1}{Kn} \tr ( \E_n^\top \W^\top (\W \overline{\H}\otimes \1_n^\top - \Y) )
+ \frac{1}{2Kn} \| \W \E_n \|_F^2 + \frac{\lambda_W}{2} \|\W\|_F^2.
\end{align}
Now, suppose we take the limit $n \xrightarrow{} \infty$ only in the terms that include $\E_n$, we would get
\begin{align}
\label{app_Eq_fixed_H_limits2}
&\frac{1}{Kn} \tr ( \E_n^\top \W^\top (\W \overline{\H}\otimes \1_n^\top - \Y) ) \xrightarrow[n \xrightarrow{} \infty]{a.s.} \0, \\ \nonumber
&\frac{1}{2Kn} \| \W \E_n \|_F^2 = \frac{1}{2Kn} \tr ( \E_n \E_n^\top \W^\top \W )
\xrightarrow[n \xrightarrow{} \infty]{a.s.}  \frac{1}{2} \sigma_e^2 \tr (\W^\top \W),
\end{align} 
under which \eqref{app_Eq_fixed_H_w_E} can be interpreted as
\begin{align}
\label{app_Eq_fixed_H_w_E2}
\frac{1}{2Kn} \| \W \overline{\H}\otimes \1_n^\top - \Y \|_F^2
+ \frac{\sigma_e^2}{2} \| \W \|_F^2 + \frac{\lambda_W}{2} \|\W\|_F^2.
\end{align}
This hints that, asymptotically, the minimizer $\hat{\W}$ would be similar to the minimizer that is obtained for the case of $\sigma_e=0$ (as shown above, this is in fact $\W^*$) up to some scaling.

The above intuition is aligned with the results of Theorem~\ref{thm_nc_of_asymp}.
Yet, contrary to the proof of the theorem, it does not require having a closed-form expression for the minimizer $\hat{\W}$.
Interestingly, this allows us to generalize it to the extended UFMs.
Specifically, consider the model in \eqref{Eq_prob_deeper} with fixed $\H_1 = \overline{\H}_1 \otimes \1_n^\top + \E_n$, where $(\W_2^*,\W_1^*,\H_1^*=\overline{\H}_1 \otimes \1_n^\top)$ is a global minimizer (as stated in Theorem~\ref{thm_nc_of_deeper}).
Namely,
\begin{align}
\frac{1}{2Kn} \| \W_2 \W_1 (\overline{\H}_1 \otimes \1_n^\top + \E_n) - \Y \|_F^2 + \frac{\lambda_{W_2}}{2} \|\W_2\|_F^2 + \frac{\lambda_{W_1}}{2} \|\W_1\|_F^2
\end{align}
Repeating the above heuristic, asymptotically, we may interpret this objective as 
\begin{align}
\frac{1}{2Kn} \| \W_2 \W_1 \overline{\H}_1 \otimes \1_n^\top - \Y \|_F^2 
+ \frac{\sigma_e^2}{2} \| \W_2 \W_1 \|_F^2
+ \frac{\lambda_{W_2}}{2} \|\W_2\|_F^2 + \frac{\lambda_{W_1}}{2} \|\W_1\|_F^2,
\end{align}
which maintains many of the properties of the model analyzed in Theorem~\ref{thm_nc_of_deeper}, such as invariance to various orthogonal transformations and the ability to restate the problem as optimization on the singular values of $\W_2, \W_1$ and $\H_1$ (as done in the proof in Appendix~\ref{app:proof3}). 
Again, this hints that, asymptotically, the minimizer $(\hat{\W_2}, \hat{\W_1})$ would be similar to the minimizer that is obtained for the case without $\E_n$, up to some scaling.
While we defer a rigorous study of the effect of fixed features matrix $\H_1$ on the extended UFMs for future research, the discussion here demonstrates the feasibility of this goal.

\newpage

\section{More Numerical Results for the Unconstrained Features Model}
\label{app:exp}

In this section, we present more numerical results, for experiments that are similar to those in Section~\ref{sec:exps} but with different configurations. The definitions of the NC metrics appear in Section~\ref{sec:exps}.

Figure~\ref{fig:app_thm1} corroborates Theorem~\ref{thm_nc_of} for $K=5, d=20, n=100$, %
$\lambda_W=0.005$ and $\lambda_H=0.001$ (no bias is used, equivalently $\lambda_b \xrightarrow{} \infty$). Both $\W$ and $\H$ are initialized with standard normal distribution and are optimized with plain gradient descent with step-size 0.1.

Figure~\ref{fig:app_thm2} corroborates Theorem~\ref{thm_nc_simplex_etf} for $K=5, d=20, n=100$, %
$\lambda_W=0.005$ and $\lambda_H=0.001$ and $\lambda_b=0$. All $\W$, $\H$ and $\b$ are initialized with standard normal distribution and are optimized with plain gradient descent with step-size 0.1.

Figure~\ref{fig:app_thm3} corroborates %
Theorem~\ref{thm_nc_of_deeper}
for $K=5, d=20, n=100$, %
$\lambda_{W_2}=0.005$, $\lambda_{W_1}=0.0025$ and $\lambda_{H_1}=0.001$ (no bias is used). All $\W_2$, $\W_1$ and $\H_1$ are initialized with standard normal distribution scaled by 0.1 and are optimized with plain gradient descent with step-size 0.1.
The metrics are computed for $\W=\W_2$ and $\H=\W_1\H_1$. 
We also compute $NC_1$ and $NC_2^{OF}$ for the first layer's features $\H=\H_1$.
The collapse of $\W_1\H_1$ and $\H_1$ to OF (demonstrated by NC1 and NC2 converging to zero) is in agreement with Theorems~\ref{thm_nc_of_deeper}. 

Figure~\ref{fig:app_thm4} %
corroborates Theorem~\ref{thm_nc_of_deeper_nonlin}
that considers the nonlinear model in \eqref{Eq_prob_nonlin}.
We use $K=5, d=20, n=100$,  $\lambda_{W_2}=0.005$, $\lambda_{W_1}=0.0025$, and $\lambda_{H_1}=0.001$ (no bias is used). All $\W_2$, $\W_1$ and $\H_1$ are initialized with standard normal distribution scaled by 0.1, 0.1 and 0.2, respectively, and are optimized with plain gradient descent with step-size 0.1.
The metrics are computed for $\W=\W_2$ and $\H=\sigma(\W_1\H_1)$.  %
We also compute $NC_1$ and $NC_2^{OF}$ for the first layer's features $\H=\H_1$ (as well as for the pre-ReLU $\H=\W_1\H_1$).

Finally, in Figure~\ref{fig:cifar_mse_noBias} we show the similarity of the NC metrics that are obtained for the (nonlinear) extended UFM and metrics obtained by a practical well-trained DNN, namely ResNet18 \citep{he2016deep} (composed of 4 ResBlocks), trained on CIFAR10 dataset via SGD with learning rate 0.05 (divided by 10 every 40 epochs) and weight decay ($L_2$ regularization) of 5e-4, MSE loss and no bias in the FC layer.

\begin{figure}[t]
  \centering
  \begin{subfigure}[b]{0.25\linewidth}
    \centering\includegraphics[width=100pt]{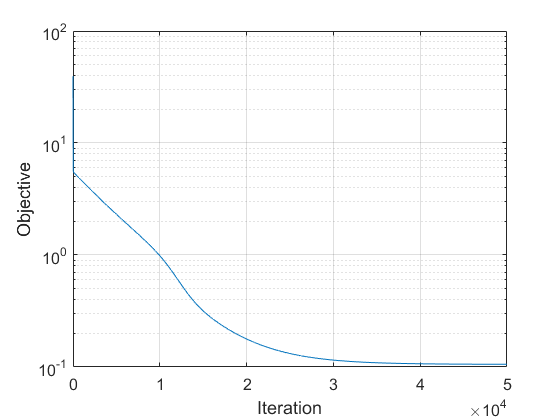}
   \end{subfigure}%
  \begin{subfigure}[b]{0.25\linewidth}
    \centering\includegraphics[width=100pt]{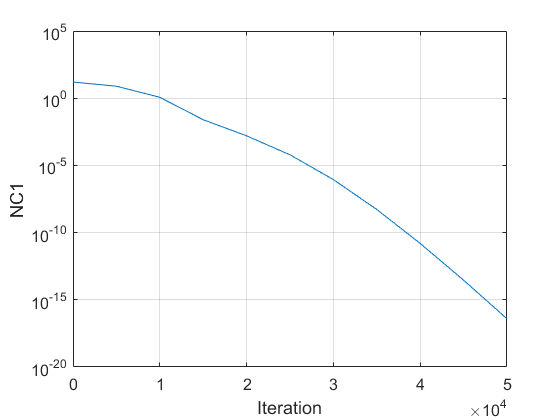}
   \end{subfigure}%
  \begin{subfigure}[b]{0.25\linewidth}
    \centering\includegraphics[width=100pt]{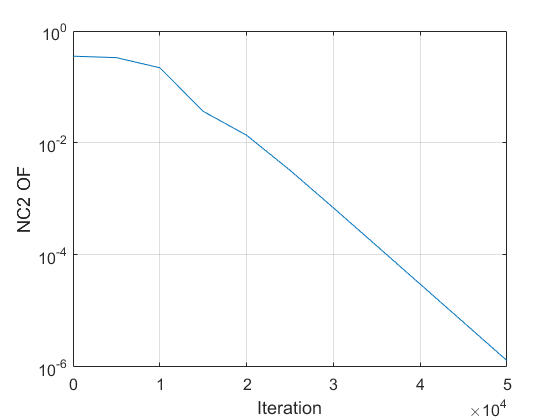}
   \end{subfigure}%
  \begin{subfigure}[b]{0.25\linewidth}
    \centering\includegraphics[width=100pt]{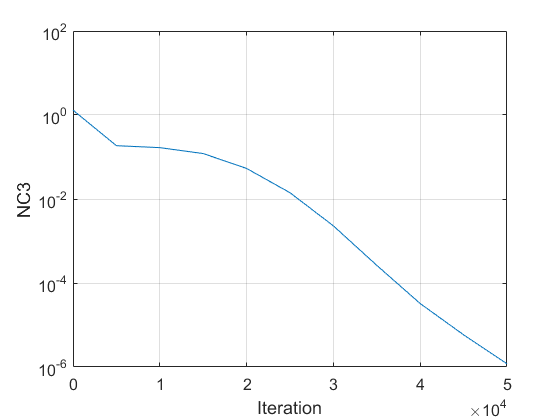}
   \end{subfigure}%
    \caption{
    Verification of Theorem~\ref{thm_nc_of} (MSE loss with no bias).
    From left to right: the objective value, NC1 (within-class variability), NC2 (similarity of the features to OF), and NC3 (alignment between the weights and the features).
    }
\label{fig:app_thm1}     

\vspace{4mm}

  \centering
  \begin{subfigure}[b]{0.25\linewidth}
    \centering\includegraphics[width=100pt]{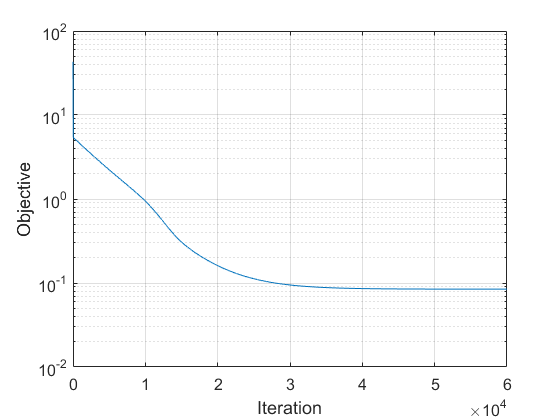}
   \end{subfigure}%
  \begin{subfigure}[b]{0.25\linewidth}
    \centering\includegraphics[width=100pt]{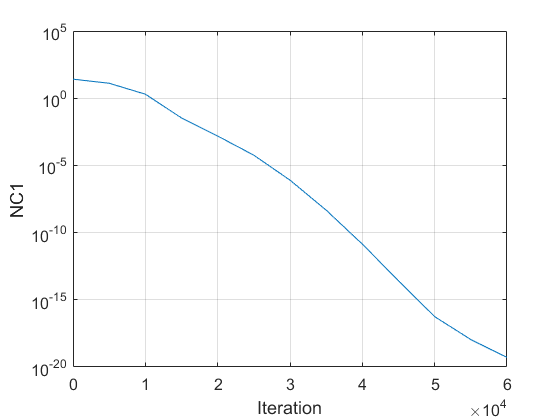}
   \end{subfigure}%
  \begin{subfigure}[b]{0.25\linewidth}
    \centering\includegraphics[width=100pt]{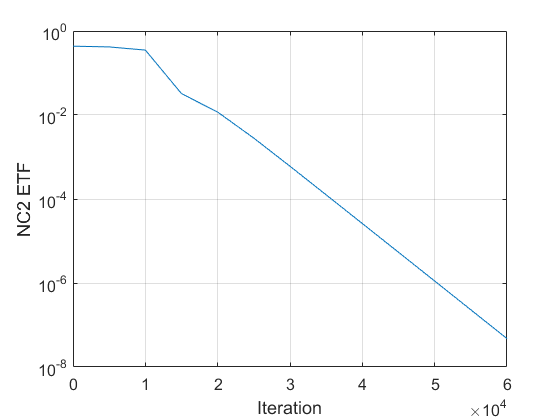}
   \end{subfigure}%
  \begin{subfigure}[b]{0.25\linewidth}
    \centering\includegraphics[width=100pt]{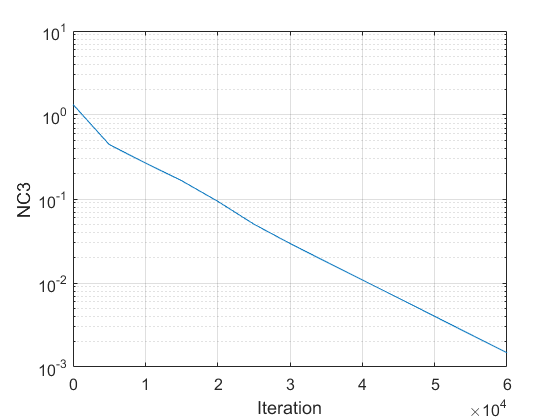}
   \end{subfigure}%
    \caption{
    Verification of Theorem~\ref{thm_nc_simplex_etf} (MSE loss with unregularized bias).
    From left to right: the objective value, NC1 (within-class variability), NC2 (similarity of the features to simplex ETF), and NC3 (alignment between the weights and the features).    
    }
\label{fig:app_thm2}     

\vspace{4mm}

  \centering
  \begin{subfigure}[b]{0.25\linewidth}
    \centering\includegraphics[width=100pt]{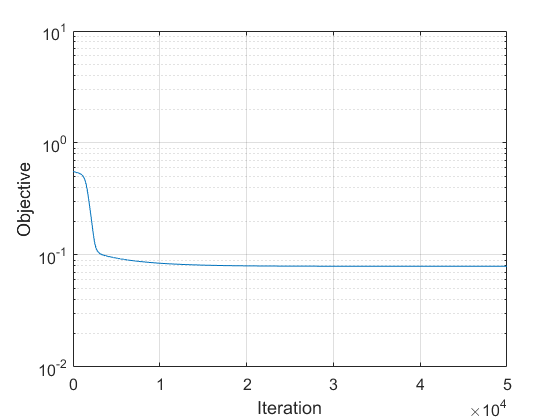}%
   \end{subfigure}%
  \begin{subfigure}[b]{0.25\linewidth}
    \centering\includegraphics[width=100pt]{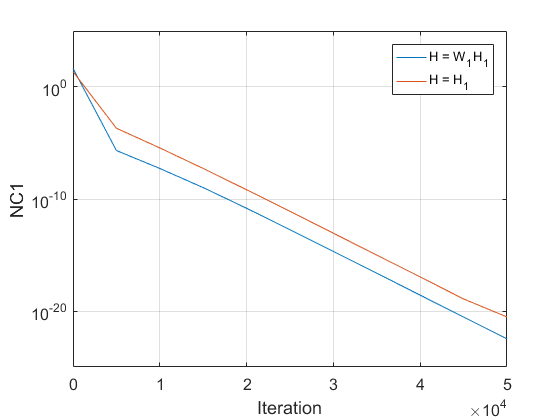}
   \end{subfigure}%
  \begin{subfigure}[b]{0.25\linewidth}
    \centering\includegraphics[width=100pt]{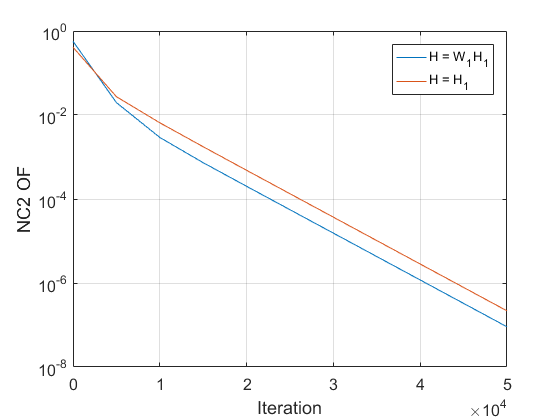}
   \end{subfigure}%
  \begin{subfigure}[b]{0.25\linewidth}
    \centering\includegraphics[width=100pt]{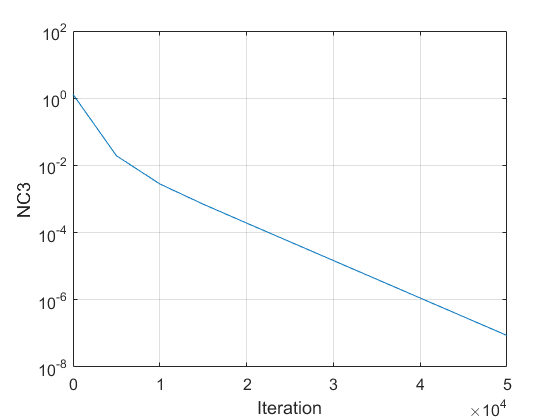}
   \end{subfigure}%
    \caption{
    Verification of Theorem~\ref{thm_nc_of_deeper} %
    (two levels of features).
    From left to right: the objective value, NC1 (within-class variability), NC2 (similarity of the features to OF), and NC3 (alignment between the weights and the features).
    }
\label{fig:app_thm3}     

\vspace{4mm}

  \centering
  \begin{subfigure}[b]{0.25\linewidth}
    \centering\includegraphics[width=100pt]{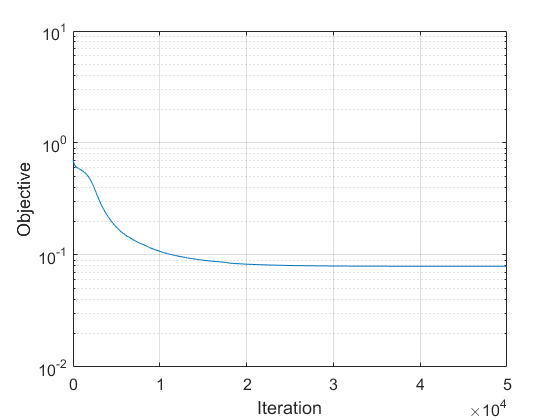}%
   \end{subfigure}%
  \begin{subfigure}[b]{0.25\linewidth}
    \centering\includegraphics[width=100pt]{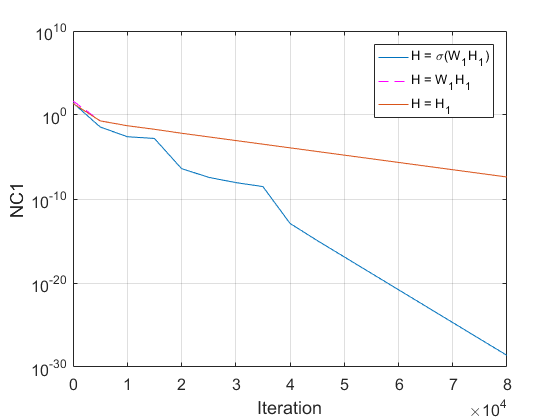}
   \end{subfigure}%
  \begin{subfigure}[b]{0.25\linewidth}
    \centering\includegraphics[width=100pt]{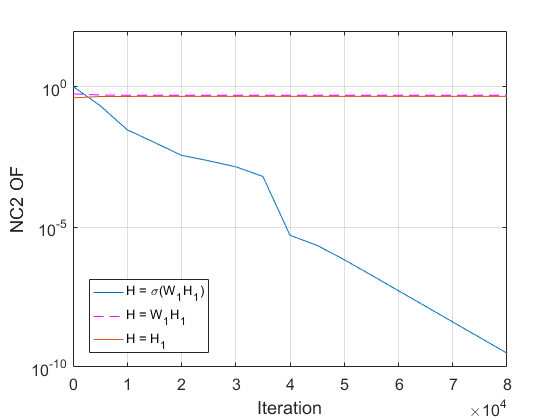}
   \end{subfigure}%
  \begin{subfigure}[b]{0.25\linewidth}
    \centering\includegraphics[width=100pt]{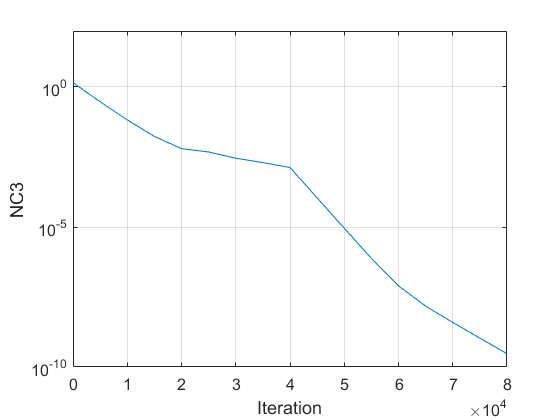}
   \end{subfigure}%
    \caption{
    Verification of Theorem~\ref{thm_nc_of_deeper_nonlin} (two levels of features with ReLU activation).
    From left to right: the objective value, NC1 (within-class variability), NC2 (similarity of the features to OF), and NC3 (alignment between the weights and the features).
    }
\label{fig:app_thm4}     
\end{figure}

\begin{figure}[t]
  \centering
  \begin{subfigure}[b]{0.25\linewidth}
    \centering\includegraphics[width=107pt]{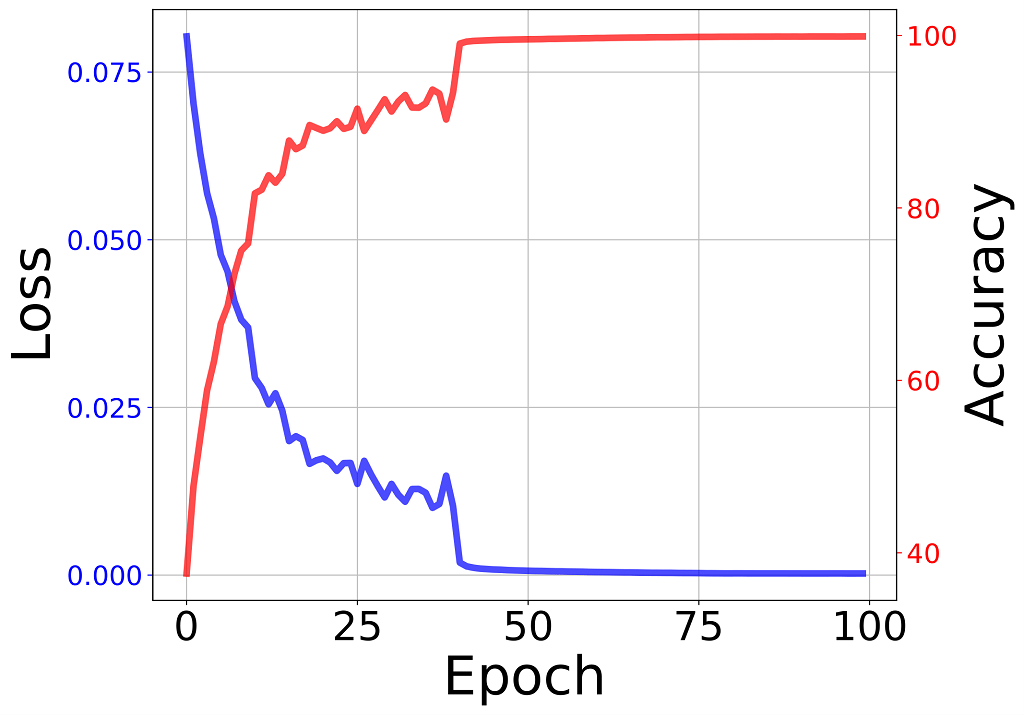}
  \end{subfigure}%
  \begin{subfigure}[b]{0.25\linewidth}
    \centering\includegraphics[width=96pt]{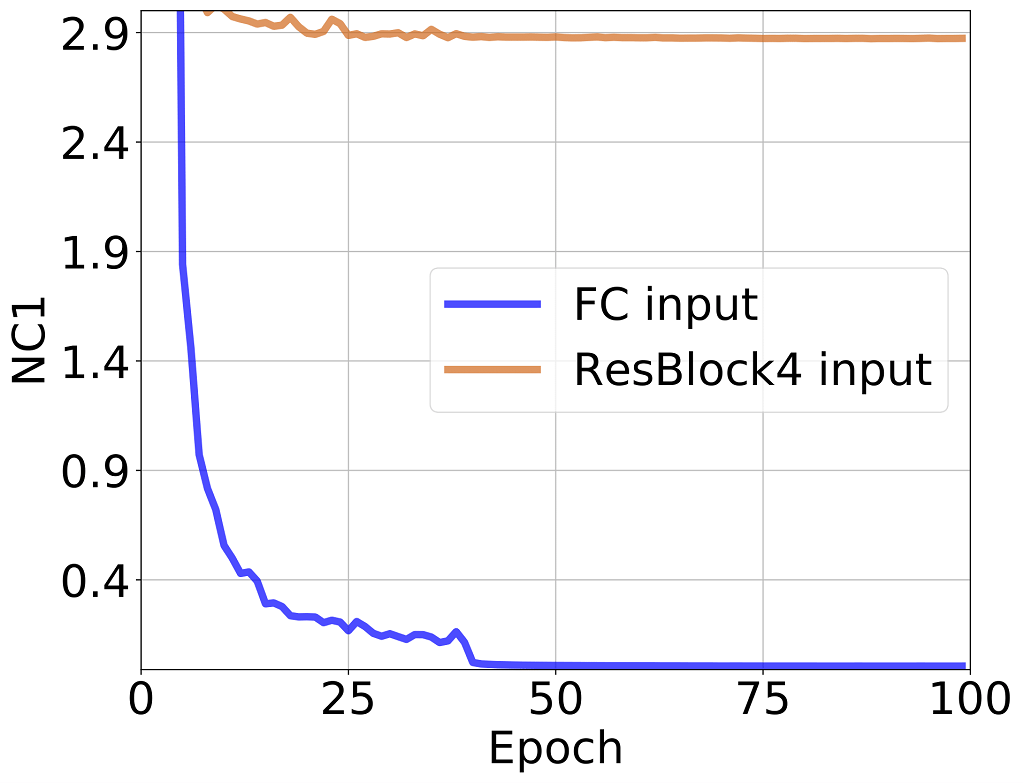}
  \end{subfigure}%
  \begin{subfigure}[b]{0.25\linewidth}
    \centering\includegraphics[width=96pt]{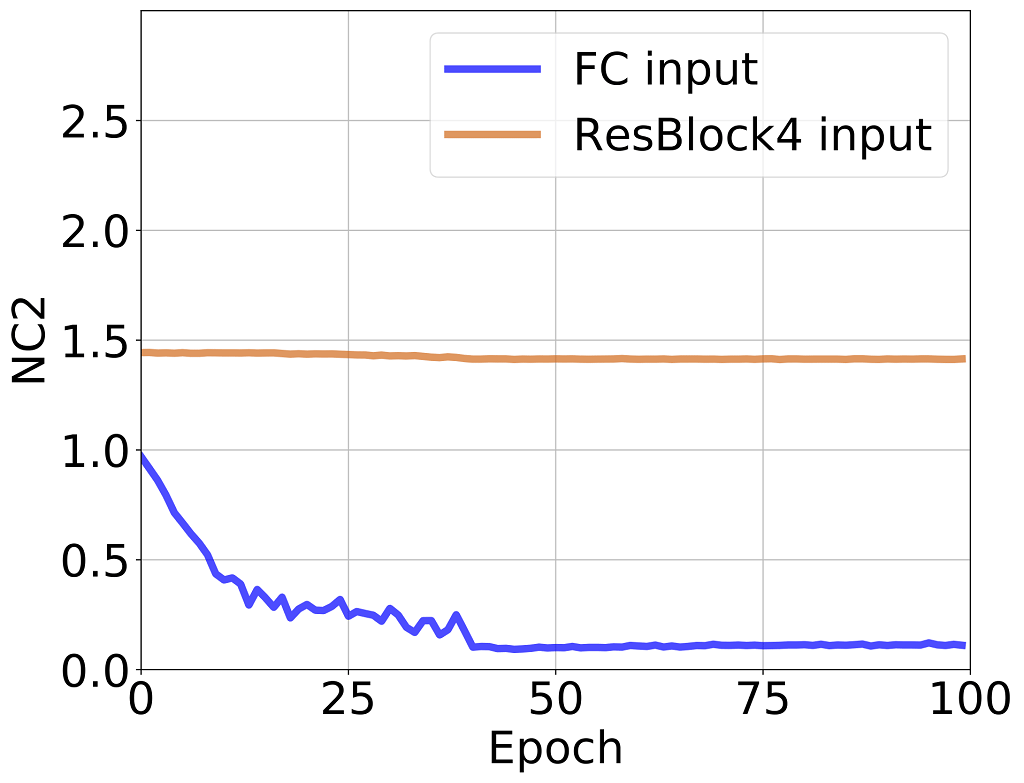}
  \end{subfigure}%
  \begin{subfigure}[b]{0.25\linewidth}
    \centering\includegraphics[width=96pt]{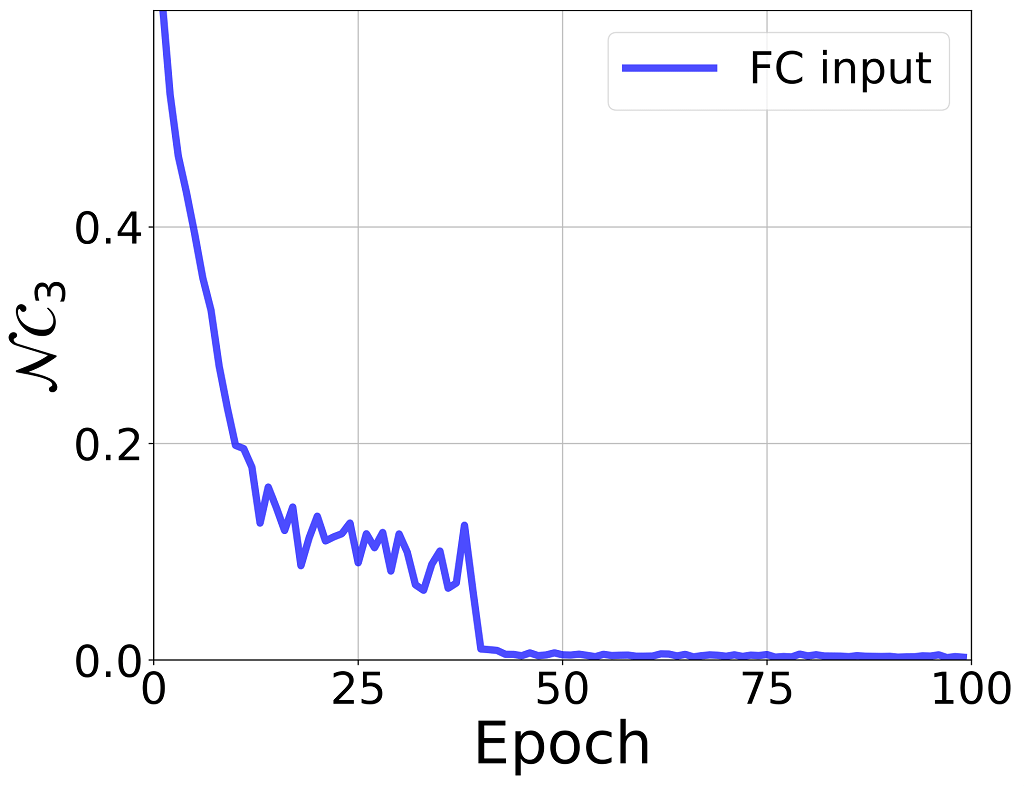}
  \end{subfigure}%
    \caption{
    NC metrics for ResNet18 trained on CIFAR10 with MSE loss, weight decay, and no bias.
    From left to right: training's objective value and accuracy, NC1 (within-class variability), NC2 (similarity of the centered features to simplex ETF), and NC3 (alignment between the weights and the features).    
    }
\label{fig:cifar_mse_noBias}     
\end{figure}

\end{document}